\documentclass{article}

\usepackage{microtype}
\usepackage{graphicx}
\graphicspath{{../}} 
\usepackage{subcaption}
\usepackage{booktabs} 

\usepackage{hyperref}

\usepackage[preprint]{icml2026}

\usepackage{amsmath}
\usepackage{amssymb}
\usepackage{mathtools}
\usepackage{amsthm}
\usepackage{enumitem}
\usepackage{multirow}

\usepackage[capitalize,noabbrev]{cleveref}

\theoremstyle{plain}
\newtheorem{theorem}{Theorem}[section]
\newtheorem{proposition}[theorem]{Proposition}
\newtheorem{lemma}[theorem]{Lemma}
\newtheorem{corollary}[theorem]{Corollary}
\theoremstyle{definition}

\newtheorem{assumption}[theorem]{Assumption}
\crefname{assumption}{Assumption}{Assumptions}
\theoremstyle{remark}
\newtheorem{remark}[theorem]{Remark}
\crefname{remark}{Remark}{Remarks}

\usepackage[textsize=tiny]{todonotes}

\icmltitlerunning{Canary}

\begin{document}

\twocolumn[
  \icmltitle{Cantelli Constrained Policy Optimization}



  \icmlsetsymbol{equal}{*}

  \begin{icmlauthorlist}
    \icmlauthor{Rohan Tangri}{ox}
    \icmlauthor{Jan-Peter Calliess}{ox}
  \end{icmlauthorlist}

  \icmlaffiliation{ox}{Machine Learning Research Group, University of Oxford, Oxford, United Kingdom}

  \icmlcorrespondingauthor{Rohan Tangri}{rohan.tangri@reub.ox.ac.uk}
  \icmlcorrespondingauthor{Jan-Peter Calliess}{jan@robots.ox.ac.uk}

  \icmlkeywords{Machine Learning, ICML}

  \vskip 0.3in ]



\printAffiliationsAndNotice{}  

\begin{abstract}
We introduce \emph{Canary}, a risk-averse method designed to optimize
Value-at-Risk (VaR) constrained reinforcement learning (RL) problems. We employ
Cantelli's inequality to obtain a tractable, conservative and smooth bound on
the VaR constraint based on the first two moments of the cost return. This
yields a constraint estimator that remains stable with tight violation
thresholds in dense cost regimes. Extending the trust-region framework of the
Constrained Policy Optimization (CPO) method, we further provide worst-case
bounds for both policy improvement and constraint violation during the training
process. Empirically during training, Canary is the only method that reliably satisfies
the VaR constraint in every environment tested.
\end{abstract}

\section{Introduction}
\label{intro}

\emph{Reinforcement Learning (RL)} has demonstrated remarkable utility in
optimizing complex decision-making simulations such as autonomous driving
\cite{rldriving-survey}, robotics \cite{gae, ppo, trpo} and finance
\cite{rlfin-survey, rl-hft}. The standard RL objective typically seeks to
maximize the expected return. While effective in simulations, this formulation
is often insufficient for real-life scenarios, especially when bounding the probability
of a costly catastrophic failure is a prerequisite for deployment. In these
settings, designing a risk-aware reward function via a soft constraint can be impractical. The cost of tuning may be too high, while choosing a setting a priori might lead to intractable learning or unsafe policies. This motivates the use of hard constraints on tail risks, which are often a more natural formulation to facilitate safety.

A natural hard constraint to achieve safety is a \emph{Value-at-Risk
(VaR)} constraint, where the objective is to \emph{first} satisfy a bound on the
probability that the cumulative cost exceeds a limit, and only then to maximize
reward \cite{var-cheb, tagawa}. Here, cost is a distinct penalty for
\emph{unsafe} states that might be distinct from the reward signal of the RL
agent's objective. While the term VaR is chiefly used in finance, the way we employ
it is general and extends naturally to probabilistic safety constraints in
control and robotics. For example, in finance we may wish to avoid a margin
call, while in robotics we would want to avoid actions that destroy physical
components.

Existing methods struggle precisely in the practically important regime of
strict (small) violation thresholds. Indicator based methods
\cite{geibel-wysotzki, var-lagrange} suffer from sample inefficiency through
cost sparsity \cite{her} and instability
as the violation probability shrinks (\cref{sec:relative-variance}). In finance, \emph{Conditional Value-at-Risk (CVaR)} is often favored
as a tractable conservative bound on VaR \cite{cvar1, cvar2}. However, empirical
CVaR estimators are also unstable in tight VaR violation threshold regimes (\cref{sec:relative-variance}).

We introduce \emph{Canary},\footnote{Like a canary in a coal mine, the Cantelli
constraint senses the approaching probabilistic tail risk boundary and alerts the
policy.} a novel approach with primary contributions:
\begin{itemize}[nosep]
\item We formulate a smooth and tractable safe approximation \citep{nemirovski2012safe} of VaR constraints in RL using the Cantelli inequality: the tightest possible distribution-free bound relying only on the first two moments of the cost return distribution \cite{bertsimas2005}  (\cref{sec:method}).
\item We provide a worst-case analysis of the constraint violation
during training, extending the guarantees of the trust-region based CPO method
\cite{cpo} to the VaR setting (\cref{sec:method}).
\item We characterize the relative variance of VaR constraint
estimators, showing indicator and CVaR estimators collapse in low-violation
regimes while Cantelli remains stable (\cref{sec:relative-variance}).
\item We empirically demonstrate that Canary settles at the deepest safety level, achieves the least converged violation-probability jitter, and maintains safety most reliably on all three benchmark environments (\cref{sec:results}).
\end{itemize}

\section{Related Work}

\paragraph{Trust region methods.}
Safe RL typically constrains expected cost returns using Lagrangian methods
\cite{lagrange-constrained}, which offer no bounds on violation during training.
To resolve this, Constrained Policy Optimization (CPO) \cite{cpo} employs
trust-region theory \cite{trpo} to guarantee worst-case limits on training-time
violations. Canary directly inherits these rigorous trust-region guarantees, but
fundamentally extends them from average-case expected costs to worst-case tail
risk.

\paragraph{Value-at-Risk (VaR) optimization.}
Constraining the probability of violation
casts the problem as a chance-constrained MDP, where dynamic programming or search bounds the violation
probability under a known model \cite{ono-ccdp, santana-rao}. These
methods do not extend to the model-free, continuous-control setting we
address. Within RL, a Bernoulli indicator which triggers upon violation is often used to directly
estimate the VaR bound, introducing an extremely sparse, sample-inefficient cost
signal \cite{geibel-wysotzki, her}. Alternatively, \citet{var-lagrange} propose
Lagrangian policy gradients for VaR constraints, but lack training-time safety
guarantees. Canary resolves these issues: applying standard state
augmentation \citep{xu-mannor, saute} to the second cost moment gives dense Markovian cost assignment, and our Cantelli formulation provides a smooth, differentiable safe approximation compatible with worst-case trust-region bounds.

\paragraph{Conditional Value-at-Risk (CVaR).}
Most existing RL literature on tail risk defaults to CVaR constraints
\cite{cvar-cpo, cvar-ppo}, or optimizes CVaR directly as an objective
\cite{distrl}. These approaches frequently leverage
distributional RL to compute quantile-based boundary estimates \cite{cvar-cpo,
distrl}. This introduces architectural complexity and relies on empirical tail
estimators that yield fragile scaling of constraint gradients derived strictly
from a worst fraction of trajectories. Canary bypasses this bottleneck; by
projecting the boundary via the first two moments of the full batch, our method
provides stable gradients that proactively constrain the tail. WCSAC
\cite{wcsac} also reasons about tail risk through moments, but does so by
assuming a Gaussian cost distribution within an off-policy framework. We exclude
it from our on-policy comparison, and note that the Gaussian tail
underestimates risk in our
experiments (\cref{rem:gaussian}).

\section{Preliminaries}
\label{sec:preliminaries}

\subsection{Constrained Markov Decision Process}
\label{sec:CMDP}

We model an agent's interaction with an environment as a \emph{Constrained
Markov Decision Process (CMDP)} \cite{cmdp}. Let $\Delta(\mathcal{X})$ represent
a set of probability distributions over a set $\mathcal{X}$. A CMDP is
defined by the tuple $(\mathcal{S}, \mathcal{A}, T, \mathcal{R}, \mathcal{C},
d_0)$. Here, $\mathcal{S}$ is the state space, $\mathcal{A}$ is the action space, $T: \mathcal{S} \times
\mathcal{A} \rightarrow \Delta(\mathcal{S})$ is the state transition kernel, $\mathcal{R}: \mathcal{S} \times \mathcal{A} \rightarrow
\Delta(\mathbb{R})$ is the reward kernel, $\mathcal{C}:
\mathcal{S} \times \mathcal{A} \rightarrow \Delta(\mathbb{R})$ is the cost kernel, and $d_0 \in \Delta(\mathcal{S})$ is the initial state distribution. The reward kernel measures task performance, while the cost kernel measures
safety. A \emph{policy} $\pi: \mathcal{S} \rightarrow \Delta(\mathcal{A})$
maps states to a probability distribution over actions. We assume the agent
interacts with the environment to generate infinite length \emph{trajectories} $\tau =
(s_t, a_t, r_t, c_t)_{t \in \mathcal{N}}$, which are sampled at time $t$ with \emph{actions} $a_t \sim \pi(\cdot|s_t)$, \emph{states} $s_{t+1} \sim T(\cdot|s_t, a_t)$, \emph{rewards} $r_t \sim \mathcal{R}(\cdot|s_t, a_t)$ and \emph{costs} $c_t \sim \mathcal{C}(\cdot|s_t, a_t)$, where typically in theoretical analysis $\mathcal{N} = \mathbb{N}_0$. The standard
reinforcement learning objective is to maximize the
\emph{expected reward return}
\begin{equation}
    \label{eq:reward-return}
    J(\pi) = \mathbb{E}_{\tau \sim \pi} \Big[ \sum_{t \in \mathcal N} \gamma^t r_t \Big],
\end{equation}
where
$\gamma \in [0, 1)$ is the \emph{reward discount factor}.

In a safety-critical setting, we are also concerned with the
\emph{cost return} $C(\tau) = \sum_{t \in \mathcal N} \gamma_c^t c_t$ and its expectation
\begin{gather}
    \label{eq:cost-return}
    \mu(\pi) = \mathbb{E}_{\tau \sim \pi}[C(\tau)] = \mathbb{E}_{\tau \sim \pi}\Big[\sum_{t \in \mathcal N} \gamma_c^t c_t\Big].
\end{gather}
Here $\gamma_c \in[0, 1)$ is the \emph{cost discount factor}.

The standard CMDP objective maximizes $J(\pi)$ subject to a \emph{cost limit} $\ell \in \mathbb R$ on $\mu(\pi)$, the Optimization Problem (\emph{OP}):
\begin{equation}
\begin{aligned}
    \pi^* = \arg \max_\pi & \text{ } J(\pi) \\
    \text{s.t.} \text{ } & \mu(\pi) \le \ell.
    \label{eq:standardconstr_CMDP}
\end{aligned}
\end{equation}

\subsection{Constrained Policy Optimization}
\label{sec:CPO}
Constrained Policy Optimization (CPO) is an iterative algorithm for solving
CMDPs that bounds worst-case constraint violation and performance degradation magnitudes at
each update step \cite{cpo}. To ensure stable learning, CPO employs a
trust-region approach \cite{trpo}, constraining the step size of the policy
update via the Kullback-Leibler (KL) divergence. First, let $d_\pi(s) = (1 - \gamma) \sum_{t \in \mathcal{N}}\gamma^t \mathbb{P}(s_t=s\mid\pi)$ and
$d_\pi^c(s) = (1 - \gamma_c) \sum_{t \in \mathcal{N}}\gamma_c^t \mathbb{P}(s_t=s\mid\pi)$ define the reward and cost \emph{discounted state visitation frequencies} respectively for policy $\pi$.

In our iterative framework, we aim to update the policy $\pi_k$ at each update
step $k$ to a new policy $\pi_{k+1}$ satisfying OP~\ref{eq:standardconstr_CMDP}. Given a candidate policy $\pi \ne \pi_k$, it is
impossible to calculate $J(\pi)$ or $\mu(\pi)$ for $\pi$ 
given trajectories only sampled from $\pi_k$. Instead, \emph{trust-region surrogates} $L(\pi)$ and $L_{\mu}(\pi)$ for
$J(\pi)$ and $\mu(\pi)$ respectively, which purely sample from policy $\pi_k$, can be calculated:
\begin{gather}
    \label{eq:return-surrogate}
    L(\pi) = J(\pi_k) + \frac{1}{1 - \gamma} \mathbb{E}_{\substack{s \sim d_{\pi_k} \\ a \sim \pi}}\left[ A_{\pi_k}(s,a) \right] \\
    \label{eq:lmu}
    L_{\mu}(\pi) = \mu(\pi_k) + \frac{1}{1 - \gamma_c} \mathbb{E}_{\substack{s \sim d^c_{\pi_k} \\ a \sim \pi}}\left[ A^C_{\pi_k}(s,a) \right].
\end{gather}
Here $A_{\pi_k}$ and $A_{\pi_k}^{C}$ are the \emph{advantage functions} \cite{gae}
following policy $\pi_k$ for the reward and cost, respectively.

Crucially, $L(\pi_k) = J(\pi_k)$ and $\nabla
L(\pi_k) =~\nabla J(\pi_k)$. Therefore, although not necessarilly linear functions, these surrogates are often
referred to as valid first-order approximations \cite{trpo, cpo}. Consequently, at each iteration $k$, given a policy $\pi_k$,
CPO seeks a new policy $\pi_{k+1}$ that solves the following local optimization
problem:

\begin{equation}
\label{eq:cpo}
\begin{aligned}
    \pi_{k+1} = \arg \max_{\pi} & \text{ } L(\pi) \\
    \text{s.t.} \text{ } & L_{\mu}(\pi) \le \ell \\
    & \bar{D}_{KL}(\pi, \pi_k) \le \delta
\end{aligned}
\end{equation}%
where $\bar{D}_{KL}(\pi, \pi_k) = \mathbb{E}_{s\sim d_{\pi_k}^c}[D_{KL}(\pi(a\mid s), \pi_k(a\mid s))]$ is the expected KL divergence between the policies $\pi_k$ and $\pi$, which by
setting hyperparameter radius $\delta > 0$ defines a ball in the policy space called the
\emph{trust region} \cite{trpo}.

\subsection{Value-at-Risk Objective}
\label{sec:VaR}
The \emph{Value-at-Risk (VaR)} of the cost return at a \emph{violation
threshold} $\varepsilon \in (0, 1)$ is
\begin{equation}
  \mathrm{VaR}_{\varepsilon}(\pi)
  = \inf\big\{\, \eta \in \mathbb{R} : \mathbb{P}\big(C(\tau) > \eta\big)
  \le \varepsilon \,\big\}.
  \label{eq:strict-var}
\end{equation}
We write $\eta^* := \mathrm{VaR}_\varepsilon(\pi)$; by right-continuity of $\eta \mapsto \mathbb P(C(\tau) > \eta)$, $\mathbb P(C(\tau) > \eta^*) \le \varepsilon$.
In contrast to the standard constraint on expected cost return in the CMDP
objective (\ref{eq:standardconstr_CMDP}), we are interested in bounding the tail
risk: given the cost limit $\ell$ (\cref{sec:CMDP}), we impose the
probabilistic constraint $\mathbb{P}(C(\tau) > \ell) \le
\varepsilon$, which is equivalent to $\mathrm{VaR}_{\varepsilon}(\pi) \le \ell$. We refer to this as the \emph{VaR constraint}. The
resulting objective becomes
\begin{equation}
\label{eq:var-bound}
\boxed{
\begin{aligned}
    \pi^* = \arg \max_\pi & \text{ } J(\pi) \\
    \text{s.t.} \text{ } & \mathbb{P}(C(\tau) > \ell) \le \varepsilon.
\end{aligned}
}
\end{equation}
Note that we have not specified the policy space being maximized over.
Typically, this will be given by some parametric class of policies, reducing the
objective to a constrained parameter OP.

\subsection{Indicator Based Methods}
\label{sec:indicator}
In principle, OP
\ref{eq:var-bound} could be solved as a special case of the standard CMDP
problem in OP \ref{eq:standardconstr_CMDP} by introducing an indicator-based cost return $I(\tau)$
\cite{geibel-wysotzki, var-lagrange}:
\begin{equation}
    I(\tau) = \mathbf{1}(C(\tau) > \ell) =
    \begin{cases}
    1 & \text{if } \sum_{t \in \mathcal N} \gamma_c^t c_t > \ell \\
    0 & \text{otherwise},
    \end{cases}
\end{equation}

whose expectation is the violation probability; with $I(\tau)$ as the cost return, the constraint in OP~\ref{eq:standardconstr_CMDP} becomes:
\begin{equation}
    \label{eq:indicator}
    \mu_I(\pi) = \mathbb{E}_{\tau\sim \pi}[I(\tau)] = \mathbb{P}(C(\tau) > \ell) \le \varepsilon.
\end{equation}
We will refer to this as the \emph{indicator constraint}.

\subsection{Conditional Value-at-Risk Methods}
\label{sec:cvar-fragility}

An alternative for the VaR objective is to constrain the Conditional
Value-at-Risk (CVaR). This is a \emph{safe approximation} of the VaR constraint \citep{nemirovski2012safe}:
any policy satisfying the CVaR constraint at target violation threshold
$\varepsilon$ also satisfies the VaR constraint. CVaR can be tractably formulated using the
Rockafellar-Uryasev dual representation \cite{cvar1, cvar2}:
\begin{equation}
    \label{eq:rockafellar}
    \text{CVaR}_\varepsilon(\pi) = \min_\eta \left( \eta + \frac{1}{\varepsilon} \mathbb{E}_{\tau} \left[ (C(\tau) - \eta)^+ \right] \right)
\end{equation}
where $(x)^+ = \max(0,x)$; the $(1-\varepsilon)$-quantile $\eta^*$ of \cref{eq:strict-var} is a minimizer. The CVaR constrained OP is then:
\begin{equation}
    \label{eq:cvar-opt}
    \begin{aligned}
    \pi^* = \arg \max_\pi & \text{ } J(\pi) \\
    \text{s.t.} \text{ } & \text{CVaR}_\varepsilon(\pi) \le \ell.
\end{aligned}
\end{equation}

\subsection{Cantelli's Inequality}
\label{sec:cantelli-stability}

Cantelli's inequality (also known as the one-sided Chebyshev inequality)
provides a tighter version of the Chebyshev inequality for one-sided tail
bounds. Given the random cost return $C(\tau)$ with finite mean $\mu(\pi)$
(\cref{eq:cost-return}) and variance $\sigma^2(\pi) =
\text{Var}_{\tau\sim\pi}[C(\tau)]$, Cantelli's inequality states that for any
$\lambda > 0$:
\begin{equation}
    \label{eq:cantelli}
    \mathbb{P}(C(\tau) - \mu(\pi) \ge \lambda) \le \frac{\sigma^2(\pi)}{\sigma^2(\pi) + \lambda^2}.
\end{equation}
This holds for any distribution with finite first and second moments, which makes it more conservative than a bound that exploits further knowledge of the distribution of $C(\tau)$. Crucially, however, no distribution-free bound using only the first two moments can be tighter on $\mathbb{R}$ \cite{bertsimas2005}.

\section{Method}
\label{sec:method}

In this section, we present Canary, a risk-averse, stable (\cref{sec:relative-variance}) algorithm for
enforcing Value-at-Risk constraints in an online RL framework using Cantelli's inequality. Further proofs and derivations can be found in Appendix \ref{app:proofs}.

\subsection{Cantelli Approximated Value-at-Risk}
\label{sec:cantelli-var}

We first employ Cantelli's inequality (\ref{eq:cantelli}) to construct a smooth
safe approximation of the original VaR constraint in OP \ref{eq:var-bound}. Substituting $\lambda = \ell - \mu(\pi)$ into (\ref{eq:cantelli}) can be shown to be equivalent to the \emph{Cantelli constraint} (refer to \cref{app:cantelli-var}):
\begin{equation}\label{eq:chebyshev_constraint}
    B(\pi) := \left(\frac{1}{\varepsilon} - 1\right) \sigma^2(\pi) - [\ell - \mu(\pi)]^2 \le 0.
\end{equation}
Cantelli's inequality requires $\lambda > 0$, so $\mu(\pi) < \ell$. However, the boundary case $\mu(\pi) = \ell$ also satisfies the VaR constraint under $B(\pi) \le 0$ (\cref{app:cantelli-var}). The non-strict \emph{mean constraint} $\mu(\pi) \le \ell$ can therefore be imposed, and OP~\ref{eq:var-bound} is transformed into the \emph{Cantelli-VaR} form:
\begin{equation}
\label{eq:cantelli-objective}
\begin{aligned}
    \pi^* = \arg \max_\pi & \text{ } J(\pi) \\
    \text{s.t.} \text{ } & B(\pi) \le 0,\quad \mu(\pi) \le \ell.
\end{aligned}
\end{equation}
We call $(\ell, \varepsilon)$ a \emph{feasible setting} if some policy $\pi$ satisfies OP \ref{eq:cantelli-objective}. The Cantelli constraint is equivalent to a worst-case VaR bound over all cost distributions sharing the first two moments of $C(\tau)$ \citep{elghaoui2003}, which equals the worst-case CVaR over that family \citep{zymler2013joint}. Since CVaR bounds VaR \citep{cvar1, cvar2}, the Cantelli-feasible set of policies for OP \ref{eq:cantelli-objective} is a subset of the CVaR-feasible set of OP \ref{eq:cvar-opt}, which is in turn a subset of the VaR-feasible set of OP \ref{eq:var-bound}. As such, any policy satisfying the Cantelli-VaR constraint is guaranteed to satisfy the original VaR constraint, but not vice versa. The Cantelli-VaR constraint is thus also a safe approximation of the VaR constraint. This conservatism means a policy that appears feasible only through estimation error can still satisfy the VaR constraint, at the price of a smaller feasible set of policies; that price is quantified in \cref{sec:anatomy}.

\subsection{Augmented Cost Formulation}

We wish to facilitate the derivation of worst-case training time constraint violations for the Cantelli-VaR constraint (\ref{eq:chebyshev_constraint}) in lieu of those provided by CPO. This requires the cost variance $\sigma^2(\pi) = \mathbb{E}[C(\tau)^2] - \mathbb{E}[C(\tau)]^2$ and expectation $\mu(\pi)$ to be estimable using trust-region surrogates. In turn, these make use of advantage functions, which have a form given by a discounted sum of local per-step terms. While $\mu(\pi)$ is already of this form, the second moment $\mathbb{E}[C(\tau)^2]$ is not; we therefore rewrite it as a discounted sum of local per-step terms.

Let $y_t$ be the discounted accumulated cost up to time $t$ with
\begin{equation}
    y_{t+1} = y_t + \gamma_c^t c_t, \quad y_0 = 0. \label{eq:y-recursion}
\end{equation}
The square of the cost return can be decomposed as follows (\cref{app:square-cost}):

\begin{equation}
    \label{eq:square-cost}
    C(\tau)^2 = \sum_{t \in \mathcal N} \gamma_c^t (\gamma_c^t c_t^2 + 2 y_t c_t).
\end{equation}

Therefore, given $\beta = \frac{1}{\varepsilon}-1$, the cost second moment can be written as the expected discounted return of the \emph{augmented cost} $\tilde{c}_t$:
\begin{gather}
    \label{eq:aug-cost}
    \tilde{c}_t = \gamma_c^{t}c_t^2 + 2 y_t c_t \\
    \label{eq:aug-cost-return}
    \tilde{\mu}(\pi) = \mathbb{E}_{\tau \sim \pi} \Big[\sum_{t=0}^\infty \gamma_c^t \tilde{c}_t \Big] = \mathbb{E}_{\tau \sim \pi}[C(\tau)^2].
\end{gather}

However, this augmented cost is not Markovian; that is, the state-action tuple $(s_t, a_t)$ alone is insufficient to
determine the distribution of $\tilde{c}_t$. Akin to \citep{xu-mannor, saute}, we augment the state space with $y_t$
and $\gamma_c^t$, to give the augmented state $x_t$:
\begin{equation}
x_t = (s_t, y_t, \gamma_c^t). \label{eq:aug-state}
\end{equation}

Letting $\rho(\pi) = \frac{1}{\varepsilon}\mu(\pi)^2 + \ell^2$ be the \emph{policy-dependent bound}, the Cantelli constraint can thus be written as (\cref{app:cpo-cantelli})
\begin{equation}
    \label{eq:cpo-cheb}
    B(\pi) = \beta \tilde{\mu}(\pi) + 2\ell \mu(\pi) - \rho(\pi) \le 0.
\end{equation}

\subsection{Update Step}
\label{sec:update-step}

We will update the policy with an iterative framework akin to the one used in CPO. Again, we cannot evaluate $J(\pi)$ or $B(\pi)$ over any candidate policy $\pi$, because we only sample trajectories from an initial policy $\pi_k \ne \pi$. We circumvent this by creating a policy trust region \cite{trpo} and introducing trust-region surrogates $L_{\tilde{\mu}}(\pi)$ and $ L_\rho(\pi)$ at the current policy $\pi_k$ for augmented cost return $\tilde{\mu}(\pi)$, and policy-dependent bound $\rho(\pi)$ respectively. Defining $Z = \mathbb{E}_{\substack{x \sim d^c_{\pi_k} \\ a \sim \pi}} [A_{\pi_k}^C(x, a)]$, then as per \cref{app:d-surrogate}
\begin{gather}
    L_{\tilde{\mu}}(\pi) = \tilde{\mu}(\pi_k) + \frac{1}{1 - \gamma_c} \mathbb{E}_{\substack{x \sim d^c_{\pi_k} \\ a \sim \pi}}\left[ A^{\tilde{C}}_{\pi_k}(x,a) \right], \\
    \label{eq:d-surrogate}
    L_\rho(\pi) = \rho(\pi_k) + \frac{1}{\varepsilon} \left(\frac{2 \mu(\pi_k)}{(1-\gamma_c)} Z + \frac{1}{(1 - \gamma_c)^2} Z^2\right).
\end{gather}
Taking the predefined trust-region surrogates for the reward return in
\cref{eq:return-surrogate} and the cost return in \cref{eq:lmu}, a
trust-region surrogate for $B(\pi)$ at $\pi_k$ is given
\begin{equation}
    L_B(\pi) = \beta L_{\tilde{\mu}}(\pi) + 2\ell L_\mu(\pi) - L_\rho(\pi).
\end{equation}
This yields the final practical OP, including the trust-region surrogate of the mean constraint:
\begin{equation}
\label{eq:varcpo-constraint}
\boxed{
\begin{aligned}
    \pi_{k+1} = \arg \max_{\pi} & \text{ } L(\pi) \\
    \text{s.t.} \text{ } & L_B(\pi) \le 0, \quad L_\mu(\pi) \le \ell\\
    & \bar{D}_{KL}(\pi, \pi_k) \le \delta.
\end{aligned}
}
\end{equation}

\subsection{Worst-Case Violation Bounds}
Extending the theoretical analysis of CPO, we derive a bound on the worst-case
constraint violation introduced by the trust-region surrogates.

\begin{theorem}[Worst-Case Cantelli Violation]
\label{th:worst-case-violation} Let solution policy $\pi_{k+1}$ satisfy
OP \ref{eq:varcpo-constraint}. Then $\pi_{k+1}$ also satisfies:
\begin{equation}
    B(\pi_{k+1}) \le K \Big( \beta \alpha_{\pi_{k+1}}^{\tilde{C}} + 2 |\ell| \alpha_{\pi_{k+1}}^{C} + \frac{2\alpha_{\pi_{k+1}}^{C}}{\varepsilon} M \Big)
\end{equation}
and
\begin{equation}
\label{eq:mean-overshoot}
    \mu(\pi_{k+1}) \le \ell + K \alpha_{\pi_{k+1}}^{C},
\end{equation}
where $\alpha^{\tilde{C}}_\pi = \max_x|\mathbb{E}_{a\sim \pi}[A^{\tilde{C}}_{\pi_k}(x, a)]|$ and \\ \noindent$\alpha_\pi^C = \max_x|\mathbb{E}_{a\sim \pi}[A^C_{\pi_k}(x, a)]|$ represent the maximum expected augmented and standard cost advantages respectively over augmented states $x$ (\ref{eq:aug-state}), $K = \frac{\sqrt{2\delta}\gamma_c}{(1-\gamma_c)^2}$, and $M = |\mu(\pi_k)| + \frac{\alpha_{\pi_{k+1}}^{C}}{1-\gamma_c}$. See Appendix \ref{app:theorem} for a proof.
\end{theorem}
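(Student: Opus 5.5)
The plan is to reduce everything to the CPO surrogate error bound of \citet{cpo}, applied separately to each discounted-return quantity appearing in $B$. Throughout we work on the augmented state $x_t$ of \cref{eq:aug-state}, so that $c_t$ and $\tilde c_t$ are Markovian costs. For any such cost with advantage $A^{X}_{\pi_k}$, the CPO bound (Corollary/Theorem on policy performance bounds, combined with Pinsker's inequality and the trust-region constraint $\bar D_{KL}(\pi_{k+1},\pi_k)\le\delta$) gives
\[
\bigl|\mu_X(\pi_{k+1}) - L_{\mu_X}(\pi_{k+1})\bigr| \le \frac{\sqrt{2\delta}\,\gamma_c}{(1-\gamma_c)^2}\,\alpha^{X}_{\pi_{k+1}} = K\alpha^X_{\pi_{k+1}}.
\]
Here the total-variation distance is bounded by $\sqrt{2\delta}$ after averaging over $d^c_{\pi_k}$, which is the state distribution used in $\bar D_{KL}$. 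Applied with $X=C$, this bound together with $L_\mu(\pi_{k+1})\le\ell$ immediately yields \cref{eq:mean-overshoot}. Applied with $X=\tilde C$, it gives $\tilde\mu(\pi_{k+1})\le L_{\tilde\mu}(\pi_{k+1})+K\alpha^{\tilde C}_{\pi_{k+1}}$.

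Next we decompose
\[
B(\pi_{k+1}) = L_B(\pi_{k+1}) + \beta\,(\tilde\mu - L_{\tilde\mu}) + 2\ell\,(\mu-L_\mu) - (\rho - L_\rho),
\]
with every term evaluated at $\pi_{k+1}$. We then use $L_B(\pi_{k+1})\le0$. Since $\beta>0$, the second term is at most $\beta K\alpha^{\tilde C}$. The third term is bounded in absolute value by $2|\ell|K\alpha^C$, which is why $|\ell|$ appears: the sign of $\ell$ is unknown.

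The remaining and main step is the $\rho$ term. By \cref{app:d-surrogate}, $L_\rho$ is obtained by substituting the linear surrogate $\hat\mu := \mu(\pi_k)+Z/(1-\gamma_c) = L_\mu(\pi)$ into $\rho$, so $L_\rho(\pi)=\frac1\varepsilon\hat\mu^2+\ell^2$. This identity should be checked by expanding the square, which reproduces \cref{eq:d-surrogate}. Hence
\[
\rho(\pi_{k+1})-L_\rho(\pi_{k+1}) = \frac1\varepsilon(\mu-\hat\mu)(\mu+\hat\mu).
\]
We bound $|\mu-\hat\mu|\le K\alpha^C$ by the CPO bound above, and then bound $|\mu+\hat\mu|$. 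We have $|\hat\mu|\le|\mu(\pi_k)|+|Z|/(1-\gamma_c)\le|\mu(\pi_k)|+\alpha^C/(1-\gamma_c)=M$, using $|Z|\le\alpha^C$. Likewise $|\mu(\pi_{k+1})|\le|\hat\mu|+K\alpha^C$.

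Here is the obstacle: this naive route produces $|\mu+\hat\mu|\le 2M+K\alpha^C$, giving an extra second-order term beyond the stated $2M$. To obtain exactly $\frac{2\alpha^C}{\varepsilon}KM$, I would instead bound $|\mu(\pi_{k+1})|$ directly by $M$. By the performance difference identity, $\mu(\pi_{k+1})-\mu(\pi_k)=\frac{1}{1-\gamma_c}\mathbb E_{x\sim d^c_{\pi_{k+1}},a\sim\pi_{k+1}}[A^C_{\pi_k}]$. The integrand is bounded by $\alpha^C$ uniformly in $x$, so $|\mu(\pi_{k+1})|\le M$ as well. Then $|\mu+\hat\mu|\le 2M$, and $-(\rho-L_\rho)\le\frac{2M}{\varepsilon}K\alpha^C$.

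Summing the three bounds gives the stated inequality for $B(\pi_{k+1})$. The step to double-check is that the KL expectation over $d^c_{\pi_k}$, rather than $d_{\pi_k}$, is exactly what the CPO bound needs with discount $\gamma_c$ on the augmented MDP.
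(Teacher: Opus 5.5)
Your proof is correct and is essentially the paper's argument. Your factorization $\rho-L_\rho=\frac{1}{\varepsilon}(\mu-\hat\mu)(\mu+\hat\mu)$, with $|\mu(\pi_{k+1})|,|\hat\mu|\le M$, is the paper's split into $2|\mu(\pi_k)|\,|X-Y|$ and $|X-Y|\,|X+Y|$ with $|X|,|Y|\le\alpha^C$, and reusing the CPO surrogate bound for $|\mu-\hat\mu|$ stands in for the paper's direct bound on $|X-Y|$. One small slip: with the $\frac{1}{2}$-convention for $D_{TV}$, Pinsker gives $\mathbb{E}_{x\sim d^c_{\pi_k}}[D_{TV}]\le\sqrt{\delta/2}$ rather than $\sqrt{2\delta}$, although the constant $K$ you quote from CPO is correct.
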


\cref{th:worst-case-violation} gives constraint-violation magnitude bounds for the Cantelli and mean constraints in OP \ref{eq:cantelli-objective} during training under almost surely bounded costs. However, as in CPO, the $(1 - \gamma_c)^{-2}$ dependence means the bound loosens as $\gamma_c \to 1$, unless the trust region $\delta$ shrinks correspondingly.

Since the trust-region constrained $L(\pi)$ objective in OP \ref{eq:varcpo-constraint} is
identical to CPO, it also inherits the worst case reward performance bound under almost surely bounded rewards
\cite{cpo}. Moreover, as in the CPO paper, our bounds omit accounting for any
error due to the practical necessity of estimating the advantage functions or
expected cost returns from policy roll-outs.

\subsection{Recovery Mechanism}
\label{sec:recovery}

OP \ref{eq:varcpo-constraint} may admit no solution given the trust region size. Extending CPO's treatment of infeasible updates, we replace an infeasible update with a recovery step. Writing $\Pi_\delta = \{\pi : \bar{D}_{KL}(\pi, \pi_k) \le \delta\}$ and $\Pi_\delta^\ell = \Pi_\delta \cap \{\pi : L_\mu(\pi) \le \ell\}$, define the reachable minima
\begin{equation}
\label{eq:reachability}
    m_\mu = \min_{\pi \in \Pi_\delta} L_\mu(\pi), \qquad
    m_B = \min_{\pi \in \Pi_\delta^\ell} L_B(\pi),
\end{equation}
both of which admit closed forms (\cref{app:algorithm}). The update selection has three tiers: the nominal update (a), Cantelli restoration (b), and mean restoration (c),
\begin{equation}
\label{eq:recovery}
    \pi_{k+1} =
    \begin{cases}
        (\text{a})\text{ solution of OP \ref{eq:varcpo-constraint}} & m_\mu \le \ell,\, m_B \le 0, \\
        (\text{b})\text{ }\arg \min_{\pi \in \Pi_\delta^\ell} L_B(\pi) & m_\mu \le \ell,\, m_B > 0, \\
        (\text{c})\text{ }\arg \min_{\pi \in \Pi_\delta} L_\mu(\pi) & m_\mu > \ell,
    \end{cases}
\end{equation}
equivalently evaluated as: if $m_\mu > \ell$ apply (c); else if $m_B > 0$ apply (b); else apply (a). Prioritizing safety, Tier c precedes the others because the mean constraint is a precondition for the Cantelli constraint.

No solution policy of any update tier guarantees satisfaction of the VaR constraint. Under almost surely bounded costs, Tier a alone admits the worst-case violation magnitude bounds of \cref{th:worst-case-violation}, while policies of Tiers a--b do not provably violate the VaR constraint if their cost standard deviation is large enough (\cref{cor:recovery}):
\begin{corollary}[Conditional VaR Violation]
\label[corollary]{cor:recovery}
Let $V = \{\pi: \mu(\pi) > \ell + \sigma(\pi)/\sqrt{\beta}\}$ be the region where the VaR constraint is provably violated and $\pi_{k+1}$ be produced by the selection rule \ref{eq:recovery}, with $K$ and $\alpha^{C}_{\pi_{k+1}}$ as in \cref{th:worst-case-violation}. For Tiers a--b, $\mu(\pi_{k+1}) \le \ell + K\alpha^{C}_{\pi_{k+1}}$, and consequently $\pi_{k+1} \notin V$ whenever $\sigma(\pi_{k+1}) \ge \sqrt{\beta}\, K \alpha^{C}_{\pi_{k+1}}$. For Tier c, $\mu(\pi_{k+1}) \le \mu(\pi_k) + K\alpha^{C}_{\pi_{k+1}}$. See \cref{app:corollary} for a proof.
\end{corollary}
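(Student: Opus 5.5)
The proof rests on one tool: the CPO surrogate error bound for the mean cost. By the trust-region analysis underlying \cref{th:worst-case-violation} (CPO's Corollary~2 applied to the cost kernel over augmented states), any policy $\pi$ in the trust region $\Pi_\delta$ satisfies
\[
|\mu(\pi) - L_\mu(\pi)| \le \frac{\sqrt{2\delta}\gamma_c}{(1-\gamma_c)^2}\alpha^C_\pi = K\alpha^C_\pi .
\]
I will take this estimate from the proof of \cref{th:worst-case-violation}, where it is used to establish \cref{eq:mean-overshoot}. Only the upper direction $\mu(\pi)\le L_\mu(\pi)+K\alpha^C_\pi$ is needed. It relies only on $\bar D_{KL}(\pi,\pi_k)\le\delta$ and not on optimality, so it applies to every tier, since each tier's output lies in $\Pi_\delta$.

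\emph{Tiers a--b.} In Tier a, $\pi_{k+1}$ is feasible for OP~\ref{eq:varcpo-constraint}, so $L_\mu(\pi_{k+1})\le\ell$. In Tier b, $\pi_{k+1}$ is chosen inside $\Pi_\delta^\ell$, so again $L_\mu(\pi_{k+1})\le\ell$. This set is nonempty because the tier condition gives $m_\mu\le\ell$. Combining with the surrogate bound yields $\mu(\pi_{k+1})\le \ell+K\alpha^C_{\pi_{k+1}}$.

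Now suppose $\sigma(\pi_{k+1})\ge\sqrt\beta K\alpha^C_{\pi_{k+1}}$. Then $\sigma(\pi_{k+1})/\sqrt\beta\ge K\alpha^C_{\pi_{k+1}}$, so
\[
\mu(\pi_{k+1})\le \ell+K\alpha^C_{\pi_{k+1}}\le \ell+\sigma(\pi_{k+1})/\sqrt\beta .
\]
Hence $\pi_{k+1}\notin V$. Here $\beta>0$ because $\varepsilon\in(0,1)$. As a side remark I might justify the name of $V$: by the Cantelli bound applied to $-C$ (the lower tail), if $\mu>\ell+\sigma/\sqrt\beta$ then $\mathbb P(C\le\ell)\le \sigma^2/(\sigma^2+(\mu-\ell)^2)<1-\varepsilon$, so $\mathbb P(C>\ell)>\varepsilon$. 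This is context for the statement rather than part of the claim.

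\emph{Tier c.} Here $\pi_{k+1}=\arg\min_{\pi\in\Pi_\delta}L_\mu(\pi)$. Since $\pi_k\in\Pi_\delta$ (its KL divergence to itself is $0$), we have $L_\mu(\pi_{k+1})\le L_\mu(\pi_k)=\mu(\pi_k)$. The equality holds because the advantage of $\pi_k$ under its own action distribution has zero expectation. Applying the surrogate bound then gives $\mu(\pi_{k+1})\le\mu(\pi_k)+K\alpha^C_{\pi_{k+1}}$.

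The main point needing care is confirming that the surrogate error bound depends only on trust-region membership and not on feasibility for OP~\ref{eq:varcpo-constraint}. The potential obstacle is that the expected KL in this paper is taken under $d^c_{\pi_k}$ over augmented states. I will check that the CPO bound as invoked in \cref{th:worst-case-violation} uses exactly this divergence, so that the constant $K$ transfers unchanged to Tiers b and c.
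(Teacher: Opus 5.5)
Your proposal is correct and follows the paper's proof essentially step for step. Both apply the CPO mean-surrogate error bound to every tier using only trust-region membership, both use $L_\mu(\pi_{k+1})\le\ell$ for Tiers a--b, and both use the admissibility of $\pi_k$ with $L_\mu(\pi_k)=\mu(\pi_k)$ for Tier c; the paper also proves that $V$ is a region of provable VaR violation by applying Cantelli's inequality to the reflected return $-C(\tau)$, which is exactly your side remark.
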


\subsection{Practical Algorithm}

For a policy parameterized by $\theta$, given the Cantelli-VaR constrained
OP \ref{eq:varcpo-constraint} might be a non-linear problem. Therefore, in analogy to CPO, it is made computationally
tractable using Taylor expansions. The reward objective and cost constraint
are approximated to first order, while the KL divergence constraint is
approximated to second order. To this end, let
\begin{gather}
    \label{eq:taylor-c}
    c_B = L_B(\theta_k) = \beta \tilde{\mu}(\theta_k) + 2\ell \mu(\theta_k) - \rho(\theta_k)\\
    \begin{split}
    \label{eq:taylor-b}
    b_B &= \nabla L_B(\theta_k)\\
    &= \beta \nabla L_{\tilde{\mu}}(\theta_k) + \left(2\ell - \frac{2\mu(\theta_k)}{\varepsilon}\right)\nabla L_\mu(\theta_k),
    \end{split}\\
    \label{eq:taylor-cmu}
    g = \nabla L(\theta_k), \quad c_\mu = \mu(\theta_k) - \ell, \quad b_\mu = \nabla L_\mu(\theta_k).
\end{gather}
Then the problem is mapped to the following quadratically constrained linear program
\begin{equation}
\begin{aligned}
\label{eq:practical-update}
    \theta_{k+1} = \arg \max_{\theta} \text{ } & g^\top (\theta - \theta_k) \\
    \text{s.t.} \text{ } & c_B + b_B^\top (\theta - \theta_k) \le 0 \\
    & c_\mu + b_\mu^\top (\theta - \theta_k) \le 0 \\
    & \frac{1}{2} (\theta - \theta_k)^\top H (\theta - \theta_k) \le \delta
\end{aligned}
\end{equation}
where $H$ is the Hessian of the KL Divergence. The solution to OP \ref{eq:practical-update} is closed-form (\cref{app:algorithm}). However, to ensure satisfaction of OP \ref{eq:varcpo-constraint}, we implement a backtracking line search as in CPO.

\section{Relative Variance Scaling}
\label{sec:relative-variance}

We have described three formulations for the original VaR
constraint in OP \ref{eq:var-bound}: the indicator constraint in
Equation \ref{eq:indicator}, and two safe approximations, the CVaR constraint in OP
\ref{eq:cvar-opt} and the Cantelli-VaR constraint in OP \ref{eq:cantelli-objective},
which Canary is centered on. A natural question arises: \emph{what benefit does
the Cantelli constraint provide beyond its conservatism?}

In each case, a constrained OP must be solved, requiring the estimation of the \emph{constraint value} and its \emph{policy gradient}. For example, Canary requires the estimation of constraint values $c_B$ and $c_\mu$ and their policy gradients $b_B$ and $b_\mu$. For the time being, we
assume unbiased Monte Carlo estimators are used without a critic.

Given a
metric $m$ and its estimator $\hat{m}$, analyzing the total variance $\text{Var} [\hat
m] = \mathbb{E}[\vert{}\vert{}\hat{m} - \mathbb{E}[\hat{m}]\vert{}\vert{}_2^2]$ in isolation is misleading because it might shrink as the signal $m$ itself
shrinks \cite{rubinstein2016simulation}.
Therefore, the \emph{relative variance} (inverse signal-to-noise ratio)
\cite{rubinstein2016simulation} defined by
\begin{equation}
\label{eq:relative-variance}
    \kappa^2[m] = \frac{\text{Var}[\hat{m}]}{||m||_2^2}
\end{equation}
is often used instead. In deep RL, the explosion of this relative variance often manifests as a collapse in
the cosine similarity between the empirical and true values, a phenomenon
empirically identified as a driver of optimization failure in policy gradient
methods \cite{ilyas2020closerlookdeeppolicy}. We call a method \emph{stable} if, for a fixed policy, the relative variances of its constraint value and policy gradient remain bounded as the violation threshold becomes strict, $\varepsilon \to 0$.

Throughout this section and \cref{app:scaling}, we assume costs are non-negative, $c_t \ge 0$ almost surely, so $C(\tau) \ge 0$. This permits an atom at zero, $\mathbb P(C(\tau) = 0) = 1 - f$, and defines cost regimes by $f = \mathbb P(C(\tau) > 0)$: the cost signal is \emph{dense} if $f > \varepsilon$ and \emph{sparse} otherwise. The results extend to signed costs. Replacing $\mathbb P(C(\tau) > 0)$ by $\mathbb P(C(\tau) \ne 0)$ throughout, the sparse regime is empty, absent an atom at zero. The scaling results further rely on four assumptions, stated here for reference and discussed in \cref{app:scaling}.

\begin{assumption}[Positive thresholds and regular quantile]
\label[assumption]{ass:support}
The cost limit, $\ell$, and the $(1-\varepsilon)$-quantile of the cost
return, $\eta^*$ (\ref{eq:strict-var}), are positive: $\ell > 0$ and $\eta^* > 0$. Consequently, $p \le f$ and
$\varepsilon < f$, where $p = \mathbb{P}(C(\tau) > \ell)$ is the
violation probability. Moreover, the cost return has no atom at $\eta^*$, $\mathbb{P}(C(\tau) = \eta^*) = 0$, so $\mathbb{P}(C(\tau) > \eta^*) = \varepsilon$.
\end{assumption}

\begin{assumption}[Bounded conditional relative second moment]
\label[assumption]{ass:snr}
Let $X \in \{h(\tau),s(\tau) \,h(\tau) \}$,
where $s(\tau) = \sum_t \nabla_\theta \log
\pi_\theta(a_t \mid s_t)$ is the score of the trajectory, and $h(\tau) = \mathbb{I}(C(\tau) > \ell)$ for the indicator, $h(\tau) = (C(\tau) -
\eta^*)^+$ for CVaR, and $h(\tau) \in \{C(\tau), C(\tau)^2\}$ for the Cantelli moments. For
$\mathcal{E} = \{h(\tau) \ne 0\}$:
$ \mathbb{E}[\|X\|^2 \mid \mathcal{E}] \, / \,\|\mathbb{E}[X \mid \mathcal{E}]\|^2 \le K$ for a constant $K$ independent of $p$, $\varepsilon$, and $f$.
\end{assumption}

\begin{assumption}[No catastrophic cancellation]
\label[assumption]{ass:no-cancel}
Given $c_1 = 2(\ell - \mu(\theta)/\varepsilon)$ and $c_2 = \beta$. There exists a $\nu > 0$, independent of $\theta$ and $(p, \varepsilon, f)$, such that $\left\| c_1 x_1 + c_2 x_2 \right\|^2 \ge \nu ( c_1^2 \left\| x_1 \right\|^2 + c_2^2 \left\| x_2 \right\|^2 )$ for both the moment pair $(x_1, x_2) = (\mu(\theta), \tilde{\mu}(\theta))$ and the moment-gradient pair $(x_1, x_2) = (\nabla_\theta \mu(\theta), \nabla_\theta \tilde{\mu}(\theta))$.
\end{assumption}

\begin{assumption}[Proportional slack]
\label[assumption]{ass:slack}
Each constraint value decomposes as $J_C(\theta) = \mathbb{E}[x(\tau)] + b$, a \emph{sampled part} $\mathbb{E}[x(\tau)]$, where the sampled statistic $x(\tau)$ vanishes off the active event $\mathcal{E}$, plus a batch-independent offset $b$. The \emph{signal share} $r = |\mathbb{E}[x(\tau)]| / |J_C(\theta)|$ satisfies $r_{\min} \le r \le r_{\max}$ for constants $0 < r_{\min} \le r_{\max} < \infty$ independent of $(p, \varepsilon, f)$.
For the Cantelli and mean constraints, for $X \in \{x(\tau), s(\tau)\,x(\tau)\}$, $\mathbb{E}[\|X\|^2 \mid \mathcal{E}] \, / \, \|\mathbb{E}[X \mid \mathcal{E}]\|^2 \ge 1 + \omega$ for a constant $\omega > 0$ independent of $(p, \varepsilon, f)$.
\end{assumption}

\begin{proposition}[Relative variance of constraints]
  \label[proposition]{prop:ess}
  Under Assumptions \ref{ass:support} to \ref{ass:slack}, for unbiased Monte Carlo estimators, the relative variance of the
  constraint value and gradient over a fixed batch of $N$ trajectories satisfies the following bounds (\cref{app:scaling}):
  \begin{enumerate}[label=(\roman*),nosep]
    \item \emph{Indicator:} $\Theta\!\left(\frac{1}{pN}\right)$ \cite{rubinstein2016simulation};
    \item \emph{CVaR:} $\Theta\!\left(\frac{1}{\varepsilon N}\right)$
    \cite{tamar2014optimizingcvarsampling};
    \item \emph{Cantelli:} $\Theta\!\left(\frac{1}{fN}\right)$ (for signed costs, $f = \mathbb P(C(\tau) \ne 0)$, which is $1$ absent an atom at zero).
  \end{enumerate}
\end{proposition}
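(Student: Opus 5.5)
The plan is to reduce all three cases to one generic lemma about estimators of statistics that vanish off an event, then check that each constraint fits the lemma with the right event probability. Take a statistic $X(\tau)$, scalar or vector, with $X = 0$ off an event $\mathcal{E}$ of probability $q$. Its sample-mean estimator over $N$ i.i.d.\ trajectories has $\text{Var}[\hat m] = \frac{1}{N}\big(\mathbb{E}\|X\|^2 - \|\mathbb{E}X\|^2\big)$. Conditioning on $\mathcal{E}$ gives $\mathbb{E}\|X\|^2 = q\,\mathbb{E}[\|X\|^2\mid\mathcal{E}]$ and $\mathbb{E}X = q\,\mathbb{E}[X\mid\mathcal{E}]$. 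Hence
\begin{equation*}
\kappa^2 = \frac{1}{N}\left(\frac{1}{q}\,\frac{\mathbb{E}[\|X\|^2\mid\mathcal{E}]}{\|\mathbb{E}[X\mid\mathcal{E}]\|^2} - 1\right).
\end{equation*}
The upper bound $\kappa^2 \le K/(qN)$ follows from \cref{ass:snr}. For the lower bound, Jensen gives a conditional ratio of at least $1$, so $\kappa^2 \ge \frac{1}{N}(1/q - 1)$. This is $\Omega(1/(qN))$ whenever $q$ is bounded away from $1$. When $q$ may approach $1$, as for Cantelli, I would instead use the $1+\omega$ bound of \cref{ass:slack}, giving $\kappa^2 \ge \omega/(qN)$. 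The same computation covers both the value statistic $h$ and the score-weighted statistic $s\,h$, since both vanish off $\mathcal{E}$.

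I would then apply the lemma case by case. For the \emph{indicator}, $h = \mathbf{1}(C > \ell)$ and $\mathcal{E} = \{C > \ell\}$, so $q = p$. This gives $\Theta(1/(pN))$ for the value and for the REINFORCE gradient $s\,h$; this is the classical rare-event result. For \emph{CVaR}, I take $h = (C - \eta^*)^+$ with $\eta^*$ treated as fixed, so that the envelope theorem justifies ignoring $\eta$'s dependence in the gradient. The event is $\mathcal{E} = \{C > \eta^*\}$, and \cref{ass:support} gives $q = \varepsilon$. The constraint value is $\eta^* + \frac{1}{\varepsilon}\mathbb{E}[h]$. Since $\eta^*$ is batch-independent, it shifts the norm but not the variance, and the $1/\varepsilon$ scaling cancels in $\kappa^2$. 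To control the ratio of the offset to the signal, I would invoke \cref{ass:slack}, which yields $\Theta(1/(\varepsilon N))$.

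For \emph{Cantelli}, both $C$ and $C^2$ vanish exactly off $\{C > 0\}$, so $q = f$ for each moment and for its score-weighted gradient. I would write the constraint as $c_B = \beta\tilde{\mu} + c_1\mu + b$ plus a constant, with gradient $\beta\nabla\tilde{\mu} + c_1\nabla\mu$. The coefficient $c_1 = 2(\ell - \mu/\varepsilon)$ depends on the estimated $\mu$; I would argue this is lower order, or treat it as fixed per the statement of \cref{ass:no-cancel}. For the variance of the combination, the upper bound uses $\text{Var}[c_1\hat x_1 + c_2\hat x_2] \le 2(c_1^2\text{Var}\hat x_1 + c_2^2\text{Var}\hat x_2)$. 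Each term is at most $K\|x_i\|^2/(fN)$, and \cref{ass:no-cancel} bounds $c_1^2\|x_1\|^2 + c_2^2\|x_2\|^2$ by $\|c_1x_1 + c_2x_2\|^2/\nu$. The constant offset $b$, i.e.\ $\ell^2$ and $\mu^2$-type terms, is handled through the signal share $r$ of \cref{ass:slack}. For the mean constraint the same argument applies with one term.

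The step I expect to be the main obstacle is the Cantelli \emph{lower} bound. Because of the cross term, $\text{Var}[c_1\hat x_1 + c_2\hat x_2]$ is not bounded below by the individual variances: $C$ and $C^2$ are positively correlated and could partially cancel. My first attempt would be to treat the combined per-trajectory statistic $x(\tau) = c_1C + c_2C^2$ (and $s\,x$) directly as a single function vanishing off $\{C > 0\}$. I would then apply the generic lemma to it, using the $1+\omega$ condition of \cref{ass:slack}, which is stated for exactly this combined sampled statistic, and that gives $\kappa^2 \ge \omega r_{\min}^2/(fN)$. If the offsets make $r$ enter in the upper bound as well, this is where $r_{\max}$ is used.
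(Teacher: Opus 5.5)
Your proposal is correct and follows the paper's proof essentially step for step. The shared steps are: a generic lemma for a statistic vanishing off an event of probability $q$, stated separately for value and gradient, with offsets absorbed into the signal share $r$ of \cref{ass:slack}; the identification $q\in\{p,\varepsilon,f\}$; and the envelope theorem for CVaR. The Cantelli upper bound $2K/\nu$ also comes from \cref{ass:snr,ass:no-cancel}, where your $\operatorname{Var}(A+B)\le 2\operatorname{Var}A+2\operatorname{Var}B$ is the paper's Cauchy--Schwarz plus $(a+b)^2\le 2(a^2+b^2)$ step in another form. The lower bound is obtained exactly as you suggest, by treating $x(\tau)=c_1C(\tau)+c_2C(\tau)^2$ as one statistic and invoking the $1+\omega$ clause.

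Two small tightenings. First, the indicator constraint value $p-\varepsilon$ also carries an offset, so the factor $r^2$ enters there too, with $r=c/(1-c)$ for $p=c\varepsilon$. Second, commit to evaluating $c_1$ and $b$ at the true $\mu(\theta)$. That choice is what makes the estimator unbiased, and the plug-in perturbation of the gradient is first order, the same order as the Monte Carlo error, so it cannot be argued away as lower order.
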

The unbiased Cantelli estimators analyzed are not computable, so the relative variances of the biased \emph{plug-in estimators} of $c_B$ and $b_B$ that Canary uses, whose bias shrinks as $1/N$, are also analyzed in \cref{app:scaling} under one additional assumption (\ref{eq:cross-moment}). The constraint value keeps the $\Theta(1/(fN))$ rate to leading order and the gradient an $\mathcal{O}(1/(fN))$ upper bound. Therefore, only the Cantelli estimator is stable given dense cost regimes. The next section shows how this translates to improved safety for low $\varepsilon$ regimes.

\section{Results}
\label{sec:results}
Although the Safety Gymnasium package provides an excellent set of benchmark
environments for CMDPs \cite{safety-gymnasium}, we leverage the benefits of
running Just-in-Time (JIT) compiled JAX code end-to-end on the GPU for
accelerated experiments \cite{jax}. This required translating well-known
benchmarks:
\begin{itemize}[nosep]
    \item \textbf{EcoAnt}: A reimplementation of the Ant environment \cite{gae}
    in MJX. In this scenario, the agent must maximize forward velocity, while
    now managing a limited battery budget ($c_t = 0.5 ||a_t||^2_2$).
    \item \textbf{PointGoal}: A reimplementation of SafetyPointGoal1 from Safety
    Gymnasium \cite{safety-gymnasium} built on top of MJX. In this scenario, the
    agent must navigate toward a randomized target location while avoiding
    static hazardous areas distributed across the arena.
    \item \textbf{IcyLake}: A reimplementation of the gymnasium FrozenLake
    environment \cite{gymnasium} in JAX. The agent must traverse a $4 \times 4$
    frozen lake grid to reach the goal state while avoiding icy tiles which
    accumulate cost $+1$ per visit.
\end{itemize}

We compare \textbf{Canary} against four baselines:
\begin{itemize}[nosep]
    \item \textbf{Proximal Policy Optimization (PPO):} A popular unconstrained
    RL baseline \cite{ppo}.
    \item \textbf{Constrained Policy Optimization (CPO):} A trust-region method
    for CMDPs. Here, the indicator constraint is used as discussed in Section
    \ref{sec:indicator} to enforce a VaR constraint (OP \ref{eq:var-bound}) \cite{cpo}.
    \item \textbf{CVaR Proximal Policy Optimization (CPPO):} A Lagrange
    augmented PPO method that enforces a CVaR constraint (OP \ref{eq:cvar-opt}) \cite{cvar-ppo}.
    \item \textbf{CVaR Constrained Policy Optimization (CVaR-CPO):} A
    trust-region method that enforces a CVaR constraint using a distributional cost critic \cite{cvar-cpo}.
\end{itemize}

In practice, trajectories terminate or are truncated after a finite time $T_\tau$, so $\mathcal{N} = \{0, \dots, T_\tau - 1\}$. Further details on experimental setup can be found in Appendix \ref{app:hyperparams}.

\subsection{Empirical Relative Variance Scaling}

\cref{fig:relative-variance} is consistent with the scaling behavior of \cref{prop:ess}, even with trust-region surrogates using critic-dependent GAE advantage estimates for policy gradients. The relative variances of the
indicator and CVaR constraints grow as $\varepsilon\to0$ while the Cantelli constraint
remains flat.
\begin{figure}[H]
    \centering
    \includegraphics[width=0.78\linewidth]{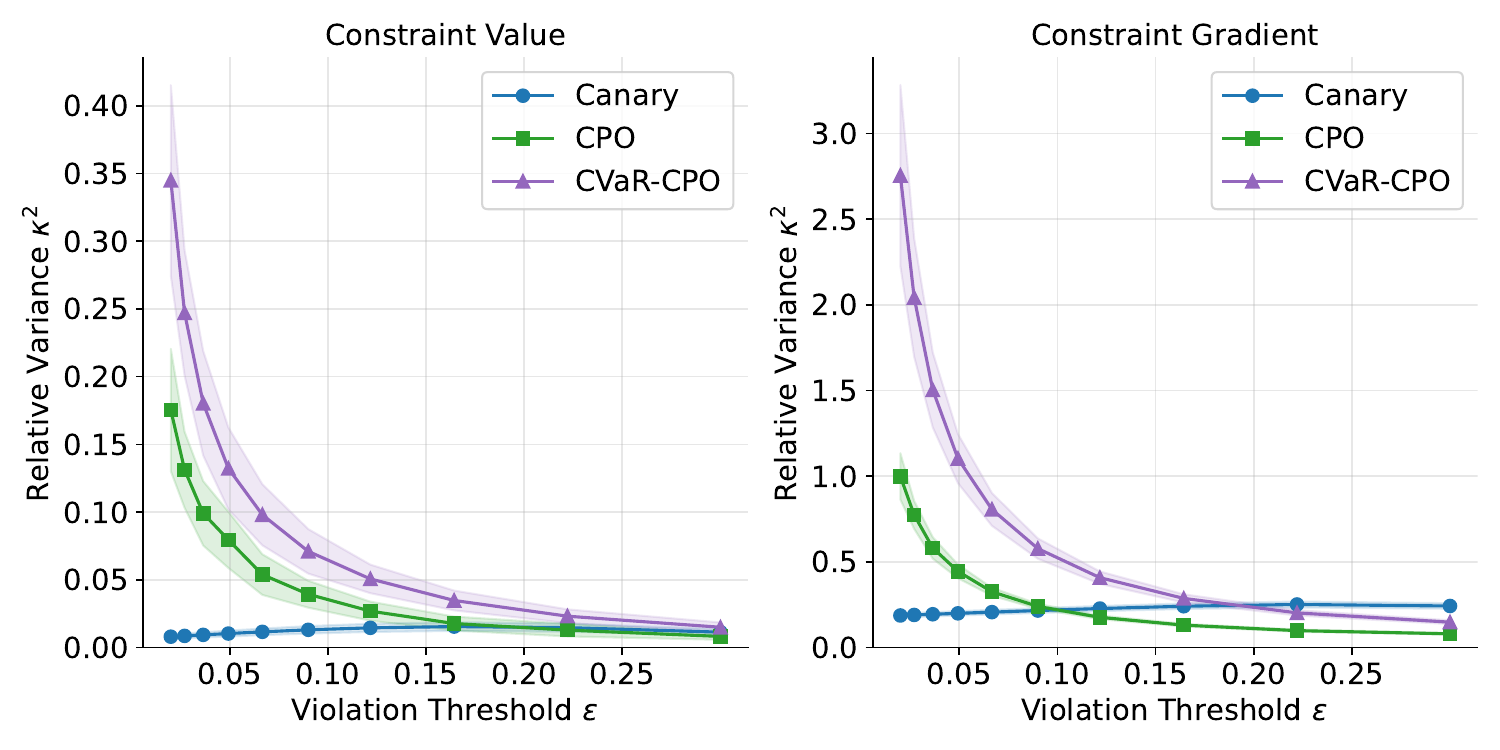}
    \caption{\textbf{Relative Variance Scaling:} IcyLake relative variance
    scaling for the constraint value (left) and constraint gradient (right) for
    Canary (Cantelli constraint) (blue), CPO (indicator) (green), and CVaR-CPO (purple). Each method is using a critic-dependent GAE advantage estimate for the policy gradient. Lines are means and shaded areas are
    one standard deviation over 30 seeds.}
    \label{fig:relative-variance}
\end{figure}

\subsection{Training Performance}
We select violation threshold $\varepsilon = 0.02$ and cost limit $\ell$ for feasible settings in each environment (see \cref{tab:safety}) with undiscounted costs $\gamma_c = 1$, which the finite $\mathcal{N}$ permits.

We assess a method's VaR constraint satisfaction throughout training by its \emph{reliability} ($\%$ safe, \cref{tab:safety}). \cref{tab:safety} shows that Canary is the most reliable method across all environments, and two measured properties contribute to this result. First, a method's converged violation probability ($\hat P$, \cref{tab:safety}) measures how far below the violation threshold the method converges on average; and second, its violation probability jitter ($\hat\sigma$, \cref{tab:safety}) measures how much the violation probability varies at convergence. Reliability then follows from the margin per unit jitter, $(\varepsilon - \hat P)/\hat\sigma$, which for Canary is the largest of all methods in every environment. The theory supports these results. The Cantelli-VaR constraints used by Canary are the most conservative surrogate (\cref{sec:cantelli-var}), which contributes to its low converged violation probability. The Cantelli-VaR constraints are also stable in low violation threshold scenarios (\cref{prop:ess}), which contributes to its low violation probability jitter.

\begin{table}[t]
  \centering
  \small
  \setlength{\tabcolsep}{2.4pt}
      \begin{tabular}{llcccc}
    \toprule
    & Method & $\hat{P}\downarrow$ & $\hat{\sigma}\downarrow$ & \% safe$\uparrow$ & $\hat J\uparrow$ \\
    \midrule
    \multirow{5}{*}{\rotatebox{90}{\shortstack{EcoAnt\\$\ell{=}150$}}}
      & Canary   & $\mathbf{.002}\,{\pm}.000$ & $\mathbf{1.4}\,{\pm}0.2$ & $\mathbf{100}$ & $81{\pm}3$ \\
      & PPO      & $.694\,{\pm}.003$ & $36.6\,{\pm}2.1$ & $0$ & $1321{\pm}7$ \\
      & CPO      & $.030\,{\pm}.000$ & $4.2\,{\pm}0.4$ & $0$  & $203{\pm}6$ \\
      & CPPO     & $.048\,{\pm}.003$ & $20.9\,{\pm}2.2$ & $3$  & $48{\pm}5$ \\
      & CVaR-CPO & $.038\,{\pm}.002$ & $16.8\,{\pm}3.6$ & $0$  & $123{\pm}5$ \\
    \midrule
    \multirow{5}{*}{\rotatebox{90}{\shortstack{PointGoal\\$\ell{=}100$}}}
      & Canary   & $\mathbf{.011}\,{\pm}.001$ & $\mathbf{6.1}\,{\pm}1.4$ & $\mathbf{87}$ & $4.0{\pm}0.6$ \\
      & PPO      & $.512\,{\pm}.006$ & $23.1\,{\pm}4.2$ & $0$ & $69.0{\pm}0.0$ \\
      & CPO      & $.060\,{\pm}.002$ & $31.0\,{\pm}2.6$ & $0$  & $12.1{\pm}1.2$ \\
      & CPPO     & $.045\,{\pm}.010$ & $22.2\,{\pm}4.0$ & $47$  & $7.3{\pm}1.6$ \\
      & CVaR-CPO & $.015\,{\pm}.001$ & $11.8\,{\pm}1.5$ & $\mathbf{87}$  & $6.4{\pm}2.0$ \\
    \midrule
    \multirow{5}{*}{\rotatebox{90}{\shortstack{IcyLake\\$\ell{=}5.5$}}}
      & Canary   & $\mathbf{.016}\,{\pm}.007$ & $\mathbf{7.1}\,{\pm}2.7$ & $\mathbf{87}$ & $.67{\pm}.06$ \\
      & PPO      & $.308\,{\pm}.002$ & $19.7\,{\pm}1.2$ & $0$ & $1.0{\pm}.00$ \\
      & CPO      & $.041\,{\pm}.003$ & $9.9\,{\pm}0.9$ & $7$  & $.96{\pm}.00$ \\
      & CPPO     & $.354\,{\pm}.002$ & $27.1\,{\pm}1.5$ & $0$ & $1.0{\pm}.00$ \\
      & CVaR-CPO & $.020\,{\pm}.002$ & $7.5\,{\pm}1.3$ & $53$  & $.93{\pm}.01$ \\
    \bottomrule
  \end{tabular}
  \caption{Safety adherence across environments and methods ($\varepsilon{=}0.02$, 30
  seeds, mean$\pm$standard error over seeds). \emph{Converged violation probability} $\hat{P}$: per-seed average violation probability over the second half of training;
  \emph{violation-probability jitter} $\hat{\sigma}$: per-seed standard deviation of the violation probability over the second half of training ($\times10^{-3}$; \cref{app:jitter}); \emph{reliability} \% safe: percent of seeds $s$ whose converged violation probability
  satisfies $\hat{P}^{(s)} \le \varepsilon$; \emph{converged reward return} $\hat J$: per-seed average undiscounted reward return over the second half of training. Note, Canary settles at the lowest $\hat P$ and $\hat\sigma$ in every environment.}
  \label{tab:safety}
\end{table}
\begin{figure}[t]
    \centering
    \includegraphics[width=0.85\linewidth]{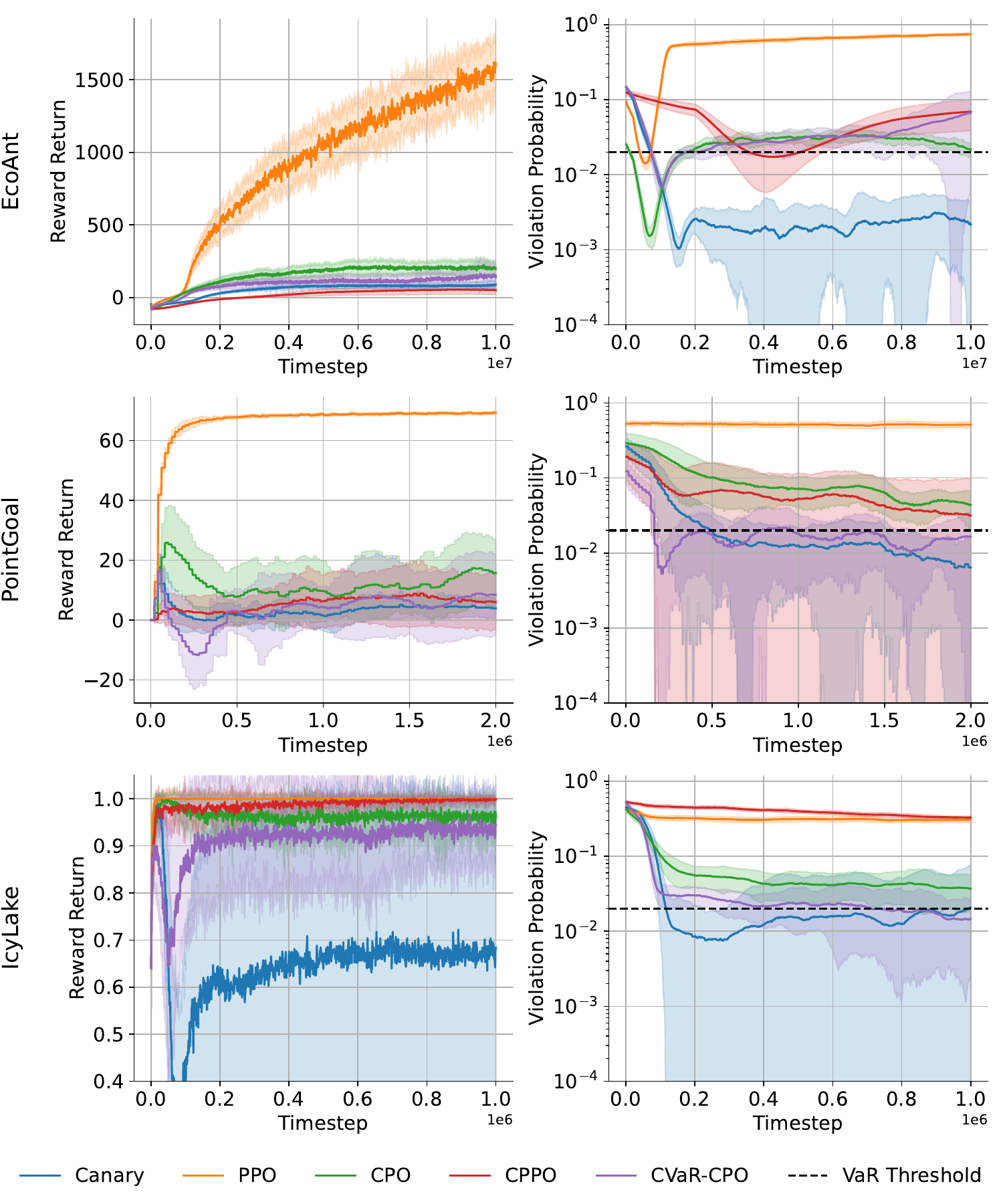}
    \caption{Reward return (left) and violation probability (right; log
    scale) during training on EcoAnt ($\ell{=}150$, top), PointGoal
    ($\ell{=}100$, middle) and IcyLake ($\ell{=}5.5$, bottom) under the VaR
    constraint $\mathbb{P}(C(\tau) > \ell) \le 0.02$. Lines are means over
    30 seeds; shaded areas one standard deviation.}
    \label{fig:benchmark}
\end{figure}

\subsection{Canary Violation Threshold ($\varepsilon$) Sensitivity}
\begin{figure}[t]
    \centering
    \begin{subfigure}[b]{0.46\linewidth}
        \centering
        \includegraphics[width=\linewidth]{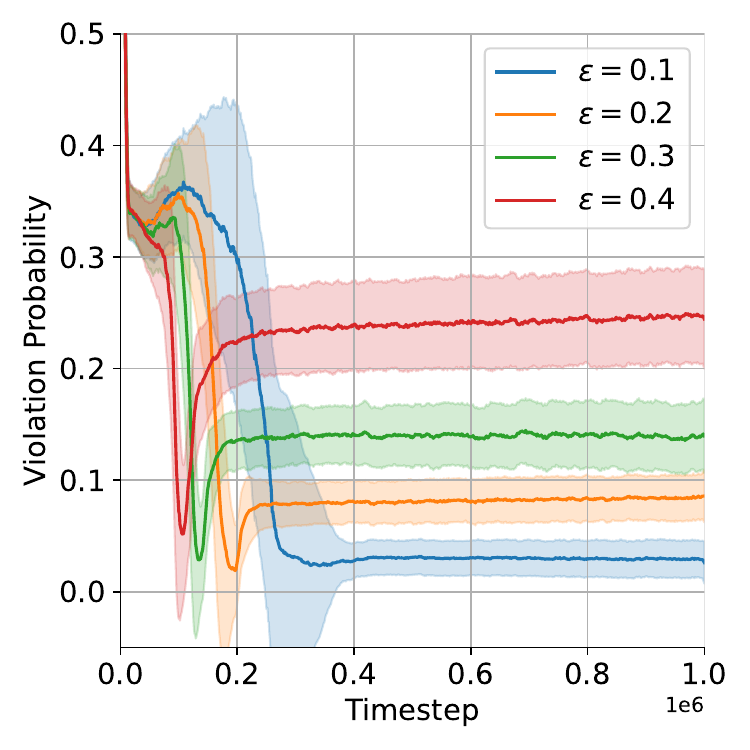}
        \caption{Training Dynamics}
        \label{fig:epsilon_ablation}
    \end{subfigure}\hfill
    \begin{subfigure}[b]{0.46\linewidth}
        \centering
        \includegraphics[width=\linewidth]{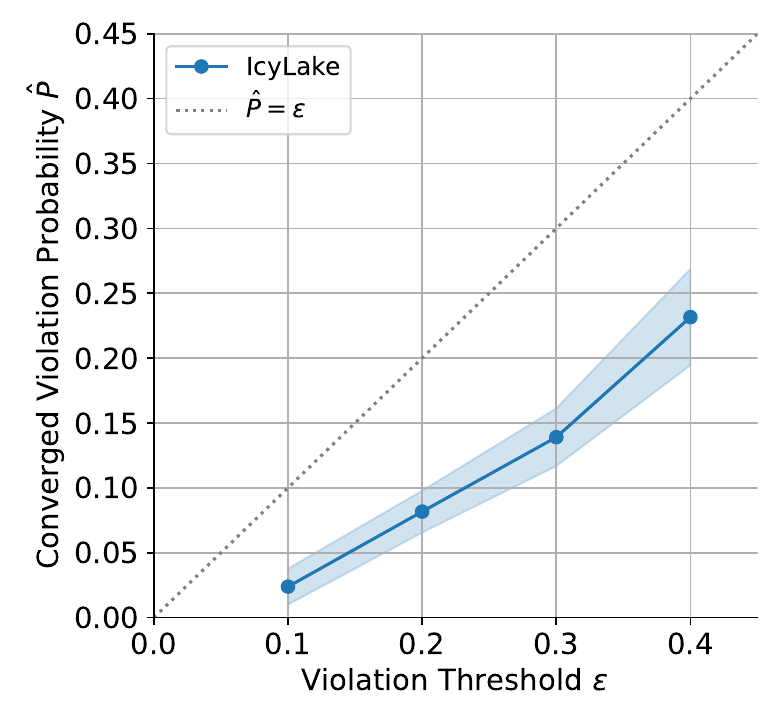}
        \caption{Converged Calibration}
        \label{fig:eps-calibration}
    \end{subfigure}
    \caption{\textbf{Violation Threshold Sensitivity:} (a) Probability of a cost
    return exceeding $\ell= 15$ in IcyLake with varying violation thresholds $\varepsilon$. Lines are mean and shaded
    areas represent one standard deviation over 500 seeds. (b) The same sweep with violation threshold vs the converged
    violation probability $\hat P$.}
    \label{fig:eps_sweep}
\end{figure}
For Canary, we conduct a sensitivity analysis on the violation threshold $\varepsilon$ in
Figure \ref{fig:epsilon_ablation}. Excluding the plateau around violation probability $0.35$, the rate of convergence is independent of $\varepsilon$. The collapse from violation probability $0.3$ to $0.1$ takes a median of roughly $20$k steps at every $\varepsilon$. \cref{fig:eps-calibration} further highlights the conservative nature of
Canary established in \cref{sec:cantelli-var}, with empirical
expected violation rates settling $2$--$4\times$ below the thresholds set.

\subsection{Anatomy of the Conservatism}
\label{sec:anatomy}
The converged violation probabilities in \cref{tab:safety} settle well below the
violation threshold $\varepsilon$, at some cost in reward. This conservatism
could stem from \emph{optimizer slack} (converging to the interior of the
Cantelli-feasible set, forgoing reward the Cantelli constraint would permit) or
\emph{inequality slack} (the conservatism of the distribution-free Cantelli constraint itself). \cref{fig:anatomy} separates the two regimes: in
EcoAnt and PointGoal every seed converges with $c_B > 0$,
so the margin there is inequality slack, the regime whose training-time violation
\cref{th:worst-case-violation} bounds. On IcyLake most seeds (22/30) instead
converge inside the boundary, so its conservatism stacks optimizer slack
on top of inequality slack.

\begin{remark}[Gaussian tails underestimate risk]
\label[remark]{rem:gaussian}
It is common to model the cost return as Gaussian \cite{wcsac}. At the converged cost return moments
(mean $\mu$ and standard deviation $\sigma$), median $z = (\ell - \mu) /
\sigma$ is $6.1$ for EcoAnt and $5.0$ for PointGoal, so a Gaussian tail
predicts violation probabilities on the order of $10^{-9}$ and $10^{-7}$, which are
$2{\times}10^{6}$ times (EcoAnt) and $3{\times}10^{4}$ times (PointGoal) below
the realized rates, an understatement of the exceedance probability by up to
six orders of magnitude.
\end{remark}
\begin{figure}[t]
    \centering
    \includegraphics[width=0.46\linewidth]{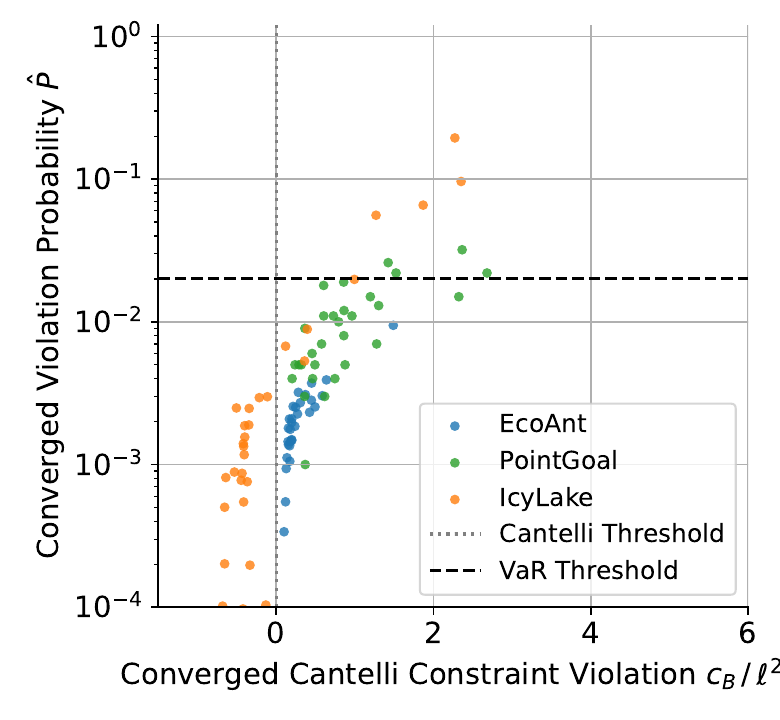}
    \caption{\textbf{Anatomy of the Conservatism}: Converged Cantelli
    constraint violation $c_B/\ell^2$ (rescaled by $\ell^2$ for plotting purposes, per-seed mean over the second half of
    training)
    against the converged violation probability $\hat P$ for EcoAnt (blue), PointGoal (green)
    and IcyLake (orange) at $\varepsilon = 0.02$ (30 seeds each, log scale).
    Seeds right of the Cantelli threshold ($c_B > 0$) carry inequality slack
    only; seeds strictly inside the boundary add optimizer slack
    (\cref{sec:anatomy}). 82 of 90 seeds have converged violation
    probabilities below the VaR threshold.}
    \label{fig:anatomy}
\end{figure}

\section{Conclusion}
\label{sec:conclusion}

We introduced Canary, an on-policy RL method that satisfies
VaR constraints. First, we formulated a smooth and tractable safe approximation of
VaR constraints from the first two moments of the cost return via Cantelli's
inequality, extending CPO's worst-case violation bounds to the Canary update
step. Second, we showed that Canary remains stable at tight violation thresholds in dense cost regimes. Finally, we found empirically that Canary attains the lowest converged
violation probability and the least violation-probability jitter of the
methods tested, contributing to its reliability across seeds.

\paragraph{Limitations.} \looseness=-1 The Cantelli-VaR constraint can conservatively reject higher reward policies that satisfy the original VaR constraint (\cref{sec:anatomy}), and settings infeasible for it may remain VaR-feasible. Canary's stability requires a dense cost signal (\cref{sec:relative-variance}). The worst-case violation bound of \cref{th:worst-case-violation} omits advantage and moment estimation errors, and $\gamma_c \rightarrow 1$ loosens it unboundedly at fixed $\delta$.

\section*{Acknowledgments}
Rohan Tangri gratefully acknowledges support from G-Research.

\section*{Impact Statement}

This paper presents work whose goal is to advance the field of Machine Learning.
There are many potential societal consequences of our work, none which we feel
must be specifically highlighted here.


\bibliography{example_paper}

@article{geibel-wysotzki,
  author  = {Geibel, Peter and Wysotzki, Fritz},
  title   = {Risk-Sensitive Reinforcement Learning Applied to Control under Constraints},
  journal = {Journal of Artificial Intelligence Research},
  volume  = {24},
  pages   = {81--108},
  year    = {2005},
}

@misc{gae,
      title={High-Dimensional Continuous Control Using Generalized Advantage Estimation}, 
      author={John Schulman and Philipp Moritz and Sergey Levine and Michael Jordan and Pieter Abbeel},
      year={2018},
      eprint={1506.02438},
      archivePrefix={arXiv},
      primaryClass={cs.LG},
      url={https://arxiv.org/abs/1506.02438}, 
}

@book{rubinstein2016simulation,
  title={Simulation and the Monte Carlo method},
  author={Rubinstein, Reuven Y and Kroese, Dirk P},
  year={2016},
  publisher={John Wiley \& Sons}
}

@misc{ilyas2020closerlookdeeppolicy,
      title={A Closer Look at Deep Policy Gradients}, 
      author={Andrew Ilyas and Logan Engstrom and Shibani Santurkar and Dimitris Tsipras and Firdaus Janoos and Larry Rudolph and Aleksander Madry},
      year={2020},
      eprint={1811.02553},
      archivePrefix={arXiv},
      primaryClass={cs.LG},
      url={https://arxiv.org/abs/1811.02553}, 
}

@misc{tamar2014optimizingcvarsampling,
      title={Optimizing the CVaR via Sampling}, 
      author={Aviv Tamar and Yonatan Glassner and Shie Mannor},
      year={2014},
      eprint={1404.3862},
      archivePrefix={arXiv},
      primaryClass={stat.ML},
      url={https://arxiv.org/abs/1404.3862}, 
}

@misc{reinforce,
      title={Sample Efficient Reinforcement Learning with REINFORCE}, 
      author={Junzi Zhang and Jongho Kim and Brendan O'Donoghue and Stephen Boyd},
      year={2020},
      eprint={2010.11364},
      archivePrefix={arXiv},
      primaryClass={cs.LG},
      url={https://arxiv.org/abs/2010.11364}, 
}

@misc{rldriving-survey,
      title={Deep Reinforcement Learning for Autonomous Driving: A Survey}, 
      author={B Ravi Kiran and Ibrahim Sobh and Victor Talpaert and Patrick Mannion and Ahmad A. Al Sallab and Senthil Yogamani and Patrick Pérez},
      year={2021},
      eprint={2002.00444},
      archivePrefix={arXiv},
      primaryClass={cs.LG},
      url={https://arxiv.org/abs/2002.00444}, 
}

@misc{rlfin-survey,
      title={Recent Advances in Reinforcement Learning in Finance}, 
      author={Ben Hambly and Renyuan Xu and Huining Yang},
      year={2023},
      eprint={2112.04553},
      archivePrefix={arXiv},
      primaryClass={q-fin.MF},
      url={https://arxiv.org/abs/2112.04553}, 
}

@article{var-cheb,
   title={Multivariate Chebyshev Inequality With Estimated Mean and Variance},
   volume={71},
   ISSN={1537-2731},
   url={http://dx.doi.org/10.1080/00031305.2016.1186559},
   DOI={10.1080/00031305.2016.1186559},
   number={2},
   journal={The American Statistician},
   publisher={Informa UK Limited},
   author={Stellato, Bartolomeo and Van Parys, Bart P. G. and Goulart, Paul J.},
   year={2017},
   month=apr, pages={123–127} }

@misc{trpo,
      title={Trust Region Policy Optimization}, 
      author={John Schulman and Sergey Levine and Philipp Moritz and Michael I. Jordan and Pieter Abbeel},
      year={2017},
      eprint={1502.05477},
      archivePrefix={arXiv},
      primaryClass={cs.LG},
      url={https://arxiv.org/abs/1502.05477}, 
}

@book{cmdp,
    author = {Erik Altman},
    title = {Constrained Markov Decision Processes},
    publisher = {Chapman \& Hall/CRC},
    year = {1999}
}

@misc{lagrange-constrained,
      title={Reward Constrained Policy Optimization}, 
      author={Chen Tessler and Daniel J. Mankowitz and Shie Mannor},
      year={2018},
      eprint={1805.11074},
      archivePrefix={arXiv},
      primaryClass={cs.LG},
      url={https://arxiv.org/abs/1805.11074}, 
}

@misc{cpo,
      title={Constrained Policy Optimization}, 
      author={Joshua Achiam and David Held and Aviv Tamar and Pieter Abbeel},
      year={2017},
      eprint={1705.10528},
      archivePrefix={arXiv},
      primaryClass={cs.LG},
      url={https://arxiv.org/abs/1705.10528}, 
}

@ARTICLE{cvar-cpo,
  author={Zhang, Qiyuan and Leng, Shu and Ma, Xiaoteng and Liu, Qihan and Wang, Xueqian and Liang, Bin and Liu, Yu and Yang, Jun},
  journal={IEEE Transactions on Neural Networks and Learning Systems}, 
  title={CVaR-Constrained Policy Optimization for Safe Reinforcement Learning}, 
  year={2025},
  volume={36},
  number={1},
  pages={830-841},
  doi={10.1109/TNNLS.2023.3331304}}

@inproceedings{distrl,
 author = {Lim, Shiau Hong and Malik, Ilyas},
 booktitle = {Advances in Neural Information Processing Systems},
 editor = {S. Koyejo and S. Mohamed and A. Agarwal and D. Belgrave and K. Cho and A. Oh},
 pages = {30977--30989},
 publisher = {Curran Associates, Inc.},
 title = {Distributional Reinforcement Learning for Risk-Sensitive Policies},
 volume = {35},
 year = {2022}
}

@article{var-lagrange,
author = {Chow, Yinlam and Ghavamzadeh, Mohammad and Janson, Lucas and Pavone, Marco},
title = {Risk-constrained reinforcement learning with percentile risk criteria},
year = {2017},
issue_date = {January 2017},
publisher = {JMLR.org},
volume = {18},
number = {1},
issn = {1532-4435},
journal = {J. Mach. Learn. Res.},
month = jan,
pages = {6070–6120},
numpages = {51}
}

@misc{cvar-ppo,
      title={Towards Safe Reinforcement Learning via Constraining Conditional Value-at-Risk}, 
      author={Chengyang Ying and Xinning Zhou and Hang Su and Dong Yan and Ning Chen and Jun Zhu},
      year={2022},
      eprint={2206.04436},
      archivePrefix={arXiv},
      primaryClass={cs.LG},
      url={https://arxiv.org/abs/2206.04436}, 
}

@misc{ppo,
      title={Proximal Policy Optimization Algorithms}, 
      author={John Schulman and Filip Wolski and Prafulla Dhariwal and Alec Radford and Oleg Klimov},
      year={2017},
      eprint={1707.06347},
      archivePrefix={arXiv},
      primaryClass={cs.LG},
      url={https://arxiv.org/abs/1707.06347}, 
}

@misc{rl-hft,
      title={Deep Reinforcement Learning for Active High Frequency Trading}, 
      author={Antonio Briola and Jeremy Turiel and Riccardo Marcaccioli and Alvaro Cauderan and Tomaso Aste},
      year={2023},
      eprint={2101.07107},
      archivePrefix={arXiv},
      primaryClass={cs.LG},
      url={https://arxiv.org/abs/2101.07107}, 
}

@misc{tagawa,
    title = {Chebyshev Inequality based Approach to Chance Constrained Portfolio Optimization},
    author = {Koyoharu Tagawa},
    year={2017},
    url = {https://www.iaras.org/iaras/filedownloads/ijmcm/2017/001-0009(2017).pdf}
}

@article{cvar1,
  title={Optimization of conditional value-at risk},
  author={R. Tyrrell Rockafellar and Stanislav Uryasev},
  journal={Journal of Risk},
  year={2000},
  volume={3},
  pages={21-41},
  url={https://api.semanticscholar.org/CorpusID:854622}
}

@article{cvar2,
title = {Conditional value-at-risk for general loss distributions},
journal = {Journal of Banking \& Finance},
volume = {26},
number = {7},
pages = {1443-1471},
year = {2002},
issn = {0378-4266},
doi = {https://doi.org/10.1016/S0378-4266(02)00271-6},
url = {https://www.sciencedirect.com/science/article/pii/S0378426602002716},
author = {R.Tyrrell Rockafellar and Stanislav Uryasev}
}

@article{gymnasium,
  title={Gymnasium: A Standard Interface for Reinforcement Learning Environments},
  author={Towers, Mark and Kwiatkowski, Ariel and Terry, Jordan and Balis, John U and De Cola, Gianluca and Deleu, Tristan and Goul{\~a}o, Manuel and Kallinteris, Andreas and Krimmel, Markus and KG, Arjun and others},
  journal={arXiv preprint arXiv:2407.17032},
  year={2024}
}

@inproceedings{safety-gymnasium,
  title={Safety Gymnasium: A Unified Safe Reinforcement Learning Benchmark},
  author={Jiaming Ji and Borong Zhang and Jiayi Zhou and Xuehai Pan and Weidong Huang and Ruiyang Sun and Yiran Geng and Yifan Zhong and Josef Dai and Yaodong Yang},
  booktitle={Thirty-seventh Conference on Neural Information Processing Systems Datasets and Benchmarks Track},
  year={2023},
  url={https://openreview.net/forum?id=WZmlxIuIGR}
}

@software{jax,
  author = {James Bradbury and Roy Frostig and Peter Hawkins and Matthew James Johnson and Chris Leary and Dougal Maclaurin and George Necula and Adam Paszke and Jake Vander{P}las and Skye Wanderman-{M}ilne and Qiao Zhang},
  title = {{JAX}: composable transformations of {P}ython+{N}um{P}y programs},
  url = {http://github.com/jax-ml/jax},
  version = {0.3.13},
  year = {2018},
}

@misc{her,
      title={Hindsight Experience Replay}, 
      author={Marcin Andrychowicz and Filip Wolski and Alex Ray and Jonas Schneider and Rachel Fong and Peter Welinder and Bob McGrew and Josh Tobin and Pieter Abbeel and Wojciech Zaremba},
      year={2018},
      eprint={1707.01495},
      archivePrefix={arXiv},
      primaryClass={cs.LG},
      url={https://arxiv.org/abs/1707.01495}, 
}

@inproceedings{wcsac,
  title={{WCSAC}: Worst-Case Soft Actor Critic for Safety-Constrained Reinforcement Learning},
  author={Yang, Qisong and Sim{\~a}o, Thiago D. and Tindemans, Simon H. and Spaan, Matthijs T. J.},
  booktitle={Proceedings of the AAAI Conference on Artificial Intelligence},
  volume={35},
  number={12},
  pages={10639--10646},
  year={2021}
}

@article{elghaoui2003,
  author  = {El Ghaoui, Laurent and Oks, Maksim and Oustry, Fran{\c{c}}ois},
  title   = {Worst-Case Value-at-Risk and Robust Portfolio Optimization: A Conic Programming Approach},
  journal = {Operations Research},
  volume  = {51},
  number  = {4},
  pages   = {543--556},
  year    = {2003},
}

@article{bertsimas2005,
  author  = {Bertsimas, Dimitris and Popescu, Ioana},
  title   = {Optimal Inequalities in Probability Theory: A Convex Optimization Approach},
  journal = {SIAM Journal on Optimization},
  volume  = {15},
  number  = {3},
  pages   = {780--804},
  year    = {2005},
}

@article{zymler2013joint,
  author  = {Zymler, Steve and Kuhn, Daniel and Rustem, Ber{\c{c}}},
  title   = {Distributionally Robust Joint Chance Constraints with Second-Order Moment Information},
  journal = {Mathematical Programming},
  volume  = {137},
  number  = {1--2},
  pages   = {167--198},
  year    = {2013},
}

@article{ono-ccdp,
  author  = {Ono, Masahiro and Pavone, Marco and Kuwata, Yoshiaki and Balaram, J.},
  title   = {Chance-Constrained Dynamic Programming with Application to Risk-Aware Robotic Space Exploration},
  journal = {Autonomous Robots},
  volume  = {39},
  number  = {4},
  pages   = {555--571},
  year    = {2015},
}

@inproceedings{santana-rao,
  title={{RAO}*: An Algorithm for Chance-Constrained {POMDP}'s},
  author={Santana, Pedro and Thi{\'e}baux, Sylvie and Williams, Brian},
  booktitle={Proceedings of the AAAI Conference on Artificial Intelligence},
  volume={30},
  number={1},
  pages={3308--3314},
  year={2016}
}

@inproceedings{xu-mannor,
  author    = {Xu, Huan and Mannor, Shie},
  title     = {Probabilistic Goal {M}arkov Decision Processes},
  booktitle = {Proceedings of the Twenty-Second International Joint Conference on Artificial Intelligence},
  pages     = {2046--2052},
  year      = {2011}
}

@inproceedings{saute,
  author    = {Sootla, Aivar and Cowen-Rivers, Alexander I. and Jafferjee, Taher and Wang, Ziyan and Mguni, David H. and Wang, Jun and Ammar, Haitham},
  title     = {Saute {RL}: Almost Surely Safe Reinforcement Learning Using State Augmentation},
  booktitle = {Proceedings of the 39th International Conference on Machine Learning},
  volume    = {162},
  pages     = {20423--20443},
  year      = {2022},
  publisher = {PMLR}
}

@article{nemirovski2012safe,
  author  = {Nemirovski, Arkadi},
  title   = {On Safe Tractable Approximations of Chance Constraints},
  journal = {European Journal of Operational Research},
  volume  = {219},
  number  = {3},
  pages   = {707--718},
  year    = {2012}
}

@book{shapiro2009lectures,
  title={Lectures on Stochastic Programming: Modeling and Theory},
  author={Alexander Shapiro and Darinka Dentcheva and Andrzej Ruszczy{\'n}ski},
  year={2009},
  publisher={SIAM},
  address={Philadelphia}
}
\bibliographystyle{icml2026}

\newpage
\appendix
\onecolumn
\section{Canary Method Proofs}
\label[appendix]{app:proofs}

\subsection{Derivation of Equation \ref{eq:chebyshev_constraint}}
\label[appendix]{app:cantelli-var}

Assume first that $\mu(\pi) < \ell$, so that $\lambda = \ell - \mu(\pi) > 0$. By Cantelli's inequality:
$$
\mathbb{P}(C(\tau) - \mu(\pi) \ge \lambda) \le \frac{\sigma^2(\pi)}{\sigma^2(\pi) + \lambda^2},
$$
which is equivalent to:
$$
\mathbb{P}(C(\tau) > \ell) \le \mathbb{P}(C(\tau) \ge \ell) \le \frac{\sigma^2(\pi)}{\sigma^2(\pi) + [\ell - \mu(\pi)]^2},
$$
where the first inequality holds because $\{C(\tau) > \ell\} \subseteq \{C(\tau) \ge \ell\}$.

To guarantee that this probability is bounded by some $\varepsilon \in (0, 1]$, it is sufficient to require that the upper bound itself is less than or equal to $\varepsilon$:
$$
\frac{\sigma^2(\pi)}{\sigma^2(\pi) + [\ell - \mu(\pi)]^2} \le \varepsilon.
$$

We can now simplify this condition. Since $\lambda > 0$ and $\varepsilon > 0$, the following steps are logically equivalent:
\begin{align*}
    &\frac{\sigma^2(\pi)}{\sigma^2(\pi) + [\ell - \mu(\pi)]^2} \le \varepsilon \\
    \Longleftrightarrow \quad &\sigma^2(\pi) \le \varepsilon[\sigma^2(\pi) + [\ell - \mu(\pi)]^2] \\
    \Longleftrightarrow \quad &\sigma^2(\pi) \le \varepsilon\sigma^2(\pi) + \varepsilon[\ell - \mu(\pi)]^2 \\
    \Longleftrightarrow \quad &(1 - \varepsilon)\sigma^2(\pi) \le \varepsilon[\ell - \mu(\pi)]^2 \\
    \Longleftrightarrow \quad &\frac{1 - \varepsilon}{\varepsilon}\sigma^2(\pi) \le [\ell - \mu(\pi)]^2 \\
    \Longleftrightarrow \quad &\left(\frac{1}{\varepsilon} - 1\right)\sigma^2(\pi) - [\ell - \mu(\pi)]^2 \le 0.
\end{align*}

In the boundary case $\mu(\pi) = \ell$, Cantelli's inequality no longer applies ($\lambda = 0$), so we verify the VaR constraint directly. Substituting $\mu(\pi) = \ell$ into the Cantelli constraint (\ref{eq:chebyshev_constraint}) eliminates its second term:
$$
B(\pi) = \left(\frac{1}{\varepsilon} - 1\right)\sigma^2(\pi) - [\ell - \ell]^2 = \left(\frac{1}{\varepsilon} - 1\right)\sigma^2(\pi) \le 0.
$$
Since $\varepsilon \in (0, 1)$, the factor $\frac{1}{\varepsilon} - 1$ is strictly positive, and $\sigma^2(\pi) \ge 0$ by definition; their product can only be non-positive if $\sigma^2(\pi) = 0$. A cost return with zero variance equals its mean with probability one, so $C(\tau) = \mu(\pi) = \ell$ almost surely and $\mathbb{P}(C(\tau) > \ell) = 0 \le \varepsilon$. Combining the two cases, $\mu(\pi) < \ell$ (via Cantelli's inequality) and $\mu(\pi) = \ell$ (via the zero-variance argument above): under the mean constraint $\mu(\pi) \le \ell$, the Cantelli constraint implies the VaR constraint $\mathbb{P}(C(\tau) > \ell) \le \varepsilon$.

\subsection{Derivation of Equation \ref{eq:square-cost}}
\label[appendix]{app:square-cost}

It is possible to break down the square cost return into the discounted sum of
local per-step terms:

\begin{gather*}
    \begin{split}
    C(\tau)^2 &= \bigg( \sum_{t=0}^\infty\gamma_c^t c_t \bigg)^2 \\
    &= \sum_{t=0}^\infty\gamma_c^t c_t \sum_{k=0}^\infty\gamma_c^k c_k\\
    &= \sum_{t=0}^\infty\gamma_c^t c_t \bigg( \sum_{k=0}^{t-1}\gamma_c^k c_k + \gamma_c^t c_t + \sum_{k=t+1}^\infty\gamma_c^k c_k \bigg)\\
    &= \sum_{t=0}^\infty\gamma_c^t c_t \bigg( y_t + \gamma_c^t c_t + \sum_{k=t+1}^\infty\gamma_c^k c_k \bigg)\\
    &= \sum_{t=0}^\infty\gamma_c^{2t}c_t^2 + \sum_{t=0}^\infty\gamma_c^t c_t y_t + \sum_{t=0}^\infty\gamma_c^t c_t \sum_{k=t+1}^\infty\gamma_c^k c_k\\
    &= \sum_{t=0}^\infty\gamma_c^{2t}c_t^2 + 2\sum_{t=0}^\infty\gamma_c^t c_t y_t\\
    &= \sum_{t=0}^\infty\gamma_c^t(\gamma_c^{t}c_t^2 + 2 c_t y_t).
\end{split}
\end{gather*}

Writing the square as a product of two copies of the sum, indexed by $t$ and
$k$, lets us hold the outer index $t$ fixed and split only the inner sum, into
the terms strictly before $t$, the term at $k = t$, and the terms strictly
after $t$. The first of these is by definition the accumulated cost
$y_t = \sum_{k=0}^{t-1}\gamma_c^k c_k$, which is where the recursion in
Equation \ref{eq:y-recursion} enters.

The only remaining step is the last cross term. Relabeling its two indices,
$(t, k) \mapsto (k, t)$, maps the region $k > t$ onto the region $k < t$ and
leaves the summand $\gamma_c^{k+t}c_kc_t$ unchanged, so
\begin{equation*}
    \sum_{t=0}^\infty\gamma_c^t c_t \sum_{k=t+1}^\infty\gamma_c^k c_k
    = \sum_{t=0}^\infty\gamma_c^t c_t \sum_{k=0}^{t-1}\gamma_c^k c_k
    = \sum_{t=0}^\infty\gamma_c^t c_t y_t ,
\end{equation*}
which is what makes the two cross terms combine into the factor of $2$. Each
unordered pair $\{k, t\}$ with $k \ne t$ is counted once looking forward from
$t$ and once looking back from $k$; $y_t$ collects only the backward half,
and the factor of $2$ accounts for the other.

\subsection{Derivation of Equation \ref{eq:cpo-cheb}}
\label[appendix]{app:cpo-cantelli}

We need to show that we can reconstruct the Cantelli-VaR constraint in a CMDP
form. First, we take the moments of the cost return to define the mean and
variance:

\begin{gather*}
    \mu(\pi) = \mathbb{E}_{\tau \sim \pi}\left[ C(\tau) \right] \\
    \tilde \mu(\pi) = \mathbb{E}_{\tau \sim \pi}\left[ C(\tau)^2 \right] \\
    \sigma^2(\pi) = \tilde \mu(\pi) - \mu(\pi)^2.
\end{gather*}

These can be combined with the square cost return in Equation
\ref{eq:square-cost} to break down the global episode-level Cantelli-VaR
constraint into a form containing a local per-step discounted term with the
augmented state space (\ref{eq:aug-state}):

\begin{gather*}
    \begin{split}
        B(\pi) &= \left(\frac{1}{\varepsilon} - 1\right) \sigma^2(\pi) - [\ell - \mu(\pi)]^2 \\
        &= \left(\frac{1}{\varepsilon} - 1\right) [\tilde{\mu}(\pi) - \mu(\pi)^2] - \ell^2 + 2 \ell \mu(\pi) - \mu(\pi)^2 \\
        &= \left(\frac{1}{\varepsilon} - 1\right) \tilde{\mu}(\pi) - \frac{1}{\varepsilon} \mu(\pi)^2 + 2 \ell \mu(\pi) - \ell^2 \\
        &= \left(\frac{1}{\varepsilon} - 1\right) \tilde{\mu}(\pi) + 2\ell \mu(\pi) - \left[\frac{1}{\varepsilon} \mu(\pi)^2 + \ell^2\right]
    \end{split}
\end{gather*}

This maps to the final constraint form in Equation \ref{eq:cpo-cheb} setting the policy-dependent bound $\rho(\pi) = \frac{1}{\varepsilon}\mu(\pi)^2 + \ell^2$:
\begin{equation*}
    B(\pi) = \left(\frac{1}{\varepsilon} - 1\right) \tilde{\mu}(\pi) + 2\ell \mu(\pi) - \rho(\pi) \le 0
\end{equation*}

The bundling of $\mu(\pi)^2$ into $\rho(\pi)$ is not merely notational. The
terms $\tilde{\mu}(\pi)$ and $\mu(\pi)$ are expected discounted sums of
per-step costs on the augmented state space, so they admit
advantage-based trust-region surrogates. In contrast, $\mu(\pi)^2$ is the
square of an expectation: it equals $\mathbb{E}_{\tau, \tau' \sim \pi}
[C(\tau) C(\tau')]$ for independent trajectories $\tau$ and $\tau'$, an
expectation over a pair of trajectories rather than one, so it cannot be
written as the expected return of any per-step cost, even after state
augmentation. We therefore move
$\mu(\pi)^2$, together with the constant $\ell^2$, to the constraint-limit
side as the policy-dependent bound $\rho(\pi)$. This term is approximated by
substituting the trust-region surrogate $L_\mu(\pi)$ (\cref{eq:lmu}) into the square rather than through
a trust-region surrogate of its own, which is the origin of the quadratic term
in Equation \ref{eq:d-surrogate} (\cref{app:d-surrogate}), and its
approximation error is bounded separately via a difference-of-squares
argument in \cref{app:theorem}.

\subsection{Derivation of Equation \ref{eq:d-surrogate}}
\label[appendix]{app:d-surrogate}

First, we can expand the difference between $\rho(\pi)$ and $\rho(\pi_k)$:

\begin{gather*}
    \rho(\pi) - \rho(\pi_k) = \frac{1}{\varepsilon} [\mu(\pi)^2 - \mu(\pi_k)^2]\\
    \rho(\pi) = \rho(\pi_k) + \frac{1}{\varepsilon} [\mu(\pi)^2 - \mu(\pi_k)^2].
\end{gather*}

Then, we can take the trust-region surrogate of $\mu(\pi)$ around some other
policy $\pi_k$:

\begin{gather*}
    \mu(\pi) \approx L_\mu(\pi) = \mu(\pi_k) + \frac{1}{1-\gamma_c} \mathbb{E}_{\substack{x \sim d^c_{\pi_k} \\ a \sim \pi}}[A^C_{\pi_k}(x, a)]\\
    L_\mu(\pi)^2 = \mu(\pi_k)^2 + \frac{2\mu(\pi_k)}{1 - \gamma_c}\mathbb{E}_{\substack{x \sim d^c_{\pi_k} \\ a \sim \pi}}[A^C_{\pi_k}(x, a)] + \frac{1}{(1 - \gamma_c)^2} \mathbb{E}_{\substack{x \sim d^c_{\pi_k} \\ a \sim \pi}}[A^C_{\pi_k}(x, a)]^2.
\end{gather*}

Substituting this in gets the trust-region surrogate $L_\rho(\pi)$:

\begin{align*}
    L_\rho(\pi) &= \rho(\pi_k) + \frac{1}{\varepsilon} [L_\mu(\pi)^2 - \mu(\pi_k)^2]\\
    &= \rho(\pi_k) + \frac{1}{\varepsilon(1-\gamma_c)} \left(2\mu(\pi_k)\mathbb{E}_{\substack{x \sim d^c_{\pi_k} \\ a \sim \pi}}[A^C_{\pi_k}(x, a)] + \frac{1}{1 - \gamma_c} \mathbb{E}_{\substack{x \sim d^c_{\pi_k} \\ a \sim \pi}}[A^C_{\pi_k}(x, a)]^2\right).
\end{align*}

\subsection{Derivation of Theorem \ref{th:worst-case-violation}}
\label{app:theorem}
Given a current policy $\pi_k$, we aim to show the worst-case constraint
violation of a policy $\pi_{k+1}$ following the Canary update rule
(\ref{eq:varcpo-constraint}). We start by considering the factored Cantelli-VaR
constraint $B(\pi)$:
\begin{equation*}
    B(\pi_{k+1}) = \left(\frac{1}{\varepsilon} - 1\right) \tilde{\mu}(\pi_{k+1}) + 2\ell \mu(\pi_{k+1}) - \rho(\pi_{k+1}).
\end{equation*}

Using the trust-region surrogates $L_{\tilde{\mu}}$, $L_{\mu}$, and
$L_\rho$, the algorithm update rule enforces the linearized constraint
$\left(\frac{1}{\varepsilon} - 1\right) L_{\tilde{\mu}}(\pi_{k+1}) + 2\ell L_{\mu}(\pi_{k+1}) -
L_\rho(\pi_{k+1}) \le 0$. Therefore, the true constraint violation is
bounded by the approximation error:

\begin{align*}
    B(\pi_{k+1}) &\le B(\pi_{k+1}) - \left[ \left(\frac{1}{\varepsilon} - 1\right) L_{\tilde{\mu}}(\pi_{k+1}) + 2\ell L_{\mu}(\pi_{k+1}) - L_\rho(\pi_{k+1}) \right] \\
    &= \left(\frac{1}{\varepsilon} - 1\right) \left[ \tilde{\mu}(\pi_{k+1}) - L_{\tilde{\mu}}(\pi_{k+1}) \right] + 2\ell \left[ \mu(\pi_{k+1}) - L_{\mu}(\pi_{k+1}) \right] + \left[ L_\rho(\pi_{k+1}) - \rho(\pi_{k+1}) \right].
\end{align*}
Applying the triangle inequality gives:
\begin{equation*}
    B(\pi_{k+1}) \le \left(\frac{1}{\varepsilon} - 1\right) |\tilde{\mu}(\pi_{k+1}) - L_{\tilde{\mu}}(\pi_{k+1})| + 2|\ell| \, |\mu(\pi_{k+1}) - L_{\mu}(\pi_{k+1})| + |L_\rho(\pi_{k+1}) - \rho(\pi_{k+1})|.
\end{equation*}

From standard CPO theory, the worst-case approximation errors for the linear
expected returns are bounded by:
\begin{gather*}
    |\tilde{\mu}(\pi_{k+1}) - L_{\tilde{\mu}}(\pi_{k+1})| \le \frac{\sqrt{2 \delta} \gamma_c}{(1 - \gamma_c)^2} \alpha_{\pi_{k+1}}^{\tilde{C}}\\
    |\mu(\pi_{k+1}) - L_{\mu}(\pi_{k+1})| \le \frac{\sqrt{2 \delta} \gamma_c}{(1 - \gamma_c)^2} \alpha_{\pi_{k+1}}^{C},
\end{gather*}
where $\alpha_{\pi_{k+1}}^{\tilde{C}} = \max_x |\mathbb{E}_{a\sim
\pi_{k+1}}[A_{\pi_k}^{\tilde{C}}(x, a)]|$ and $\alpha_{\pi_{k+1}}^{C} = \max_x
|\mathbb{E}_{a\sim \pi_{k+1}}[A_{\pi_k}^{C}(x, a)]|$.

We then need to deal with the approximation error in the constraint limit
$\rho(\pi_{k+1})$ itself. First we define variables $X$ and $Y$ for notational
brevity:
\begin{gather*}
    X = \mathbb{E}_{\substack{x \sim d^c_{\pi_{k+1}} \\ a \sim \pi_{k+1}}}[A_{\pi_k}^C(x, a)] = \sum_x d^c_{\pi_{k+1}}(x)\sum_a \pi_{k+1}(a\mid x) A_{\pi_k}^C(x, a)\\
    Y = \mathbb{E}_{\substack{x \sim d^c_{\pi_{k}} \\ a \sim \pi_{k+1}}}[A_{\pi_k}^C(x, a)] = \sum_x d^c_{\pi_k}(x) \sum_a \pi_{k+1}(a\mid x) A_{\pi_k}^C(x, a).
\end{gather*}

This allows us to write the error in the constraint limit itself as:
\begin{equation*}
    |L_\rho(\pi_{k+1}) - \rho(\pi_{k+1})| \le \frac{1}{\varepsilon(1-\gamma_c)}\left[2 |\mu(\pi_k)| \, |X - Y| + \frac{1}{1 - \gamma_c}|X^2 - Y^2|\right].
\end{equation*}

To bound this, we now need to find the limits for $|X-Y|$ and $|X^2 - Y^2|$.
Defining the total variational divergence of the policy update in discrete
action space as $D_{TV}(\pi_{k+1}, \pi_k) = \frac{1}{2} \sum_a |\pi_{k+1}(a \mid
x) - \pi_k(a \mid x)|$, we can use the existing result of the worst case state
visitation frequency difference:
\begin{equation*}
    || d^c_{\pi_{k+1}}(x) - d^c_{\pi_k}(x)||_1 \le \frac{2 \gamma_c}{1 - \gamma_c} \mathbb{E}_{x \sim d^c_{\pi_k}}[D_{TV}(\pi_{k+1}, \pi_k)],
\end{equation*}
and applying the same trust region bound theory in CPO:
\begin{equation*}
    |X-Y| \le \frac{2 \gamma_c \alpha_{\pi_{k+1}}^C}{1 - \gamma_c}\mathbb{E}_{x \sim d^c_{\pi_k}}[D_{TV}(\pi_{k+1}, \pi_k)].
\end{equation*}
To handle $|X^2 - Y^2|$, we can first decompose it:
\begin{equation*}
    |X^2 - Y^2| = |X-Y||X+Y|.
\end{equation*}
This is useful since we previously defined the bound for $|X-Y|$ already, so we
are now left with $|X+Y|$:
\begin{gather*}
    |X+Y| \le |X| + |Y|.
\end{gather*}
First, we can define an inner term of $X$ and $Y$ as a function of the augmented
state $x$ alone:
\begin{gather*}
    f(x) = \mathbb{E}_{a \sim \pi_{k+1}} [A_{\pi_k}^C(x, a)] \\
    \alpha_{\pi_{k+1}}^C = \max_{x} |f(x)|.
\end{gather*}
Then we can derive the bound for $|X|$:
\begin{align*}
    |X| &\le \sum_{x} d_{\pi_{k+1}}^c(x) \cdot |f(x)| \\
    &\le \sum_{x} d_{\pi_{k+1}}^c(x) \cdot \alpha_{\pi_{k+1}}^C\\
    &\le \alpha_{\pi_{k+1}}^C \left( \sum_{x} d_{\pi_{k+1}}^c(x) \right)\\
    &\le \alpha_{\pi_{k+1}}^C.
\end{align*}
The same steps apply for $|Y|$:
\begin{equation*}
    |Y| \le \alpha_{\pi_{k+1}}^C
\end{equation*}
such that the bound for $|X+Y|$ can be derived:
\begin{equation*}
    |X+Y| \le 2 \alpha_{\pi_{k+1}}^C
\end{equation*}
and $|X^2 - Y^2|$:
\begin{equation*}
    |X^2 - Y^2| \le \frac{4 \gamma_c (\alpha_{\pi_{k+1}}^C)^2}{1 - \gamma_c} \mathbb{E}_{x \sim d^c_{\pi_k}}[D_{TV}(\pi_{k+1}, \pi_k)].
\end{equation*}

We can also use Pinsker's inequality to bound the total variational distance
with the KL divergence, which in turn is bounded in the update step by $\delta$:
\begin{equation*}
    D_{TV}(P, Q) \le \sqrt{\frac{1}{2} D_{KL}(P, Q)},
\end{equation*}
and putting everything together, we can create the final worst-case constraint
violation by summing the three separated bound components:
\begin{equation*}
    B(\pi_{k+1}) \le \frac{\sqrt{2\delta}\gamma_c}{(1-\gamma_c)^2} \left( \left[\frac{1}{\varepsilon} - 1\right] \alpha_{\pi_{k+1}}^{\tilde{C}} + 2 |\ell| \alpha_{\pi_{k+1}}^{C} + \frac{2\alpha_{\pi_{k+1}}^{C}}{\varepsilon} \left[ |\mu(\pi_k)| + \frac{\alpha_{\pi_{k+1}}^{C}}{1-\gamma_c} \right] \right).
\end{equation*}

The mean overshoot bound (\ref{eq:mean-overshoot}) follows directly from the
linearized mean constraint of OP \ref{eq:varcpo-constraint} together with the approximation
error bound above: $L_\mu(\pi_{k+1}) \le \ell$, and $|\mu(\pi_{k+1}) -
L_\mu(\pi_{k+1})| \le \frac{\sqrt{2\delta}\gamma_c}{(1-\gamma_c)^2}
\alpha_{\pi_{k+1}}^{C}$, so
\begin{equation*}
    \mu(\pi_{k+1}) \le L_\mu(\pi_{k+1}) + |\mu(\pi_{k+1}) -
L_\mu(\pi_{k+1})| \le \ell + \frac{\sqrt{2\delta}\gamma_c}{(1-\gamma_c)^2} \alpha_{\pi_{k+1}}^{C}.
\end{equation*}

\subsection{Proof of Corollary \ref{cor:recovery}}
\label[appendix]{app:corollary}

We first show that the VaR constraint is provably violated on $V$. Define the reflected cost return $X = -C(\tau)$, with
$$
\mu_X(\pi) = -\mu(\pi), \qquad
\sigma_X^2(\pi) = \sigma^2(\pi), \qquad
\ell_X = -\ell.
$$
Reflection exchanges the two tails,
$$
\{X \ge \ell_X\} = \{-C(\tau) \ge -\ell\} = \{C(\tau) \le \ell\},
$$
so an upper bound on the upper tail of $X$ is an upper bound on the lower tail
of $C(\tau)$. Mirroring the construction of \cref{app:cantelli-var}, which sets
$\lambda = \ell - \mu(\pi)$, we take
$$
\lambda_X = \ell_X - \mu_X(\pi) = \mu(\pi) - \ell,
$$
which is positive exactly when $\mu(\pi) > \ell$. Cantelli's inequality
(\ref{eq:cantelli}) applied to $X$ then gives
$$
\mathbb{P}\big(X - \mu_X(\pi) \ge \lambda_X\big)
\le \frac{\sigma_X^2(\pi)}{\sigma_X^2(\pi) + \lambda_X^2},
$$
and substituting the definitions back,
$$
\mathbb{P}\big(C(\tau) \le \ell\big)
\le \frac{\sigma^2(\pi)}{\sigma^2(\pi) + [\mu(\pi) - \ell]^2}.
$$
Since $\mathbb{P}(C(\tau) > \ell) = 1 -
\mathbb{P}(C(\tau) \le \ell)$, taking complements gives
$$
\mathbb{P}\big(C(\tau) > \ell\big)
\ge \frac{[\mu(\pi) - \ell]^2}{\sigma^2(\pi) + [\mu(\pi) - \ell]^2}.
$$
Requiring this lower bound to exceed $\varepsilon$, and using $\varepsilon \in
(0, 1)$ and $\lambda_X > 0$ throughout,
\begin{align*}
    \frac{\lambda_X^2}{\sigma^2(\pi) + \lambda_X^2} > \varepsilon
    \quad &\Longleftrightarrow \quad \lambda_X^2 (1 - \varepsilon) > \varepsilon \sigma^2(\pi) \\
    \quad &\Longleftrightarrow \quad \lambda_X > \sigma(\pi) \sqrt{\frac{\varepsilon}{1 - \varepsilon}}
    = \frac{\sigma(\pi)}{\sqrt{\frac{1}{\varepsilon} - 1}},
\end{align*}
that is, $\mu(\pi) > \ell + \sigma(\pi) / \sqrt{\frac{1}{\varepsilon} - 1}$, which defines the
provably violating region $V$.

We now bound the expected cost return of each tier. All three tiers of the selection rule in \cref{sec:recovery} constrain
$\pi_{k+1}$ to the trust region $\bar{D}_{KL}(\pi_{k+1}, \pi_k) \le \delta$,
so the CPO approximation-error bound used in \cref{app:theorem},
$|\mu(\pi_{k+1}) - L_\mu(\pi_{k+1})| \le K \alpha^C_{\pi_{k+1}}$, applies to
every tier. In particular,
\begin{equation}
\label{eq:mu-surrogate-error}
    \mu(\pi_{k+1}) \le L_\mu(\pi_{k+1}) + |\mu(\pi_{k+1}) - L_\mu(\pi_{k+1})| \le L_\mu(\pi_{k+1}) + K \alpha^C_{\pi_{k+1}},
\end{equation}
and it suffices to bound the surrogate $L_\mu(\pi_{k+1})$ in each tier.

\emph{Tiers a--b} ($m_\mu \le \ell$). Both OP \ref{eq:varcpo-constraint} and
Tier b of (\ref{eq:recovery}) contain the constraint $L_\mu(\pi_{k+1}) \le \ell$, so
(\ref{eq:mu-surrogate-error}) gives $\mu(\pi_{k+1}) \le \ell + K
\alpha^C_{\pi_{k+1}}$ directly. For Tier a, this is the expected cost return bound
(\ref{eq:mean-overshoot}) of \cref{th:worst-case-violation}. Membership of $V$ requires $\mu(\pi_{k+1}) > \ell +
\sigma(\pi_{k+1})/\sqrt{\frac{1}{\varepsilon} - 1}$. If $\sigma(\pi_{k+1}) \ge \sqrt{\frac{1}{\varepsilon} - 1}\, K
\alpha^C_{\pi_{k+1}}$, then
$$
\ell + \frac{\sigma(\pi_{k+1})}{\sqrt{\frac{1}{\varepsilon} - 1}} \ge \ell + K \alpha^C_{\pi_{k+1}} \ge \mu(\pi_{k+1}),
$$
so $\pi_{k+1} \notin V$.

\emph{Tier c} ($m_\mu > \ell$). The current policy is always admissible, with
$L_\mu(\pi_k) = \mu(\pi_k)$ by the first-order validity of the surrogate
(\cref{sec:CPO}), so the Tier c minimizer in (\ref{eq:recovery}) satisfies
$L_\mu(\pi_{k+1}) = m_\mu \le \mu(\pi_k)$. Combining with
(\ref{eq:mu-surrogate-error}),
$$
\mu(\pi_{k+1}) \le \mu(\pi_k) + K \alpha^C_{\pi_{k+1}}.
$$

\section{Relative Variance of VaR Constraint Estimators}
\label[appendix]{app:scaling}

As established in \cref{sec:relative-variance}, a constrained policy update
depends on estimates of both the constraint value and its policy gradient. We
study the statistical quality of these estimates through their relative
variance (inverse signal-to-noise ratio), defined in
\cref{eq:relative-variance}. The purpose of this appendix is to characterize
how the relative variance of the indicator, CVaR, and Cantelli estimators
depends on the fraction of a trajectory batch that is informative for each
estimator.

Throughout, as in \cref{sec:relative-variance}, costs are non-negative,
$c_t \ge 0$ almost surely, so $C(\tau) \ge 0$, and we consider a batch of
$N$ independent trajectories
\[
    \tau_1,\ldots,\tau_N
    \overset{\mathrm{i.i.d.}}{\sim}
    \pi_\theta.
\]
Independence is assumed only across trajectories; states and actions
within an individual trajectory retain the temporal dependence induced
by the MDP.

We write each method's constraint in the canonical form
\[
    J_C(\theta) \leq 0,
\]
where $J_C(\theta)$ is the population constraint value and
$\hat{J}_C(\theta)$ its Monte Carlo estimator. Similarly,
$\nabla_\theta J_C(\theta)$ denotes the population constraint gradient
and $\hat{g}$ its Monte Carlo estimator. These are the quantities
consumed by a constrained optimizer, for example in a Lagrangian
multiplier update or a trust-region feasibility test.

For a quantity $m$ with estimator $\hat{m}$, recall from
\cref{eq:relative-variance} that its relative variance is
\[
    \kappa^2[m]
    =
    \frac{\operatorname{Var}[\hat{m}]}
         {\lVert m\rVert_2^2}.
\]
Relative variance is therefore undefined when the underlying signal
vanishes. In particular, in our case, when the quantity in question is $J_C(\theta)$, the relative variance of a constraint-value
estimator diverges mechanically at the constraint boundary
$J_C(\theta)=0$, independently of the estimator being considered. This is
an artifact of the metric rather than of any particular estimator, and
\cref{ass:slack} quantifies the distance from the boundary at which
the constraint values are compared.

\paragraph{Effective sample size.}
The three estimators differ fundamentally in the fraction of a
trajectory batch that enters their trajectory-level cost statistic.
To make this precise, let $h(\tau)$ denote the trajectory-level
statistic associated with an estimator, and define its \emph{active event} by
\[
    \mathcal{E}_h
    :=
    \{\tau : h(\tau) \neq 0\}.
\]
For a realized batch, define the \emph{effective sample size} associated with
$h$ as
\[
    N_{\mathrm{eff}}(h)
    :=
    \sum_{i=1}^{N}
    \mathbf{1}\{\tau_i \in \mathcal{E}_h\}
    =
    \sum_{i=1}^{N}
    \mathbf{1}\{h(\tau_i)\neq 0\}.
\]
Thus, $N_{\mathrm{eff}}(h)$ is a random variable. Since the trajectories
are identically distributed,
\[
    \mathbb{E}[N_{\mathrm{eff}}(h)]
    =
    N\,\mathbb{P}(\mathcal{E}_h).
\]
We use this expected effective sample size to characterize the
sampling regime of each estimator.

Let
\[
    p
    :=
    \mathbb{P}\bigl(C(\tau)>\ell\bigr)
\]
denote the policy's violation probability, and let
\[
    f
    :=
    \mathbb{P}\bigl(C(\tau)>0\bigr)
\]
denote the probability that a trajectory incurs any positive cost.
Since \cref{ass:support} requires $\ell>0$,
\[
    \{C(\tau)>\ell\}
    \subseteq
    \{C(\tau)>0\},
\]
and hence
\[
    p \leq f.
\]

For the \emph{indicator estimator}, the trajectory-level statistic is
\[
    h_{\mathrm{Ind}}(\tau)
    :=
    \mathbf{1}\{C(\tau)>\ell\}.
\]
Its active event is therefore
\[
    \mathcal{E}_{\mathrm{Ind}}
    =
    \{C(\tau)>\ell\},
\]
which occurs with probability $p$. Consequently,
\[
    \mathbb{E}
    \left[
        N_{\mathrm{eff}}(h_{\mathrm{Ind}})
    \right]
    =
    pN.
\]
Since $p \le f$, the indicator estimator's expected effective sample size
$pN$ is at most $fN$, and shrinks as the threshold tightens while $f$ does
not.

For the \emph{CVaR estimator}, the trajectory-level statistic is
\[
    h_{\mathrm{CVaR}}(\tau)
    :=
    \bigl(C(\tau)-\eta^*\bigr)^+.
\]
Its active event is
\[
    \mathcal{E}_{\mathrm{CVaR}}
    =
    \{C(\tau)>\eta^*\}.
\]
Under \cref{ass:support}, which requires that the cost return has no atom at $\eta^*$,
\[
    \mathbb{P}\bigl(C(\tau)>\eta^*\bigr)
    =
    \varepsilon,
\]
and hence
\[
    \mathbb{E}
    \left[
        N_{\mathrm{eff}}(h_{\mathrm{CVaR}})
    \right]
    =
    \varepsilon N.
\]
Akin to the indicator, this is thresholded by $\varepsilon$ rather than $f$:
since \cref{ass:support} requires $\eta^* > 0$,
\[
    \{C(\tau)>\eta^*\}
    \subseteq
    \{C(\tau)>0\},
\]
so $\varepsilon \le f$. Furthermore, the inequality is strict, because $\eta^* > 0$
is the infimum in \cref{eq:strict-var}, so $\eta = 0$ does not satisfy
$\mathbb{P}(C(\tau) > \eta) \le \varepsilon$, that is, $\varepsilon < f$. The
CVaR estimator's expected effective sample size $\varepsilon N$ is therefore
strictly below $fN$, and shrinks as the threshold tightens while $f$ does
not.

The \emph{Cantelli estimator} does not threshold trajectories at either
$\ell$ or $\eta^*$. Instead, it estimates the first two cost moments
using the trajectory-level statistics
\[
    h_1(\tau) := C(\tau),
    \qquad
    h_2(\tau) := C(\tau)^2.
\]
Since costs are non-negative, both statistics have the same active
event,
\[
    \mathcal{E}_{\mathrm{Cantelli}}
    =
    \{C(\tau)>0\},
\]
which occurs with probability $f$. Therefore,
\[
    \mathbb{E}
    \left[
        N_{\mathrm{eff}}(h_1)
    \right]
    =
    \mathbb{E}
    \left[
        N_{\mathrm{eff}}(h_2)
    \right]
    =
    fN.
\]

Thus, the expected effective sample sizes associated with the three
methods are
\[
    \begin{array}{c|c|c}
        \text{Estimator}
        &
        \text{Active event}
        &
        \mathbb{E}[N_{\mathrm{eff}}]
        \\ \hline
        \text{Indicator}
        &
        \{C(\tau)>\ell\}
        &
        pN
        \\
        \text{CVaR}
        &
        \{C(\tau)>\eta^*\}
        &
        \varepsilon N
        \\
        \text{Cantelli}
        &
        \{C(\tau)>0\}
        &
        fN
    \end{array}.
\]

These expected effective sample sizes provide the intuition for the
relative-variance scaling derived below. Under the corresponding
non-degeneracy assumptions, the derivations show that the
relative variance scales, up to constant factors, as the inverse
expected effective sample size,
\[
    \kappa^2
    =
    \Theta\left(
        \frac{1}{\mathbb{E}[N_{\mathrm{eff}}]}
    \right).
\]
This gives the scaling classes
\[
    \Theta\left(\frac{1}{pN}\right),
    \qquad
    \Theta\left(\frac{1}{\varepsilon N}\right),
    \qquad
    \Theta\left(\frac{1}{fN}\right)
\]
for the indicator, CVaR, and Cantelli estimators, respectively. The
following subsections establish the corresponding results and make
explicit the conditions under which these scaling classes hold.

\paragraph{Dense and sparse cost regimes.}
The distinction between the three expected effective sample sizes is
most important in the dense-cost, low-violation regime. If $f$ remains
bounded away from zero while $p,\varepsilon\rightarrow 0$, then the
expected fractions of the batch active for the indicator and CVaR
estimators satisfy
\[
    \frac{
        \mathbb{E}[N_{\mathrm{eff}}(h_{\mathrm{Ind}})]
    }{N}
    =
    p
    \rightarrow 0,
\]
and
\[
    \frac{
        \mathbb{E}[N_{\mathrm{eff}}(h_{\mathrm{CVaR}})]
    }{N}
    =
    \varepsilon
    \rightarrow 0,
\]
whereas for either Cantelli moment,
\[
    \frac{
        \mathbb{E}[N_{\mathrm{eff}}(h_i)]
    }{N}
    =
    f,
    \qquad
    i\in\{1,2\},
\]
remains bounded away from zero. Thus, indicator and CVaR estimation
become increasingly dominated by rare-event sampling, while the
Cantelli moment estimators continue to extract information from a
non-vanishing fraction of the batch.

This distinction disappears when the environmental cost signal itself
is sparse. If $f\leq\varepsilon$, the CVaR quantile satisfies
$\eta^*=0$. In this regime the CVaR active event coincides with the
positive-cost support, and its expected effective sample size becomes
$fN$, matching that of the Cantelli moment estimators. The effective
sample size advantage considered in this appendix is therefore
specific to dense-cost regimes, which is the requirement ($f > \varepsilon$)
noted in \cref{sec:relative-variance} and stated as a limitation in
\cref{sec:conclusion}.

\paragraph{From trajectory statistics to constraint values.}
The expected effective sample size alone does not determine relative
variance: the distribution of the active contributions may itself change
with $p$, $\varepsilon$, or $f$, and the constraint value is not the
statistic $h(\tau)$ itself but a quantity built from it. The object a
constrained optimizer consumes is the constraint value $J_C(\theta)$ and its
policy gradient, so we now state the two results the per-method derivations
instantiate in terms of $J_C(\theta)$ directly, and make explicit where each
assumption of \cref{sec:relative-variance} enters.

Every constraint value below decomposes as
\begin{equation}
\label{eq:jc-decomp}
    J_C(\theta) = \mathbb{E}_{\tau \sim \pi_\theta}[x(\tau)] + b,
\end{equation}
where $x(\tau)$ is a \emph{sampled statistic} of a single trajectory and $b$
is an \emph{offset} that does not depend on the sampled trajectories
$\tau_1, \ldots, \tau_N$, such as the threshold $-\varepsilon$ in the
indicator constraint or the cost limit $-\ell$ in the mean constraint. The sampled statistic is built from the method's
trajectory-level statistic $h(\tau)$: 
\begin{gather}
    x_{\mathrm{Ind}}(\tau) = h_{\mathrm{Ind}}(\tau)\\
    x_{\mathrm{CVaR}}(\tau) = \frac{h_{\mathrm{CVaR}}(\tau)}{\varepsilon}\\
    x_{\mathrm{Cantelli}}(\tau) = c_1 h_1(\tau) + c_2 h_2(\tau) \quad \text{(\cref{app:cantelli})}.
\end{gather}
In every case, $x(\tau) = 0$ whenever $h(\tau) = 0$, so
$x(\tau)$ vanishes outside the active event $\mathcal{E}_h$ and has the same
active probability
\begin{equation}
    q := \mathbb{P}(\mathcal{E}_h) \in \{p, \varepsilon, f\}.
\end{equation}
Let
\begin{equation}
    m_a := \mathbb{E}[x(\tau) \mid \mathcal{E}_h], \qquad
    S_a := \mathbb{E}[x(\tau)^2 \mid \mathcal{E}_h], \qquad
    R_x := \frac{S_a}{m_a^2}
\end{equation}
denote the conditional first and second moments of the sampled statistic
and their \emph{conditional ratio}, the relative second moment of a single
active contribution. By the Law of Total Expectation, since $x(\tau)$
vanishes off $\mathcal{E}_h$,
\begin{equation}
\label{eq:jc-signal}
    J_C(\theta) = q m_a + b.
\end{equation}
Define the \emph{signal share}
\begin{equation}
\label{eq:signal-share}
    r := \frac{q |m_a|}{|J_C(\theta)|}.
\end{equation}
\Cref{ass:slack} bounds $r$ above and
below.

For the policy gradient, define the trajectory score
\begin{equation}
    s(\tau) := \sum_t \nabla_\theta \log \pi_\theta(a_t \mid s_t).
\end{equation}
The score-function identity \cite{reinforce} applied to \cref{eq:jc-decomp}
gives,
\begin{equation}
\label{eq:grad-decomp}
    \nabla_\theta J_C(\theta) = \mathbb{E}_{\tau \sim \pi_\theta}[s(\tau) x(\tau)],
\end{equation}
so the gradient is the expectation of the sampled statistic $s(\tau)
x(\tau)$, which vanishes off the same active event $\mathcal{E}_h$ (whenever
$x(\tau) = 0$, the gradient contribution is necessarily zero; we define the
active event through $x(\tau)$ rather than $s(\tau) x(\tau)$ to isolate the
sparsity induced by the cost statistic from incidental zeros of the score).
Let
\begin{equation}
    m_{g} := \mathbb{E}[s(\tau) x(\tau) \mid \mathcal{E}_h], \qquad
    S_{g} := \mathbb{E}[\|s(\tau) x(\tau)\|^2 \mid \mathcal{E}_h], \qquad
    R_{g} := \frac{S_{g}}{\|m_{g}\|^2}
\end{equation}
denote the corresponding conditional moments and ratio for the gradient, so
that $\nabla_\theta J_C(\theta) = q m_g$. The gradient has no offset, and
therefore no signal share.

The conditional ratios $R_x$ and $R_g$ are upper bounded by \cref{ass:snr}
for the indicator and CVaR constraints since $x(\tau)$ is a scalar multiple of $h(\tau)$. For the Cantelli constraint, $x(\tau)$ is
a linear combination of two statistics of \cref{ass:snr}, and bounding
$R_x$ and $R_g$ additionally requires \cref{ass:no-cancel}
(\cref{app:cantelli}).

\begin{lemma}[Relative variance of a constraint value]
\label[lemma]{lem:value}
Let $J_C(\theta)$ decompose as in \cref{eq:jc-decomp}, with the Monte Carlo
estimator $\hat{J}_C = \frac{1}{N}\sum_{i=1}^N x(\tau_i) + b$ over $N$
independent trajectories. Then
\begin{equation}
\label{eq:value-exact}
    \kappa^2[J_C(\theta)] = r^2 \left( \frac{R_x}{qN} - \frac{1}{N} \right),
\end{equation}
and
\begin{equation}
\label{eq:value-sandwich}
    \frac{(1 - q)\, r^2}{qN} \;\le\; \kappa^2[J_C(\theta)] \;\le\; \frac{r^2 R_x}{qN}.
\end{equation}
If $R_x \le K_x$ for a constant $K_x$ independent of $(p, \varepsilon, f)$,
and \cref{ass:slack} holds, then for $q$ bounded away from one,
\begin{equation}
\label{eq:value-theta}
    \kappa^2[J_C(\theta)] = \Theta\left(\frac{1}{qN}\right).
\end{equation}
\end{lemma}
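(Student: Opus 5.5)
The plan is to compute the variance of $\hat J_C$ exactly and then divide by $J_C(\theta)^2$. The offset $b$ is deterministic, so $\operatorname{Var}[\hat J_C] = \frac{1}{N}\operatorname{Var}[x(\tau)]$ by independence across trajectories. Since $x(\tau)$ vanishes off $\mathcal{E}_h$, the law of total expectation gives
\[
\mathbb{E}[x(\tau)] = q m_a, \qquad \mathbb{E}[x(\tau)^2] = q S_a .
\]
Hence
\[
\operatorname{Var}[x(\tau)] = q S_a - q^2 m_a^2 = q^2 m_a^2\Bigl(\frac{R_x}{q} - 1\Bigr),
\]
using $S_a = R_x m_a^2$ (we need $m_a \neq 0$, which is implicit in $R_x$ and $r$ being defined). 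Dividing by $J_C(\theta)^2$ and substituting $r^2 = q^2 m_a^2 / J_C(\theta)^2$ from \cref{eq:signal-share} gives \cref{eq:value-exact} directly.

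For the sandwich \cref{eq:value-sandwich}, the upper bound follows by dropping the nonnegative term $r^2/N$. For the lower bound, Jensen's inequality on the conditional law gives $S_a \ge m_a^2$, i.e.\ $R_x \ge 1$. Then
\[
\frac{R_x}{qN} - \frac{1}{N} \ge \frac{1}{qN} - \frac{1}{N} = \frac{1-q}{qN},
\]
and multiplying by $r^2$ yields the claim.

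For \cref{eq:value-theta}, the upper bound uses \cref{ass:slack} to give $r \le r_{\max}$, together with $R_x \le K_x$, so $\kappa^2 \le r_{\max}^2 K_x/(qN)$. For the lower bound, \cref{ass:slack} gives $r \ge r_{\min}$, and $q \le q_0 < 1$ gives $1 - q \ge 1 - q_0$, so $\kappa^2 \ge r_{\min}^2(1-q_0)/(qN)$. All constants are independent of $(p,\varepsilon,f)$, which gives $\Theta(1/(qN))$.

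The only delicate point is the lower bound in the regime where $q$ is not small, as for Cantelli with $f$ close to one. There the hypothesis "$q$ bounded away from one" is what keeps the bound from degenerating. Alternatively, the $1+\omega$ clause of \cref{ass:slack} gives $R_x - q \ge \omega$ when $q\le 1$, so the hypothesis could be replaced by it. I would note this but prove the lemma as stated.
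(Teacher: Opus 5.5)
Your proposal is correct and follows the paper's proof step for step. You compute the variance exactly via independence and the law of total expectation, substitute $r$ and $R_x$, use Jensen to get $R_x \ge 1$ for the lower bound, and drop $r^2/N$ for the upper bound. For the $\Theta$ rate you then apply \cref{ass:slack} together with $q$ bounded away from one. Your remark that the $1+\omega$ clause can replace the hypothesis ``$q$ bounded away from one'' is exactly how the paper later handles the Cantelli and mean constraints, where $f$ may be close to one.
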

\begin{proof}
The estimator is unbiased: $\mathbb{E}[\hat{J}_C] = \mathbb{E}[x(\tau)] + b
= q m_a + b = J_C(\theta)$ by \cref{eq:jc-signal}. Starting from the
definition of relative variance (\ref{eq:relative-variance}),
\begin{equation}
\label{eq:zero-inflated}
\begin{split}
    \kappa^2[J_C(\theta)]
    &= \frac{\operatorname{Var}(\hat{J}_C)}{J_C(\theta)^2} \\
    &= \frac{1}{J_C(\theta)^2} \operatorname{Var}\left( \frac{1}{N} \sum_{i=1}^N x(\tau_i) \right)
    \qquad \text{($b$ does not depend on the batch)} \\
    &= \frac{\operatorname{Var}(x(\tau))}{N\, J_C(\theta)^2}
    \qquad \text{(independence across trajectories)} \\
    &= \frac{\mathbb{E}[x(\tau)^2] - \mathbb{E}[x(\tau)]^2}{N\, J_C(\theta)^2} \\
    &= \frac{q S_a - q^2 m_a^2}{N\, J_C(\theta)^2}
    \qquad \text{(Law of Total Expectation; $x(\tau) = 0$ off $\mathcal{E}_h$)} \\
    &= \frac{(q m_a)^2}{J_C(\theta)^2} \cdot \frac{q S_a - q^2 m_a^2}{N (q m_a)^2} \\
    &= r^2 \left( \frac{R_x}{qN} - \frac{1}{N} \right)
    \qquad \text{(\cref{eq:signal-share}; $R_x = S_a / m_a^2$)},
\end{split}
\end{equation}
which is \cref{eq:value-exact}. From it, the two sides of
\cref{eq:value-sandwich} follow as
\begin{equation}
\begin{split}
    \kappa^2[J_C(\theta)]
    &\ge r^2 \left( \frac{1}{qN} - \frac{1}{N} \right) = \frac{(1 - q)\, r^2}{qN}
    \qquad \text{(Jensen: $S_a \ge m_a^2$, so $R_x \ge 1$)}, \\
    \kappa^2[J_C(\theta)]
    &\le \frac{r^2 R_x}{qN}
    \qquad \text{(drop the non-negative $r^2 / N$)} \\
    &\le \frac{r^2 K_x}{qN}
    \qquad \text{($R_x \le K_x$; \cref{ass:snr} and \cref{ass:no-cancel})}.
\end{split}
\end{equation}
The lower bound uses no assumption. Under \cref{ass:slack}, $r_{\min} \le r
\le r_{\max}$, so for $q$ bounded away from one,
\begin{equation}
    \frac{(1 - q)\, r_{\min}^2}{qN} \le \kappa^2[J_C(\theta)] \le \frac{r_{\max}^2 K_x}{qN}
    \qquad \text{(\cref{ass:slack})},
\end{equation}
which is \cref{eq:value-theta}.
\end{proof}

The two sides of \cref{eq:value-sandwich} have different standing. The lower
bound holds for every policy with $r > 0$: whenever the sampled part of the
constraint is comparable to the constraint value, an expected effective sample
size $qN$ below one puts the estimator below unit signal-to-noise ratio. The
upper bound is where the ratio bound $R_x \le K_x$ enters, as the statement
that the relative second moment of a single active contribution stays
bounded while the active fraction shrinks. For the indicator and CVaR
constraints, $x(\tau)$ is a scalar multiple of $h(\tau)$ and \cref{ass:snr}
supplies it directly; for the Cantelli constraint, $x(\tau)$ is a linear
combination of two statistics of \cref{ass:snr}, and bounding $R_x$ and
$R_g$ additionally requires \cref{ass:no-cancel} (\cref{app:cantelli}).
\Cref{ass:slack} is needed on both sides, and
for a reason specific to constraint values: the upper bound
$\mathcal{O}(1/(qN))$ can be loose. If the sampled part $q m_a$ vanishes
while the offset $b$ holds the signal fixed, then $r \to 0$ and
\cref{eq:value-exact} shows the relative variance vanishes rather than
diverges; the CVaR constraint value exhibits exactly this behavior for
almost surely bounded cost returns (\cref{app:cvar}). \Cref{ass:slack} also subsumes the
boundary margin: $r \le r_{\max}$ implies $|J_C(\theta)| \ge q|m_a|/r_{\max}
> 0$.

\begin{lemma}[Relative variance of a constraint gradient]
\label[lemma]{lem:grad}
Let $J_C(\theta)$ decompose as in \cref{eq:jc-decomp} and satisfy the
gradient identity \cref{eq:grad-decomp}, $\nabla_\theta J_C(\theta) =
\mathbb{E}_{\tau \sim \pi_\theta}[s(\tau) x(\tau)]$, which holds whenever
the offset $b$ is independent of $\theta$, with the Monte Carlo gradient
estimator $\hat{g} = \frac{1}{N}\sum_{i=1}^N s(\tau_i) x(\tau_i)$ over $N$
independent trajectories. Then
\begin{equation}
\label{eq:grad-exact}
    \kappa^2[\nabla_\theta J_C(\theta)] = \frac{R_g}{qN} - \frac{1}{N},
\end{equation}
and
\begin{equation}
\label{eq:grad-sandwich}
    \frac{1 - q}{qN} \;\le\; \kappa^2[\nabla_\theta J_C(\theta)] \;\le\; \frac{R_g}{qN}.
\end{equation}
If $R_g \le K_g$ for a constant $K_g$ independent of $(p, \varepsilon, f)$,
then for $q$ bounded away from one,
\begin{equation}
\label{eq:grad-theta}
    \kappa^2[\nabla_\theta J_C(\theta)] = \Theta\left(\frac{1}{qN}\right),
\end{equation}
with constants $1 - q$ and $K_g$. The lower bound in \cref{eq:grad-sandwich}
holds for every policy.
\end{lemma}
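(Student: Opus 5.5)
The proof mirrors that of \cref{lem:value}, with the scalar square replaced by the squared Euclidean norm and without the offset. Unbiasedness comes first: by \cref{eq:grad-decomp}, which holds because $b$ does not depend on $\theta$ and so contributes no gradient, $\mathbb{E}[\hat g] = \mathbb{E}[s(\tau)x(\tau)] = \nabla_\theta J_C(\theta)$. The law of total expectation, using that $s(\tau)x(\tau)$ vanishes off $\mathcal{E}_h$, then gives $\nabla_\theta J_C(\theta) = q m_g$, as recorded before the statement.

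For the exact identity \cref{eq:grad-exact}, I will use the total-variance definition $\operatorname{Var}[\hat g] = \mathbb{E}\|\hat g - \mathbb{E}\hat g\|_2^2$. Independence across trajectories makes the cross terms vanish, so $\operatorname{Var}[\hat g] = \frac{1}{N}\bigl(\mathbb{E}\|s(\tau)x(\tau)\|^2 - \|\mathbb{E}[s(\tau)x(\tau)]\|^2\bigr)$. Conditioning on $\mathcal{E}_h$ again turns this into $\frac{1}{N}(q S_g - q^2\|m_g\|^2)$. Dividing by $\|\nabla_\theta J_C\|^2 = q^2\|m_g\|^2$ yields $\frac{R_g}{qN} - \frac{1}{N}$. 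There is no signal-share factor because there is no offset; equivalently, $r=1$ in the earlier computation.

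The sandwich \cref{eq:grad-sandwich} follows in two parts.
\begin{itemize}
\item For the lower bound, Jensen's inequality in vector form, $\mathbb{E}[\|Z\|^2\mid\mathcal{E}_h] \ge \|\mathbb{E}[Z\mid\mathcal{E}_h]\|^2$ (the conditional trace-covariance is nonnegative), gives $R_g \ge 1$ and hence $\kappa^2 \ge \frac{1-q}{qN}$. This step uses no assumption beyond $m_g \neq 0$, which is needed for $\kappa^2$ to be defined, so it holds for every policy.
\item For the upper bound, dropping the nonnegative $1/N$ gives $\kappa^2 \le R_g/(qN)$.
\end{itemize}

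Finally, if $R_g \le K_g$ and $q \le q_0 < 1$, the bounds $\frac{1-q_0}{qN} \le \kappa^2 \le \frac{K_g}{qN}$ give $\Theta(1/(qN))$ with the stated constants.

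The only delicate point is the vector-valued variance. I must make sure that the total-variance definition decomposes additively over i.i.d. summands and that the conditional Jensen step is applied to the norm rather than componentwise. Both are routine once the variance is written as the trace of the covariance matrix.
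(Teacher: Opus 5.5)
Your proposal is correct and follows the paper's proof essentially step for step. The steps are unbiasedness from the gradient identity, $\operatorname{Var}(\hat g) = \operatorname{Var}(s(\tau)x(\tau))/N$ by independence, and the law of total expectation off $\mathcal{E}_h$ to obtain $R_g/(qN) - 1/N$; then Jensen for $y \mapsto \|y\|^2$ gives $R_g \ge 1$, and dropping the nonnegative $1/N$ gives the upper bound. Your explicit remark that $m_g \neq 0$ is needed for $\kappa^2$ to be defined is a small addition that the paper leaves implicit.
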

\begin{proof}
By the gradient identity, $\hat{g}$ is the empirical mean of $Y(\tau) :=
s(\tau) x(\tau)$ and is unbiased for $\nabla_\theta J_C(\theta) = q m_g$.
Starting from the definition of relative variance
(\ref{eq:relative-variance}),
\begin{equation}
\begin{split}
    \kappa^2[\nabla_\theta J_C(\theta)]
    &= \frac{\operatorname{Var}(\hat{g})}{\|\nabla_\theta J_C(\theta)\|^2} \\
    &= \frac{\operatorname{Var}(Y(\tau))}{N\, \|\nabla_\theta J_C(\theta)\|^2}
    \qquad \text{(independence across trajectories)} \\
    &= \frac{\mathbb{E}[\|Y(\tau)\|^2] - \|\mathbb{E}[Y(\tau)]\|^2}{N\, \|\nabla_\theta J_C(\theta)\|^2} \\
    &= \frac{q S_g - q^2 \|m_g\|^2}{N\, q^2 \|m_g\|^2}
    \qquad \text{(Law of Total Expectation; $Y(\tau) = 0$ off $\mathcal{E}_h$; $\nabla_\theta J_C(\theta) = q m_g$)} \\
    &= \frac{R_g}{qN} - \frac{1}{N}
    \qquad \text{($R_g = S_g / \|m_g\|^2$)},
\end{split}
\end{equation}
which is \cref{eq:grad-exact}. There is no offset and hence no signal
share. The two sides of
\cref{eq:grad-sandwich} follow as
\begin{equation}
\begin{split}
    \kappa^2[\nabla_\theta J_C(\theta)]
    &\ge \frac{1}{qN} - \frac{1}{N} = \frac{1 - q}{qN}
    \qquad \text{(Jensen for $y \mapsto \|y\|^2$: $S_g \ge \|m_g\|^2$, so $R_g \ge 1$)}, \\
    \kappa^2[\nabla_\theta J_C(\theta)]
    &\le \frac{R_g}{qN}
    \qquad \text{(drop the non-negative $1 / N$)} \\
    &\le \frac{K_g}{qN}
    \qquad \text{($R_g \le K_g$; \cref{ass:snr} and \cref{ass:no-cancel})},
\end{split}
\end{equation}
which for $q$ bounded away from one is \cref{eq:grad-theta}. The lower bound
uses no assumption. A scalar factor multiplying both $x(\tau)$ and its
estimator, such as the $1/\varepsilon$ of the CVaR constraint, cancels in
the relative variance.
\end{proof}

The per-method derivations that follow therefore reduce to three steps:
identify the sampled statistic $x(\tau)$, its active probability $q$, and
its offset $b$; bound the conditional ratios $R_x$ and $R_g$, by
\cref{ass:snr} directly or, for the Cantelli constraint, by
\cref{ass:snr,ass:no-cancel} together; and, for the constraint value, read
off the signal share $r$ and state what \cref{ass:slack} requires of the
policy. \Cref{lem:value,lem:grad} then supply the rates.
We now derive the relative-variance scaling explicitly, beginning with
the indicator estimator.

\subsection{Indicator VaR: $N_\mathrm{eff}=pN$}
\label[appendix]{app:indicator}

The indicator constraint (\ref{eq:indicator}) bounds the expectation of the
Bernoulli indicator, $\mu_I(\theta) = \mathbb{E}_{\tau\sim
\pi_\theta}[I(\tau)] = \mathbb{P}(C(\tau) > \ell) = p$, by the violation
threshold. The VaR constraint is therefore:
\begin{equation}
    J_C(\theta) = \mu_I(\theta) - \varepsilon \le 0,
\end{equation}
with Monte Carlo estimator
\begin{equation}
    \hat{J}_C = \frac{1}{N} \sum_{i=1}^N I(\tau_i) - \varepsilon.
\end{equation}
This is of the form \cref{eq:jc-decomp} with sampled statistic $x(\tau) =
I(\tau)$, active probability $q = p$, and offset $b = -\varepsilon$, so
\cref{lem:value} applies. Since the indicator is idempotent ($I(\tau)^2 =
I(\tau)$), its conditional moments are $m_a = S_a = 1$, so $R_x = 1$ and
the ratio bound holds with $K_x = 1$. The signal is $J_C(\theta) = p -
\varepsilon$ (\ref{eq:jc-signal}). Any feasible policy has $p \le
\varepsilon$, so we write $p = c\varepsilon$ with $c \in (0, 1)$
parameterizing the fraction of the threshold the policy uses; the slack is
then $\varepsilon - p = \frac{1-c}{c} p$ and the signal share
(\ref{eq:signal-share}) is $r = p / (\varepsilon - p) = c/(1-c)$. Holding
$c$ fixed is \cref{ass:slack} for the indicator constraint. Substituting these
into \cref{lem:value}, the relative variance is exactly
\begin{equation}
    \kappa^2[J_C(\theta)] = \left(\frac{c}{1 - c}\right)^2 \left( \frac{1}{pN} - \frac{1}{N} \right)
    = \left(\frac{c}{1 - c}\right)^2 \frac{1 - p}{pN},
\end{equation}
which, since $R_x = 1$, sits on the lower edge of the bounds of
\cref{lem:value},
\begin{equation}
    \frac{(1 - p)}{pN} \left(\frac{c}{1 - c}\right)^2 \le \kappa^2[J_C(\theta)] \le \frac{1}{pN} \left(\frac{c}{1 - c}\right)^2,
\end{equation}
and hence, for fixed $c$ and with $1 - \varepsilon \le 1 - p \le 1$,
\begin{equation}
    \kappa^2[J_C(\theta)] = \Theta\left(\frac{1}{p N}\right).
\end{equation} Because $p =
c\varepsilon$ ties the violation probability to the threshold, this rate
diverges as the violation threshold becomes strict ($\varepsilon \to 0$): the
constraint value cannot be evaluated with bounded relative error, and, the
expression being exact, no choice of estimator constant can prevent it.

For the gradient, \cref{lem:grad} applies with the same $x(\tau)$ and $q =
p$: the conditional gradient moments are $m_g = \mathbb{E}_{\tau \sim
\pi_\theta}[s(\tau) \mid I(\tau)=1]$ and, using $I(\tau)^2 = I(\tau)$, $S_g =
\mathbb{E}_{\tau \sim \pi_\theta}[\|s(\tau)\|^2 \mid I(\tau)=1]$, and
\cref{ass:snr} with $X = s(\tau) I(\tau)$ supplies the ratio bound $R_g =
S_g/\|m_g\|^2 \le K$. Substituting, the relative variance is exactly
\begin{equation}
    \kappa^2[\nabla_\theta J_C(\theta)] = \frac{1}{pN} \left( \frac{S_g}{\|m_g\|^2} \right) - \frac{1}{N},
\end{equation}
bounded by
\begin{equation}
    \frac{1 - p}{pN} \le \kappa^2[\nabla_\theta J_C(\theta)] \le \frac{K}{pN},
\end{equation}
and hence
\begin{equation}
    \kappa^2[\nabla_\theta J_C(\theta)] = \Theta\left(\frac{1}{pN}\right),
\end{equation}
where the lower bound holds for every policy.

\subsection{CVaR: $N_\mathrm{eff}=\varepsilon N$}
\label[appendix]{app:cvar}

CVaR relies on the Rockafellar-Uryasev dual formulation, where $\eta$ converges
to the $(1-\varepsilon)$-quantile $\eta^*$, and the constraint of
OP \ref{eq:cvar-opt} evaluates the expected tail risk
\begin{equation}
    F(\theta) = \eta^* + \frac{1}{\varepsilon}\mathbb{E}_{\tau\sim \pi_\theta}\left[(C(\tau) - \eta^*)^+\right]
\end{equation}
against the cost limit. The CVaR constraint is therefore:
\begin{equation}
    J_C(\theta) = F(\theta) - \ell \le 0,
\end{equation}
with Monte Carlo estimator
\begin{equation}
    \hat{J}_C = \eta^* + \frac{1}{\varepsilon N} \sum_{i=1}^N (C(\tau_i) - \eta^*)^+ - \ell.
\end{equation}
This estimator holds $\eta$ fixed at the population quantile $\eta^*$; the
practical estimator instead minimizes over $\eta$, or equivalently plugs the
empirical $(1-\varepsilon)$-quantile $\hat{\eta}$ of the same batch into
\cref{eq:rockafellar}. Because $\eta^*$ minimizes the Rockafellar-Uryasev
objective (\ref{eq:rockafellar}), the objective is stationary in $\eta$ at
$\eta^*$ (differentiable there under \cref{ass:support}, which excludes an
atom at $\eta^*$), so the perturbation $\hat{\eta} - \eta^* =
\mathcal{O}_p(N^{-1/2})$ enters the estimator only at second order: the
empirical-quantile estimator is first-order equivalent to the fixed-quantile
estimator above, with the same asymptotic variance
\cite{shapiro2009lectures}. The scaling derived below therefore transfers to
the practical estimator to leading order in $1/N$.

In the form of \cref{eq:jc-decomp}, the sampled statistic is the scaled tail
penalty $x(\tau) = \frac{1}{\varepsilon}(C(\tau) - \eta^*)^+$ and the offset
is $b = \eta^* - \ell$. The active event is the tail event $\{C(\tau) >
\eta^*\}$. Throughout, we use $q = \mathbb{P}(C(\tau) > \eta^*) =
\varepsilon$, which holds under \cref{ass:support}: the cost return places
no probability mass at $\eta^*$, and its zero-cost atom $\mathbb{P}(C(\tau) = 0) = 1 - f$
lies below $\eta^* > 0$. Let $m_{\text{tail}} =
\mathbb{E}[(C(\tau) - \eta^*) \mid C(\tau) > \eta^*]$ and $S_{\text{tail}} =
\mathbb{E}[(C(\tau) - \eta^*)^2 \mid C(\tau) > \eta^*]$ denote the
conditional moments of the unscaled tail penalty, so that $m_a =
m_{\text{tail}}/\varepsilon$, $S_a = S_{\text{tail}}/\varepsilon^2$, and the
scaling cancels in the conditional ratio $R_x =
S_{\text{tail}}/m_{\text{tail}}^2$, which \cref{ass:snr} bounds. By
\cref{eq:jc-signal}, $F(\theta) = \eta^* + q m_a = \eta^* + m_{\text{tail}}$:
the tail mean $m_{\text{tail}}$ is the excess of CVaR over VaR, and the
signal is
\begin{equation}
\label{eq:cvar-signal}
    J_C(\theta) = \eta^* + m_{\text{tail}} - \ell,
\end{equation}
with signal share (\ref{eq:signal-share}) $r = m_{\text{tail}} /
|J_C(\theta)|$. Substituting into \cref{lem:value} with $q = \varepsilon$, the
relative variance is exactly
\begin{equation}
\label{eq:cvar-value-rv}
    \kappa^2[J_C(\theta)] = \frac{m_{\text{tail}}^2}{J_C(\theta)^2} \left[ \frac{1}{\varepsilon N}\left( \frac{S_{\text{tail}}}{m_{\text{tail}}^2} \right) - \frac{1}{N} \right],
\end{equation}
bounded by
\begin{equation}
    \frac{1 - \varepsilon}{\varepsilon N} \cdot \frac{m_{\text{tail}}^2}{J_C(\theta)^2} \le \kappa^2[J_C(\theta)] \le \frac{K}{\varepsilon N} \cdot \frac{m_{\text{tail}}^2}{J_C(\theta)^2},
\end{equation}
and hence, under \cref{ass:slack},
\begin{equation}
    \kappa^2[J_C(\theta)] = \Theta\left(\frac{1}{\varepsilon N}\right).
\end{equation}
For the CVaR constraint, \cref{ass:slack} reads: the slack between the cost
limit and the CVaR, $|\ell - F(\theta)|$, is proportional to the excess of
CVaR over VaR, $m_{\text{tail}}$. This is the CVaR analogue of the
indicator's $p = c\varepsilon$, and it is not vacuous. For a cost return with
bounded support, $\eta^*$ converges to the maximum cost return
$\operatorname{ess\,sup} C(\tau)$ and $m_{\text{tail}} \to 0$ as $\varepsilon
\to 0$ (for $C(\tau)$ uniform on $[0, 1]$, $m_{\text{tail}} = \varepsilon/2$
and $S_{\text{tail}} = \varepsilon^2/3$), so at a fixed slack $|J_C(\theta)|
\ge \zeta$ the exact expression (\ref{eq:cvar-value-rv}) gives
$\kappa^2[J_C(\theta)] \approx \varepsilon/(3 \zeta^2 N) \to 0$: the upper
bound $\mathcal{O}(1/(\varepsilon N))$ holds but is loose, and the constraint
value is estimated with vanishing relative error because the sampled part of
the constraint has itself vanished. The gradient, which has no offset, has no
such escape.

For the gradient, the constant cost limit vanishes under differentiation,
$\nabla_\theta J_C(\theta) = \nabla_\theta F(\theta)$, and the Envelope
Theorem allows $\eta^*$ to be held fixed while differentiating with respect
to the policy parameters $\theta$, so \cref{eq:grad-decomp} applies:
\begin{equation}
    \nabla_\theta J_C(\theta) = \frac{1}{\varepsilon} \mathbb{E}_{\tau \sim \pi_\theta} \left[ s(\tau) (C(\tau) - \eta^*)^+ \right]
    = \frac{1}{\varepsilon} \cdot \varepsilon \cdot \mathbb{E}_{\tau \sim \pi_\theta} \left[ s(\tau) (C(\tau) - \eta^*) \mid C(\tau) > \eta^* \right] = m,
\end{equation}
where $m = \mathbb{E}_{\tau \sim \pi_\theta}[s(\tau) (C(\tau) - \eta^*) \mid
C(\tau) > \eta^*]$ is the conditional gradient of the unscaled tail penalty:
the $1/\varepsilon$ of the estimator cancels the probability $\varepsilon$ of
the tail event, and the true gradient has squared magnitude $\|m\|^2$
independent of $\varepsilon$. The Monte Carlo estimator is $\hat{g} =
\frac{1}{\varepsilon N} \sum_{i=1}^N s(\tau_i) (C(\tau_i) - \eta^*)^+$;
because the $(x)^+$ operator evaluates to zero outside the tail event, only
an expected $\varepsilon N$ trajectories contribute to the sum, defining the
effective sample size $N_\mathrm{eff} = \varepsilon N$. With $S =
\mathbb{E}_{\tau \sim \pi_\theta}[\|s(\tau) (C(\tau) - \eta^*)\|^2 \mid
C(\tau) > \eta^*]$, the scaling cancels in $R_g = S/\|m\|^2$, which
\cref{ass:snr} with $X = s(\tau)(C(\tau) - \eta^*)^+$ bounds, so
substituting into \cref{lem:grad} with $q = \varepsilon$, the relative variance
is exactly
\begin{equation}
    \kappa^2[\nabla_\theta J_C(\theta)] = \frac{1}{\varepsilon N} \left( \frac{S}{\|m\|^2} \right) - \frac{1}{N},
\end{equation}
bounded by
\begin{equation}
    \frac{1 - \varepsilon}{\varepsilon N} \le \kappa^2[\nabla_\theta J_C(\theta)] \le \frac{K}{\varepsilon N},
\end{equation}
and hence
\begin{equation}
    \kappa^2[\nabla_\theta J_C(\theta)] = \Theta\left(\frac{1}{\varepsilon N}\right),
\end{equation}
where the lower bound holds for every policy, in particular for the bounded-support cost returns under which the
constraint value escapes the rate above.

\subsection{Cantelli: $N_\mathrm{eff}=fN$}
\label[appendix]{app:cantelli}

Canary enforces the constraint pair of OP \ref{eq:cantelli-objective}: the
Cantelli constraint $B(\theta) \le 0$ and the mean constraint $\mu(\theta) - \ell \le
0$. We derive the relative variance of each in turn, in both cases treating
the constraint value first and the policy gradient second.

\subsubsection{Cantelli Constraint}

The Canary formulation avoids algorithmic thresholding operators,
defining the Cantelli constraint $B(\theta)$ as in Equation \ref{eq:cpo-cheb},
which is already in canonical form, $J_C(\theta) \le 0$, with
\begin{equation}
\label{eq:cantelli-jc}
    J_C(\theta) = B(\theta) = \left (\frac{1}{\varepsilon} - 1\right ) \tilde{\mu}(\theta) + 2\ell \mu(\theta) - \frac{1}{\varepsilon}\mu(\theta)^2 - \ell^2.
\end{equation}
Write $\mu_i(\theta) = \mathbb{E}_{\tau \sim \pi_\theta}[C(\tau)^i]$ for the
$i$-th moment of the cost return, $i \in \{1, 2\}$, so that $\mu_1(\theta) =
\mu(\theta)$ and $\mu_2(\theta) = \tilde{\mu}(\theta)$. The Monte Carlo
estimators for these moments over a batch of $N$ trajectories are the
empirical means
\begin{equation}
\label{eq:moment-est}
    \hat{\mu}_i = \frac{1}{N} \sum_{j=1}^N C(\tau_j)^i.
\end{equation}
The constraint
value is a smooth function of the two moments, with sensitivities
\begin{equation}
\label{eq:cantelli-sens}
    c_1 = \frac{\partial B}{\partial \mu_1} = 2\left(\ell - \frac{\mu_1(\theta)}{\varepsilon}\right), \qquad
    c_2 = \frac{\partial B}{\partial \mu_2} = \frac{1}{\varepsilon} - 1,
\end{equation}
evaluated at the current $\theta$, in terms of which it can be written
exactly as
\begin{equation}
\label{eq:cantelli-lin}
    J_C(\theta) = c_1 \mu_1(\theta) + c_2 \mu_2(\theta) + b, \qquad
    b = J_C(\theta) - c_1 \mu_1(\theta) - c_2 \mu_2(\theta) = \frac{\mu(\theta)^2}{\varepsilon} - \ell^2.
\end{equation}
Replacing the moments by their empirical means gives the unbiased Monte Carlo
estimator of the constraint value,
\begin{equation}
\label{eq:cantelli-est}
    \hat{J}_C = c_1 \hat{\mu}_1 + c_2 \hat{\mu}_2 + b.
\end{equation}
$\hat{J}_C$ is unbiased but not computable, since $c_1$ and $b$ involve the
true $\mu(\theta)$; the computable, biased plug-in estimator that the
algorithm evaluates is analyzed once the rate for $\hat{J}_C$ is
established.

From \cref{eq:cantelli-lin}, the sampled statistic for $J_C(\theta)$ is given by
\begin{equation}
\label{eq:cantelli-z}
    x(\tau) = c_1 C(\tau) + c_2 C(\tau)^2.
\end{equation}
The offset $b$ of \cref{eq:cantelli-lin}, which is built from the true
moments at the current $\theta$ rather than from the sampled trajectories,
is an offset in the sense of \cref{eq:jc-decomp}: it depends on
$\theta$, but not on the batch.
The statistic $x(\tau)$ vanishes off the active event $\{C(\tau) > 0\}$,
which occurs with probability $q = f$, so the Cantelli constraint is in
canonical form and can be substituted into \cref{lem:value}.

Let $\mu_{k,a} =
\mathbb{E}_{\tau \sim \pi_\theta}[C(\tau)^k \mid C(\tau) > 0]$ for $k \in
\{1,2,3,4\}$ denote the cost moments conditional on the active trajectories,
so that $\mu_i(\theta) = f\mu_{i,a}$ (moments up to order four appear because
$x(\tau)^2$ involves $C(\tau)^4$). The conditional moments of the sampled
statistic are
\begin{equation}
    m_a = c_1 \mu_{1,a} + c_2 \mu_{2,a}, \qquad
    S_a = c_1^2 \mu_{2,a} + 2 c_1 c_2 \mu_{3,a} + c_2^2 \mu_{4,a},
\end{equation}
so the sampled part of the constraint value is $\mathbb{E}[x(\tau)] = f m_a
= c_1 \mu_1(\theta) + c_2 \mu_2(\theta)$ and the signal share
(\ref{eq:signal-share}) is
\begin{equation}
\label{eq:cantelli-share}
    r = \frac{|c_1 \mu_1(\theta) + c_2 \mu_2(\theta)|}{|J_C(\theta)|}.
\end{equation}
\Cref{lem:value} requires the conditional ratio $R_x = S_a/m_a^2$, which for
the linear combination $x(\tau)$ is not bounded by \cref{ass:snr} alone: the
numerator contains the cross moment $\mu_{3,a}$, and the denominator is a
squared linear combination that cancellation between its two terms can drive
toward zero. Bounding the numerator from above and the denominator from
below,
\begin{equation}
\label{eq:cantelli-value-nocancel}
\begin{split}
    R_x
    &= \frac{c_1^2 \mu_{2,a} + 2 c_1 c_2 \mu_{3,a} + c_2^2 \mu_{4,a}}{\left( c_1 \mu_{1,a} + c_2 \mu_{2,a} \right)^2} \\
    &\le \frac{\left( |c_1| \sqrt{\mu_{2,a}} + |c_2| \sqrt{\mu_{4,a}} \right)^2}{\left( c_1 \mu_{1,a} + c_2 \mu_{2,a} \right)^2}
    \qquad \text{(Cauchy--Schwarz: } \mu_{3,a} \le \sqrt{\mu_{2,a}\,\mu_{4,a}}\text{)} \\
    &\le \frac{2 \left( c_1^2 \mu_{2,a} + c_2^2 \mu_{4,a} \right)}{\left( c_1 \mu_{1,a} + c_2 \mu_{2,a} \right)^2}
    \qquad \text{(} (a + b)^2 \le 2(a^2 + b^2) \text{)} \\
    &\le \frac{2K \left( c_1^2 \mu_{1,a}^2 + c_2^2 \mu_{2,a}^2 \right)}{\left( c_1 \mu_{1,a} + c_2 \mu_{2,a} \right)^2}
    \qquad \text{(\cref{ass:snr}: } \mu_{2,a} \le K \mu_{1,a}^2,\ \mu_{4,a} \le K \mu_{2,a}^2\text{)} \\
    &\le \frac{2K \left( c_1^2 \mu_{1,a}^2 + c_2^2 \mu_{2,a}^2 \right)}{\nu \left( c_1^2 \mu_{1,a}^2 + c_2^2 \mu_{2,a}^2 \right)}
    \qquad \text{(\cref{ass:no-cancel}, moment pair)} \\
    &= \frac{2K}{\nu} =: K_x.
\end{split}
\end{equation}
In the last inequality, \cref{ass:no-cancel} is stated for the unconditioned
moments $\mu_i(\theta) = f\mu_{i,a}$, and the factor $f^2$ cancels on both
sides. For this scalar moment pair, cancellation requires $c_1 < 0$, since
$c_2 = \frac{1}{\varepsilon} - 1 > 0$ and both conditional moments are
positive; that is, it requires $\mu(\theta) > \varepsilon \ell$, and then
additionally $|c_1| \mu_{1,a} \approx c_2 \mu_{2,a}$. Substituting into
\cref{lem:value} with $q = f$, signal share $r$ of
\cref{eq:cantelli-share}, and $R_x \le K_x$, the relative variance of the
constraint evaluation is bounded by
\begin{equation}
\label{eq:cantelli-rv}
    \frac{\omega\, r^2}{fN} \le \kappa^2[J_C(\theta)] \le \frac{r^2}{fN} \cdot \frac{2K}{\nu},
\end{equation}
where the lower bound follows from \cref{eq:value-exact} and the
non-degeneracy clause $R_x \ge 1 + \omega$ of \cref{ass:slack}, since $R_x - f
\ge \omega$ for every $f \in (0, 1]$, and hence, with
\cref{ass:slack} bounding $r$ above and below,
\begin{equation}
    \kappa^2[J_C(\theta)] = \Theta\left(\frac{1}{fN}\right).
\end{equation}
The bound above is uniform in $f$,
but \cref{ass:slack} requires the signal share $r$ of
\cref{eq:cantelli-share} to be uniform in $\varepsilon$ as well. Expanding the
sensitivities, $c_1 \mu_1(\theta) + c_2 \mu_2(\theta) = J_C(\theta) + \ell^2 -
\mu(\theta)^2/\varepsilon$, so
\begin{equation}
    r = \left| 1 + \frac{\ell^2 - \mu(\theta)^2/\varepsilon}{J_C(\theta)} \right|.
\end{equation}
By the identity $J_C(\theta) = \left(\frac{1}{\varepsilon} - 1\right)
\sigma^2(\theta) - (\mu(\theta) - \ell)^2$, for policies whose cost variance
is bounded below, $\sigma^2(\theta) \ge \sigma^2_{\min} > 0$, both $J_C(\theta)$
and the correction $\ell^2 - \mu(\theta)^2/\varepsilon$ grow at the rate
$1/\varepsilon$, and $r \to |\sigma^2(\theta) - \mu(\theta)^2| /
\sigma^2(\theta)$ as $\varepsilon \to 0$. This limit is bounded above by $1
+ \mu(\theta)^2/\sigma^2_{\min}$, and bounded away from zero unless the cost
variance equals the squared mean cost, $\tilde{\mu}(\theta) = 2\mu(\theta)^2$,
a knife-edge identity between the two moments. The gradient rate derived
later needs no such condition.

\paragraph{Constraint value plug-in estimator.}
The algorithm does not evaluate $\hat{J}_C$ itself, but the biased plug-in
estimator in which $\hat{\mu}_1$ also replaces $\mu(\theta)$ inside $c_1$
and $b$:
\begin{equation}
\label{eq:cantelli-plugin}
    \hat{B} = \beta \hat{\mu}_2 + 2\ell \hat{\mu}_1 - \frac{\hat{\mu}_1^2}{\varepsilon} - \ell^2.
\end{equation}
$\hat{J}_C$ is the first-order surrogate (the Delta Method) of $\hat{B}$,
and the two differ by the exact remainder
\begin{equation}
\label{eq:cantelli-remainder}
    \hat{B} - \hat{J}_C = -\frac{(\hat{\mu}_1 - \mu_1(\theta))^2}{\varepsilon},
\end{equation}
whose mean is $-\operatorname{Var}(\hat{\mu}_1)/\varepsilon =
-\sigma^2(\theta)/(\varepsilon N)$: the absolute order is
$\mathcal{O}(1/(\varepsilon N))$, not $\mathcal{O}(1/N)$, and the factor
$1/\varepsilon$ cannot be discarded in the regime $\varepsilon \to 0$ this
appendix concerns. The remainder is nevertheless second order relative to
the signal, which carries the same factor: \cref{ass:slack} gives
$|J_C(\theta)| \ge |c_1 \mu_1(\theta) + c_2 \mu_2(\theta)| / r_{\max}$,
\cref{ass:no-cancel} gives $|c_1 \mu_1(\theta) + c_2 \mu_2(\theta)| \ge
\sqrt{\nu}\, c_2 \mu_2(\theta)$, and $\sigma^2(\theta) \le \mu_2(\theta)$,
so with $c_2 = (1-\varepsilon)/\varepsilon$,
\begin{equation}
    \frac{\left|\mathbb{E}[\hat{B} - \hat{J}_C]\right|}{|J_C(\theta)|}
    \le \frac{\sigma^2(\theta)}{\varepsilon N} \cdot
    \frac{r_{\max}\, \varepsilon}{(1-\varepsilon)\sqrt{\nu}\, \mu_2(\theta)}
    \le \frac{r_{\max}}{(1-\varepsilon)\sqrt{\nu}\, N},
\end{equation}
uniformly in $(p, \varepsilon, f)$ for $\varepsilon$ bounded away from one.
For the variance of the remainder,
$\operatorname{Var}((\hat{\mu}_1 - \mu_1(\theta))^2)/\varepsilon^2 \le
\mathbb{E}[(\hat{\mu}_1 - \mu_1(\theta))^4]/\varepsilon^2$, and for an
empirical mean $\mathbb{E}[(\hat{\mu}_1 - \mu_1(\theta))^4] \le
3\sigma^4(\theta)/N^2 + \mathbb{E}[(C(\tau) - \mu_1(\theta))^4]/N^3$, where
the fourth moment is bounded through \cref{ass:snr} ($\mu_{4,a} \le K
\mu_{2,a}^2$, so $\mathbb{E}[C(\tau)^4] \le K \mu_2(\theta)^2 / f$).
Normalizing by $J_C(\theta)^2 \ge \nu c_2^2 \mu_2(\theta)^2 / r_{\max}^2$ as
above and using $\sigma^4(\theta) \le \mu_2(\theta)^2$, the remainder
contributes to the relative variance at order $1/N^2 + 1/(fN^3)$, smaller
than the $\Theta(1/(fN))$ rate above by at least a factor of $1/N$,
uniformly in $(p, \varepsilon, f)$. The cross term between the sampling
error of $\hat{J}_C$ and the remainder is bounded by Cauchy--Schwarz at the
geometric mean of the two contributions, of relative order
$1/(\sqrt{f}N^{3/2})$, and the squared bias is of relative order $1/N^2$ by
the mean bound above; both are likewise subleading. The constraint-value
rate (\ref{eq:cantelli-rv}) therefore holds for $\hat{B}$ to leading order.

The Cantelli constraint policy gradient is
\begin{equation}
\label{eq:cantelli-grad}
    \nabla_\theta J_C(\theta) = c_1 \nabla_\theta \mu_1(\theta) + c_2 \nabla_\theta \mu_2(\theta)
    = \mathbb{E}_{\tau \sim \pi_\theta}\left[ s(\tau) \left( c_1 C(\tau) + c_2 C(\tau)^2 \right) \right]
    = \mathbb{E}_{\tau \sim \pi_\theta}[s(\tau) x(\tau)],
\end{equation}
where
$\nabla_\theta \mu_i(\theta) = \mathbb{E}_{\tau \sim
\pi_\theta}[s(\tau)C(\tau)^i] =: m_i(\theta)$, and $x(\tau)$ is the sampled
statistic of \cref{eq:cantelli-z}. The offset $b$ of
\cref{eq:cantelli-lin} depends on $\theta$, so the sufficient condition
of \cref{lem:grad} fails, but the gradient identity \cref{eq:grad-decomp} it
is used for holds regardless: differentiating the decomposition $J_C(\theta)
= c_1 \mu_1(\theta) + c_2 \mu_2(\theta) + b$ produces, beyond $c_1
\nabla_\theta \mu_1(\theta) + c_2 \nabla_\theta \mu_2(\theta)$, the two terms
$\mu_1(\theta) \nabla_\theta c_1 = -\frac{2\mu_1(\theta)}{\varepsilon}
\nabla_\theta \mu_1(\theta)$ and $\nabla_\theta b =
+\frac{2\mu_1(\theta)}{\varepsilon} \nabla_\theta \mu_1(\theta)$, which
cancel exactly, recovering \cref{eq:cantelli-grad}. \Cref{lem:grad} therefore
applies exactly.

Let $m_{i,a} = \mathbb{E}_{\tau \sim \pi_\theta}[s(\tau)C(\tau)^i
\mid C(\tau) > 0]$ and $S_{i,a} = \mathbb{E}_{\tau \sim
\pi_\theta}[\|s(\tau)C(\tau)^i\|^2 \mid C(\tau) > 0]$ denote the moment
gradients and their second moments conditional on the active trajectories,
so that $m_i(\theta) = fm_{i,a}$, with empirical moment gradients $\hat{m}_i =
\frac{1}{N}\sum_{j=1}^N s(\tau_j)C(\tau_j)^i$ averaging over all $N$
trajectories including the exact zeros.

The Monte Carlo estimator of the constraint gradient is the corresponding
linear combination of the empirical moment gradients,
$\hat{g} = c_1 \hat{m}_1 + c_2 \hat{m}_2$, with the sensitivity $c_1$
evaluated at the true moment $\mu_1(\theta)$. Like $\hat{J}_C$
(\ref{eq:cantelli-est}), this fixed-coefficient estimator is not computable;
the practical estimator plugs $\hat{\mu}_1$ into $c_1$. We analyze the
fixed-coefficient estimator first and bound the plug-in perturbation, which
unlike the value remainder (\ref{eq:cantelli-remainder}) is not higher
order, at the end of this derivation. By linearity, $\hat{g} = \frac{1}{N} \sum_{j=1}^N
s(\tau_j) x(\tau_j)$ exactly, the gradient estimator of \cref{lem:grad} for
the sampled statistic $x(\tau)$ of \cref{eq:cantelli-z}, with conditional
gradient moments $m_g = c_1 m_{1,a} + c_2 m_{2,a}$ and $S_g =
c_1^2 S_{1,a} + 2 c_1 c_2 S_{12,a} + c_2^2 S_{2,a}$, where the conditional cross moment is $S_{12,a} =
\mathbb{E}_{\tau \sim \pi_\theta}[\langle s(\tau)C(\tau), s(\tau)C(\tau)^2
\rangle \mid C(\tau) > 0] = \mathbb{E}_{\tau \sim \pi_\theta}[\|s(\tau)\|^2
C(\tau)^3 \mid C(\tau) > 0]$. \Cref{lem:grad} requires the conditional ratio
$R_g = S_g/\|m_g\|^2$, bounded exactly as $R_x$ in
\cref{eq:cantelli-value-nocancel}, with \cref{ass:no-cancel} now applied to
the moment-gradient pair:
\begin{equation}
\label{eq:cantelli-grad-nocancel}
\begin{split}
    R_g
    &= \frac{c_1^2 S_{1,a} + 2 c_1 c_2 S_{12,a} + c_2^2 S_{2,a}}{\left\| c_1 m_{1,a} + c_2 m_{2,a} \right\|^2} \\
    &\le \frac{\left( |c_1| \sqrt{S_{1,a}} + |c_2| \sqrt{S_{2,a}} \right)^2}{\left\| c_1 m_{1,a} + c_2 m_{2,a} \right\|^2}
    \qquad \text{(Cauchy--Schwarz: } S_{12,a} \le \sqrt{S_{1,a}\,S_{2,a}}\text{)} \\
    &\le \frac{2 \left( c_1^2 S_{1,a} + c_2^2 S_{2,a} \right)}{\left\| c_1 m_{1,a} + c_2 m_{2,a} \right\|^2}
    \qquad \text{(} (a + b)^2 \le 2(a^2 + b^2) \text{)} \\
    &\le \frac{2K \left( c_1^2 \|m_{1,a}\|^2 + c_2^2 \|m_{2,a}\|^2 \right)}{\left\| c_1 m_{1,a} + c_2 m_{2,a} \right\|^2}
    \qquad \text{(\cref{ass:snr}: } S_{i,a} \le K \|m_{i,a}\|^2\text{)} \\
    &\le \frac{2K \left( c_1^2 \|m_{1,a}\|^2 + c_2^2 \|m_{2,a}\|^2 \right)}{\nu \left( c_1^2 \|m_{1,a}\|^2 + c_2^2 \|m_{2,a}\|^2 \right)}
    \qquad \text{(\cref{ass:no-cancel}, moment-gradient pair)} \\
    &= \frac{2K}{\nu} =: K_g.
\end{split}
\end{equation}
As before, \cref{ass:no-cancel} is stated for the unconditioned gradients
$m_i(\theta) = f m_{i,a}$ and the factor $f^2$ cancels on both sides.
Substituting into \cref{lem:grad} with $q = f$ and $R_g \le K_g$, the relative
variance of the gradient is exactly bounded by
\begin{equation}
    \frac{\omega}{fN} \le \kappa^2[\nabla_\theta J_C(\theta)] \le \frac{1}{fN} \cdot \frac{2K}{\nu},
\end{equation}
where the lower bound follows from the exact form in \cref{lem:grad} and
$R_g \ge 1 + \omega$ (\cref{ass:slack}), and hence
\begin{equation}
    \kappa^2[\nabla_\theta J_C(\theta)] = \Theta\left(\frac{1}{fN}\right).
\end{equation}

\paragraph{Policy gradient plug-in estimator.}
The practical gradient estimator plugs $\hat{\mu}_1$ into the sensitivity,
\begin{equation}
    \hat{g}_B = \hat{c}_1 \hat{m}_1 + c_2 \hat{m}_2, \qquad
    \hat{c}_1 = 2\left(\ell - \frac{\hat{\mu}_1}{\varepsilon}\right),
\end{equation}
and differs from the fixed-coefficient estimator $\hat{g}$ analyzed above by
the exact perturbation
\begin{equation}
\label{eq:cantelli-grad-perturb}
    \hat{g}_B - \hat{g} = -\frac{2}{\varepsilon}\left(\hat{\mu}_1 - \mu_1(\theta)\right) \hat{m}_1.
\end{equation}
Unlike the value remainder (\ref{eq:cantelli-remainder}), which is quadratic
in $\hat{\mu}_1 - \mu_1(\theta)$, this perturbation is linear in it, of
order $\mathcal{O}_p(1/(\varepsilon\sqrt{N}))$ --- the same order as the
Monte Carlo error of $\hat{g}$ itself, whose coefficients also carry the
factor $1/\varepsilon$. It is therefore a first-order term and must be
bounded against the signal. Decomposing $\hat{m}_1 = m_1(\theta) +
(\hat{m}_1 - m_1(\theta))$,
\begin{equation}
    \mathbb{E}\left[\left\|\hat{g}_B - \hat{g}\right\|^2\right]
    \le \frac{8}{\varepsilon^2} \operatorname{Var}(\hat{\mu}_1)\, \|m_1(\theta)\|^2
    + \frac{8}{\varepsilon^2}\, \mathbb{E}\left[\left(\hat{\mu}_1 - \mu_1(\theta)\right)^2 \left\|\hat{m}_1 - m_1(\theta)\right\|^2\right],
\end{equation}
where the second term is of relative order $1/N^2$ under fourth-moment
analogues of \cref{ass:snr} for $C(\tau)$ and $s(\tau)C(\tau)$. For the
first term, $\operatorname{Var}(\hat{\mu}_1) \le f\mu_{2,a}/N$ and
$\|m_1(\theta)\|^2 = f^2\|m_{1,a}\|^2$, while \cref{ass:no-cancel} bounds
the signal from below through the $c_2$ component alone,
$\|\nabla_\theta J_C(\theta)\|^2 \ge \nu c_2^2 \|m_2(\theta)\|^2 = \nu c_2^2
f^2 \|m_{2,a}\|^2$ with $c_2 = (1-\varepsilon)/\varepsilon$; the $c_1$
component cannot serve here, since $c_1$ passes through zero at
$\mu(\theta) = \varepsilon\ell$ while the perturbation does not vanish.
Provided the conditional cross-moment condition
\begin{equation}
\label{eq:cross-moment}
    \mu_{2,a} \|m_{1,a}\|^2 \le K_2 \|m_{2,a}\|^2
\end{equation}
holds for a constant $K_2$ independent of $(p, \varepsilon, f)$ --- the
second-moment gradient carries at least the conditional cost scale of the
first, the analogue for gradients of $\mu_{2,a} \ge \mu_{1,a}^2$ --- the
relative contribution of the first term is
\begin{equation}
    \frac{8 f^3 \mu_{2,a} \|m_{1,a}\|^2 / (\varepsilon^2 N)}
         {\nu (1-\varepsilon)^2 f^2 \|m_{2,a}\|^2 / \varepsilon^2}
    \le \frac{8 K_2}{\nu (1-\varepsilon)^2} \cdot \frac{f}{N},
\end{equation}
of order $f/N \le 1/(fN)$: the perturbation does not raise the
$\mathcal{O}(1/(fN))$ upper bound. The plug-in estimator is biased,
$\mathbb{E}[\hat{g}_B - \hat{g}] =
-\frac{2}{\varepsilon}\operatorname{Cov}(\hat{\mu}_1, \hat{m}_1)$, but only
at relative order $1/N$ by Cauchy--Schwarz together with \cref{ass:snr} and
\cref{eq:cross-moment}, so the squared bias is subleading and relative
variance remains the fidelity measure. Since
$\operatorname{Var}(\hat{g}_B) \le
2\operatorname{Var}(\hat{g}) + 2\,\mathbb{E}[\|\hat{g}_B -
\hat{g}\|^2]$,
\begin{equation}
    \frac{\operatorname{Var}(\hat{g}_B)}{\left\|\nabla_\theta J_C(\theta)\right\|^2}
    \le 2\,\kappa^2[\nabla_\theta J_C(\theta)]
    + \frac{2\,\mathbb{E}\left[\left\|\hat{g}_B - \hat{g}\right\|^2\right]}{\left\|\nabla_\theta J_C(\theta)\right\|^2}
    = \mathcal{O}\left(\frac{1}{fN}\right):
\end{equation}
the relative variance of the practical gradient estimator obeys the same
$\mathcal{O}(1/(fN))$ upper bound. The perturbation
(\ref{eq:cantelli-grad-perturb}) is correlated with the sampling error of
$\hat{g}$, so it can in principle also lower the variance; the two-sided
$\Theta(1/(fN))$ statement of \cref{prop:ess} is for the unbiased
fixed-coefficient estimators $\hat{J}_C$ and $\hat{g}$, and the plug-in
estimators $\hat{B}$ and $\hat{g}_B$ satisfy the
$\mathcal{O}(1/(fN))$ upper bound in relative variance.

When the environmental cost signal is dense (e.g., energy consumption,
continuous proximity penalties), essentially every trajectory incurs some cost,
yielding $f \approx 1$ and $N_\mathrm{eff} \approx N$. Here, the relative
variances of both the constraint value and its gradient remain
bounded by $\mathcal{O}(1/N)$ with constants independent of the violation
threshold $\varepsilon$. This resolves the divergence of the baseline methods in
feasible, dense-cost regimes.

\subsubsection{Mean Constraint}

The mean constraint is already in canonical form,
\begin{equation}
    J_C(\theta) = \mu(\theta) - \ell \le 0,
\end{equation}
with Monte Carlo estimator $\hat{J}_C = \hat{\mu}_1 - \ell$ and $\hat{\mu}_1$
as in \cref{eq:moment-est}.

This is of the form \cref{eq:jc-decomp} with sampled statistic $x(\tau) =
C(\tau)$, $q = f$, and offset $b = -\ell$, so \cref{lem:value} applies. The sampled part is
$\mu(\theta) = f\mu_{1,a}$ (\ref{eq:jc-signal}) and the signal is
$J_C(\theta) = \mu(\theta) - \ell$, so the signal share
(\ref{eq:signal-share}) is $r = \mu(\theta) / |\mu(\theta) - \ell|$, and the
conditional ratio is $R_x = \mu_{2,a}/\mu_{1,a}^2$, bounded by
\cref{ass:snr}. Substituting into \cref{lem:value} with $q = f$, the relative
variance is exactly
\begin{equation}
    \kappa^2[J_C(\theta)] = r^2 \left[ \frac{1}{fN} \left( \frac{\mu_{2,a}}{\mu_{1,a}^2} \right) - \frac{1}{N} \right],
\end{equation}
bounded by
\begin{equation}
    \frac{\omega\, r^2}{fN} \le \kappa^2[J_C(\theta)] \le \frac{r^2 K}{fN},
\end{equation}
using $R_x \ge 1 + \omega$ (\cref{ass:slack}) for the lower bound, and hence, under \cref{ass:slack},
\begin{equation}
    \kappa^2[J_C(\theta)] = \Theta\left(\frac{1}{fN}\right).
\end{equation}
The structure mirrors the Cantelli constraint value (\ref{eq:cantelli-rv})
with the single moment in place of the linear combination $x(\tau)$, so no
analogue of \cref{ass:no-cancel} is needed. For the mean constraint,
\cref{ass:slack} reads: the policy uses a fixed fraction of the cost limit,
$\mu(\theta) = c\ell$ with $c \in (0, 1)$ fixed, giving $r = c/(1-c)$
exactly as for the indicator constraint.

For the policy gradient, $\nabla_\theta J_C(\theta) = \nabla_\theta
\mu(\theta) = m_1(\theta) = \mathbb{E}_{\tau \sim \pi_\theta}[s(\tau)C(\tau)]$,
with no sensitivities and no composition. The offset $b = -\ell$ is constant
in $\theta$, so \cref{lem:grad} applies with sampled statistic $x(\tau) =
C(\tau)$, $q = f$, conditional gradient moments $m_g = m_{1,a}$ and $S_g =
S_{1,a}$, and estimator $\hat{m}_1$: the relative variance is exactly
\begin{equation}
    \kappa^2[\nabla_\theta J_C(\theta)] = \frac{1}{fN} \left( \frac{S_{1,a}}{\|m_{1,a}\|^2} \right) - \frac{1}{N},
\end{equation}
bounded by
\begin{equation}
    \frac{\omega}{fN} \le \kappa^2[\nabla_\theta J_C(\theta)] \le \frac{K}{fN},
\end{equation}
and hence
\begin{equation}
    \kappa^2[\nabla_\theta J_C(\theta)] = \Theta\left(\frac{1}{fN}\right),
\end{equation}
which is \cref{lem:grad} with $R_g = S_{1,a}/\|m_{1,a}\|^2$ bounded above by
\cref{ass:snr} and below by \cref{ass:slack}. 

\section{Canary Algorithm}
\label[appendix]{app:algorithm}

\begin{algorithm}[H]
   \caption{Canary}
   \label{alg:var_cpo}
\begin{algorithmic}
   \STATE {\bfseries Input:} Initial policy $\pi_{\theta_0}$, value function estimates $V_{\phi}, V^{C}_{\psi}, V^{\tilde{C}}_{\chi}$, cost limit $\ell$, violation threshold $\varepsilon$, KL-divergence limit $\delta$.
   \STATE {\bfseries Initialize:} $\beta \leftarrow \frac{1}{\varepsilon} - 1$
   \FOR{$k = 0, 1, 2, ...$}
      \STATE \textbf{1. Data Collection:}
      \STATE Sample trajectories $\mathcal{D} = \{\tau_i\}$ using policy $\pi_{\theta_k}$.
      \STATE Compute augmented state $x(s_t, y_t, \gamma_c^t)$ (\ref{eq:aug-state}).

      \STATE \textbf{2. Advantage \& Return Estimation:}
      \STATE Value function estimates $V_{\phi}, V^{C}_{\psi}, V^{\tilde{C}}_{\chi}$ are used to estimate advantages $A_{\theta_k}, A_{\theta_k}^{C}, A_{\theta_k}^{\tilde{C}}$ respectively, employing the Generalized Advantage Estimation (GAE) \cite{gae}.
      \STATE Estimate expected cost returns $\mu(\theta_k)$ and augmented cost returns $\tilde{\mu}(\theta_k)$ using a Monte Carlo (MC) form with $\mathcal{D}$.

      \STATE \textbf{3. Constraint Construction:}
      \STATE Calculate policy-dependent bound $\rho(\theta_k)$.
      \STATE Set Cantelli constraint $(c_B, b_B)$ via (\ref{eq:taylor-c}), (\ref{eq:taylor-b}).
      \STATE Set mean constraint $(c_\mu, b_\mu)$ via (\ref{eq:taylor-cmu}).

      \STATE \textbf{4. Policy Update:}
      \STATE Compute objective gradient $g \leftarrow \nabla_\theta L(\theta)|_{\theta_k}$.
      \STATE Compute reachable minima $m_\mu$, $m_B$ (\ref{eq:reachability}) in closed form (\cref{app:solver}).
      \STATE Select the tier of (\ref{eq:recovery}) by reachability and solve it by active-set enumeration (\cref{app:solver}).

      \STATE \textbf{5. Critic Update:}
      \STATE Update $V_{\phi}$, $V^{C}_{\psi}$ and $V^{\tilde{C}}_{\chi}$ by minimizing MSE against return targets.
   \ENDFOR
\end{algorithmic}
\end{algorithm}

The value function estimates here are parameterized models; however, the Canary algorithm is agnostic to the estimate used.

\subsection{Closed-Form Solver}
\label[appendix]{app:solver}

In step coordinates $s = \theta - \theta_k$, the Tier a update
(\ref{eq:practical-update}) is
\begin{equation*}
    \max_s \; g^\top s \quad \text{s.t.} \;\; c_B + b_B^\top s \le 0, \;\;
    c_\mu + b_\mu^\top s \le 0, \;\; \tfrac{1}{2} s^\top H s \le \delta.
\end{equation*}
Its solution lies in the span of $H^{-1}g$, $H^{-1}b_B$, and $H^{-1}b_\mu$, so
after three conjugate-gradient solves the problem reduces to scalar algebra
on
\begin{equation*}
    q = g^\top H^{-1} g, \qquad
    r = \begin{pmatrix} b_B^\top H^{-1} g \\ b_\mu^\top H^{-1} g \end{pmatrix},
    \qquad
    S = \begin{pmatrix}
        b_B^\top H^{-1} b_B & b_B^\top H^{-1} b_\mu \\
        b_\mu^\top H^{-1} b_B & b_\mu^\top H^{-1} b_\mu
    \end{pmatrix}.
\end{equation*}

\paragraph{Reachability.} The minimum of a linear function $c' + b'^\top s$
over the trust region alone is $c' - \sqrt{2\delta\, b'^\top H^{-1} b'}$,
which gives $m_\mu$ directly from $(c_\mu, b_\mu)$. $m_B$ is the minimum of
$c_B + b_B^\top s$ over the trust region intersected with the half-space $c_\mu +
b_\mu^\top s \le 0$; this is a linear objective under one linear constraint
and the trust region, solved in closed form by the same one-active-constraint
case analysis as below.

\paragraph{Active-set enumeration.} With a linear objective the trust region
is active at any optimum. Writing $\nu = (\nu_1, \nu_2) \ge 0$ for the
multipliers of the linearized Cantelli and mean constraints and $\lambda > 0$ for the trust
region, stationarity gives $s^* = \frac{1}{\lambda} H^{-1}(g - \nu_1 b_B -
\nu_2 b_\mu)$, and the exact solution is found by enumerating which of the two
bounds is active:
\begin{itemize}[nosep]
    \item \emph{Neither}: $s^* = \sqrt{2\delta / q}\, H^{-1} g$, the natural
    gradient step.
    \item \emph{One}: the analytic CPO dual with the corresponding pair
    $(c', b')$ \cite{cpo}.
    \item \emph{Both}: tightness of the two bounds is linear in
    the multipliers, $\nu(\lambda) = S^{-1}(r + \lambda \mathbf{c})$ with
    $\mathbf{c} = (c_B, c_\mu)^\top$, and substituting into $\tfrac{1}{2}
    s^{*\top} H s^* = \delta$ the cross terms cancel, leaving
    \begin{equation*}
        \lambda^* = \sqrt{\frac{q - r^\top S^{-1} r}
        {2\delta - \mathbf{c}^\top S^{-1} \mathbf{c}}},
        \qquad \nu^* = S^{-1}(r + \lambda^* \mathbf{c}).
    \end{equation*}
    Here $q - r^\top S^{-1} r \ge 0$ is the squared $H^{-1}$-norm of the
    component of $g$ orthogonal to the two constraint gradients, and $2\delta
    > \mathbf{c}^\top S^{-1} \mathbf{c}$ is precisely the condition that both
    tightened constraints intersect inside the trust region (otherwise the
    case is skipped).
\end{itemize}
A candidate is retained only if it is primal feasible with non-negative
multipliers; among retained candidates the one with the largest objective
value is the exact solution. If $S$ is near-singular ($b_B$ and $b_\mu$ close
to parallel), the mean constraint is directionally redundant and is dropped.
The recovery tiers of (\ref{eq:recovery}) belong to the same family: Tier b
minimizes the linear objective $b_B^\top s$ under the mean constraint and the
trust region (the one-active-constraint case), and Tier c is the
unconstrained trust-region minimizer $s^* = -\sqrt{2\delta / S_{22}}\,
H^{-1} b_\mu$.

\section{Experimental Details}
\label[appendix]{app:hyperparams}

\subsection{Canary Hyperparameters}

\begin{table}[H]
\centering
\label{tab:hyperparams}
\begin{tabular}{llc}
\toprule
\textbf{Hyperparameter} & \textbf{Value} \\
Hidden Layers & 2 \\
Hidden Units  & 256 \\
Activation    & tanh \\
Learning Rate & $3 \times 10^{-4}$ \\
Optimizer     & Adam \\
Grad Norm Clip     & 0.5 \\
GAE $\lambda$ & 0.95 \\
Reward Discount $\gamma$ & 0.99 \\
Cost Discount $\gamma_c$ & 1.0 \\
Trust Region $\delta$ & 0.01 \\
Critic epochs & 80 \\
Base Seed & 0 \\
\bottomrule
\end{tabular}
\caption{Canary Hyperparameter Settings}
\end{table}

To assess performance, trajectories are drawn from distributions that depend
on random number generator (RNG) seeds, so for statistical trustworthiness we
measure results over policies trained on multiple seeds. The base seed
initializes a single PRNG key, from which the parallel multi-seed runs
reported in \cref{sec:results} each receive an independent key by splitting.

\subsection{Baseline Hyperparameters and Tuning Budget}
\label[appendix]{app:baselines}

All methods share the same actor-critic trunk (two hidden layers of 256 units,
$\tanh$), optimizer (Adam at $3\times10^{-4}$, linearly annealed to zero),
gradient-norm clip ($0.5$), reward discount ($\gamma = 0.99$), cost discount
($\gamma_c = 1.0$), and GAE parameter ($\lambda = 0.95$), and are run for 30
seeds. CVaR-CPO additionally carries a distributional cost critic (two hidden
layers of 64 units, $\tanh$) that the other methods do not require.
Environment-level settings, shared across methods, are given in
\cref{tab:env-settings}.

\begin{table}[H]
\centering
\begin{tabular}{lccc}
\toprule
\textbf{Setting} & \textbf{EcoAnt} & \textbf{PointGoal} & \textbf{IcyLake} \\
\midrule
Cost limit $\ell$        & $150$ & $100$ & $5.5$ \\
Violation threshold $\varepsilon$ & $0.02$ & $0.02$ & $0.02$ \\
Parallel environments    & $20$  & $20$  & $5$ \\
Rollout length           & $100$ & $125$ & $200$ \\
Total environment steps  & $10$M & $2$M  & $1$M \\
\bottomrule
\end{tabular}
\caption{Environment-level settings, identical across all methods.}
\label{tab:env-settings}
\end{table}

The trust-region methods (Canary, CPO, CVaR-CPO) share CPO's update machinery
and its settings: trust region $\delta = 0.01$, backtracking coefficient $0.8$
with at most $10$ backtracking iterations, conjugate-gradient damping $0.1$, and
no entropy bonus. They differ only in how the constraint pair $(c, b)$ is
constructed. CPO is applied to the breach-indicator variant of each environment
with cost limit $\varepsilon$ and an undiscounted indicator return, so that it
constrains $\mathbb{P}(C(\tau) > \ell) \le \varepsilon$ directly
(\cref{sec:indicator}). Per-method settings for the first-order methods are
given in \cref{tab:baseline-hyperparams}.

\begin{table}[H]
\centering
\begin{tabular}{llccc}
\toprule
\textbf{Method} & \textbf{Hyperparameter} & \textbf{EcoAnt} & \textbf{PointGoal} & \textbf{IcyLake} \\
\midrule
\multirow{4}{*}{PPO}
  & Ratio clip        & $0.2$    & $0.2$    & $0.2$ \\
  & Epochs per update & $4$      & $4$      & $4$ \\
  & Minibatch size    & $500$    & $500$    & $40$ \\
  & Entropy coeff.    & $10^{-5}$ & $0.0075$ & $0.01$ \\
\midrule
\multirow{6}{*}{CPPO}
  & $\lambda$ init      & $50$     & $50$     & $1$ \\
  & $\lambda$ step size & $0.03$   & $0.01$   & $0.03$ \\
  & PID gain $k_p$      & $600$    & $60$     & $15$ \\
  & CVaR clip ratio     & $1000$   & $300$    & $300$ \\
  & Epochs per update   & $10$     & $4$      & $4$ \\
  & Entropy coeff.      & $0.0075$ & $0.0075$ & $0.01$ \\
\bottomrule
\end{tabular}
\caption{Per-method hyperparameters for the first-order baselines. Ratio clip
and value-loss coefficients follow standard PPO practice.}
\label{tab:baseline-hyperparams}
\end{table}

\paragraph{Tuning budget.} CPPO is the only method whose hyperparameters were
selected by search, because its Lagrangian gains have no established default in
this setting. For each environment we swept a $12$-point grid: on EcoAnt and
PointGoal, $k_p \in \{60, 100, 150\}$ crossed with $\lambda$ step size $\in
\{0.01, 0.03\}$ and $\lambda$ initialization $\in \{1, 50\}$ over $2$ seeds per
configuration, extended along the gain axis (up to $k_p = 2400$ with $\lambda$
cap $4000$ on EcoAnt, $k_p = 600$ on PointGoal); on IcyLake, $k_p \in \{0, 15,
60, 150\}$ crossed with $\lambda$ step size $\in \{10^{-3}, 3\times10^{-2},
3\times10^{-1}\}$ over $5$ seeds per configuration. Configurations were ranked
by steady-state adherence, leading candidates were re-run at $30$ seeds before
selection, and reward broke adherence ties. Canary, CPO and CVaR-CPO received
no search: all three inherit CPO's default trust region $\delta = 0.01$, and
PPO uses standard settings. The tuning budget therefore favors the
constrained baseline rather than the proposed method.

The sweeps also show that CPPO's failures in \cref{tab:safety} are not
tuning artifacts. On IcyLake the sweep is flat: all $12$ configurations
converge to $\hat{P} \in [0.32, 0.37]$ at $\hat{J} \approx 1$, and driving
$\lambda$ to its cap changes neither, so the CVaR penalty supplies no usable
adherence gradient at $\varepsilon = 0.02$ and CPPO behaves as unconstrained
PPO. On EcoAnt, adherence improves monotonically with gain up to $k_p = 600$,
bottoming out at roughly twice the threshold, then reverses as larger gains
($k_p \ge 1200$) destabilize the Lagrangian and destroy reward; no
configuration in the $40\times$ gain range reaches $\varepsilon$.

\subsection{Violation-Probability Jitter}
\label[appendix]{app:jitter}

Per seed, we partition the second half of training into 12 equal
non-overlapping blocks and compute block violation probabilities $p_b$
(exceedances over episodes) with episode counts $n_b$. A frozen policy would
still show block-to-block variance from finite sampling; we subtract this
binomial counting floor, $\mathrm{mean}_b[\bar{p}(1-\bar{p})/n_b]$ with
$\bar{p}$ the seed's pooled second-half rate, from the observed block variance
$\mathrm{Var}_b[p_b]$ and report the square root of the excess: the standard
deviation with which the converged policy's violation probability wanders.
Values are seed means with standard errors. The floor subtraction assumes
episodes are independent given the policy, which holds in these environments.


\end{document}